\documentclass[11pt]{article}
\pdfminorversion=4

\usepackage{etoolbox}
\newbool{conf}
\setbool{conf}{false}

\newbool{report}
\setbool{report}{true}

\ifbool{report}{
	\usepackage{hslTR}
}{
	
	\setcounter{secnumdepth}{3}
}

%
%
%
%
%
%
%
%
%
%
%
%
\usepackage{graphics}
\usepackage{graphicx}
\usepackage{amssymb}
\usepackage{verbatim}     
\usepackage{amsxtra}
\usepackage{epsfig}
\usepackage{subfigure}
\usepackage{amsmath}
\usepackage{latexsym}
\usepackage{ifthen}
\usepackage{psfrag}
\usepackage{subfigure}
\usepackage{color}
\usepackage{float}
\usepackage{tikz}
\usepackage{mathtools}
\usepackage{etoolbox}
\usepackage{algorithm}
\usepackage{algpseudocode}

\usepackage{enumitem}
\usepackage{import}
\usepackage{pdfpages}
\usepackage{amsthm}

\usepackage{moreverb,url}
\usepackage{natbib}
\usepackage[colorlinks,bookmarksopen,bookmarksnumbered,citecolor=red,urlcolor=red]{hyperref}

\usepackage{./macros/rgsEnvironments}
\usepackage{./macros/rgsMacros}  

\usepackage{endnotes}
\let\footnote=\endnote

\DeclareMathOperator{\interior}{int}
\DeclareMathOperator*{\argmin}{arg\,min}
\newcommand{\nnumbers}[1][]{\mathbb{N}}
\newcommand{\myreals}[1][]{\mathbb{R}^{#1}}
\newcommand{\mypreals}[1][]{\mathbb{R}_{\geq 0}^{#1}}
\newcommand{\myinputt}{\upsilon}
\newcommand{\mystatet}{\phi}
\newcommand{\hs}{\mathcal{H}}
\newcommand{\il}{\mathcal{U}}

\newcommand{\pn}[1]{{#1}}
\newcommand{\pnc}[1]{#1}
\newcommand{\sj}[1]{{#1}}
\newcommand{\stree}{\mathcal{T}}
\newcommand{\fw}[1]{{#1}^{\mathrm{fw}}}
\newcommand{\bw}[1]{{#1}^{\mathrm{bw}}}
\newcommand{\bwr}[1]{{#1}^{\mathrm{bw'}}}
\newcommand{\recon}[1]{{#1}^{\mathrm{r}}}
\newcommand{\simu}[1]{{#1}^{\mathrm{r}}}

\newtheorem{problem}{Problem}
%
%
%
%
%

\newcommand\BibTeX{{\rmfamily B\kern-.05em \textsc{i\kern-.025em b}\kern-.08em
		T\kern-.1667em\lower.7ex\hbox{E}\kern-.125emX}}

\newcommand{%
	\def\svgwidth{1\columnwidth}
	\import{./figures/}{.pdf_tex}
}[2][1]{%
	\def\svgwidth{#1\columnwidth}
	\import{./figures/}{#2.pdf_tex}
}




\newlist{steps}{enumerate}{1}
\setlist[steps, 1]{leftmargin=45pt, label = \textbf{Step \arabic*}:}

\newcommand{\EndRemark}{$\hfill \blacktriangle$}


\begin{document}
	\ifbool{report}{
	\ititle{HyRRT-Connect:  Bidirectional Motion Planning for Hybrid Dynamical Systems}
	\iauthor{
		Nan Wang \\
		{\normalsize nanwang@ucsc.edu} \\
		Ricardo G. Sanfelice \\
		{\normalsize ricardo@ucsc.edu}}
	\idate{\today{}}
	\iyear{2023}
	\irefnr{04}
	\makeititle
	\begin{abstract}
		This paper proposes a bidirectional rapidly-exploring random trees (RRT) algorithm to solve the motion planning problem for hybrid systems. The proposed algorithm, called HyRRT-Connect, propagates in both forward and backward directions in hybrid time until an overlap between the forward and backward propagation results is detected. Then, HyRRT-Connect constructs a motion plan through the reversal and concatenation of functions defined on hybrid time domains, ensuring that the motion plan satisfies the given hybrid dynamics. To address the potential discontinuity along the flow caused by tolerating some distance between the forward and backward partial motion plans, we reconstruct the backward partial motion plan by a forward-in-hybrid-time simulation from the final state of the forward partial motion plan. effectively eliminating the discontinuity. The proposed algorithm is applied to an actuated bouncing ball system and a walking robot example to highlight its computational improvement.
	\end{abstract}
	}{
		\runninghead{Wang and Sanfelice}
		\title{HyRRT-Connect:  Bidirectional Motion Planning for Hybrid Dynamical Systems}
		\author{Nan Wang and Ricardo G. Sanfelice\affilnum{1}}
		
		\affiliation{\affilnum{1}Hybrid Systems Laboratory, Department of Electrical and Computer Engineering, University of California, Santa Cruz, USA}
		
		\corrauth{\affilnum{1}Nan Wang, Department of Electrical and Computer Engineering, University of California, Santa Cruz,
			USA.}
		\email{nanwang@ucsc.edu}
		
		\begin{abstract}
This paper proposes a bidirectional rapidly-exploring random trees (RRT) algorithm to solve the motion planning problem for hybrid systems. The proposed algorithm, called HyRRT-Connect, propagates in both forward and backward directions in hybrid time until an overlap between the forward and backward propagation results is detected. Then, HyRRT-Connect constructs a motion plan through the reversal and concatenation of functions defined on hybrid time domains, ensuring that the motion plan satisfies the given hybrid dynamics. To address the potential discontinuity along the flow caused by tolerating some distance between the forward and backward partial motion plans, we reconstruct the backward partial motion plan by a forward-in-hybrid-time simulation from the final state of the forward partial motion plan. effectively eliminating the discontinuity. The proposed algorithm is applied to an actuated bouncing ball system and a walking robot example to highlight its computational improvement.
\end{abstract}
		
		\keywords{Hybrid Systems, Motion Planning, RRT-Connect Algorithm}
		\maketitle
	}
	



%
%

\section{Introduction}
\subsection{Background}
\sj{In the context of dynamical systems,} motion planning consists of finding a state trajectory and corresponding inputs that connect initial and final state sets, satisfying the system dynamics and specific safety requirements.  Motion planning for purely continuous-time dynamical systems and purely discrete-time dynamical systems has been extensively explored in existing literature; see, e.g., \cite{lavalle2006planning}. \sj{Some examples} include graph search algorithms \cite{wilfong1988motion}, artificial potential field method \cite{khatib1986real} and sampling-based algorithms. The sampling-based algorithms have drawn much attention because of their fast exploration speed for high-dimensional problems \sj{as well as their} theoretical guarantees; \pn{especially}, probabilistic completeness, which means that the probability of failing to find a motion plan converges to zero, as the number of \sj{iterations} approaches infinity. Compared with other sampling-based algorithms, such as probabilistic roadmap algorithm, the rapidly-exploring random tree (RRT) algorithm \cite{lavalle2001randomized} is perhaps most successful to solve motion planning problems because it does not require a steering function to solve a two point boundary value problem. 

While motion planning algorithms have been extensively applied to purely continuous-time and purely discrete-time systems, comparatively less effort has been devoted into motion planning for systems with combined continuous and discrete behavior. 
For a special class of hybrid systems, like systems with explicit discrete states or logic variables, hybrid automata and switching systems, some RRT-type motion planning algorithms, such as hybrid RRT \cite{branicky2003rrts} and R3T \cite{wu2020r3t}, respectively, have been developed. These methodologies have also found application in falsification of hybrid automata, leading to the Monte-Carlo sampling algorithm in \cite{nghiem2010monte} and the Breach toolbox in \cite{donze2010breach}. However, this class of hybrid systems lacks the generality needed to cope with hybrid system arising in robotics applications, especially those with state variables capable of both continuous evolution and discrete behaviors. Moreover, the formal study of solution properties in such systems, such as completeness, remains less developed compared to hybrid equation models. 
In \pn{recent work,} \cite{wang2025motion} formulates a motion planning problem for hybrid systems using hybrid equations, as in \cite{sanfelice2021hybrid}. This formulation presents a general framework that encompasses a \pn{broad class} of hybrid systems. For example, hybrid automata are effective for modeling mode-switching behaviors but impose rigid transition rules that restrict their ability to accommodate Zeno behaviors, set-valued jump maps, and differential inclusions. The more general Hybrid Equation framework naturally handles these behaviors with well-posed mathematical foundations. In the same paper, a probabilistically complete RRT algorithm, referred to as HyRRT, specifically designed to address the motion planning problem for such broad class of hybrid systems \pn{is introduced}. Building on this work, \cite{wang2023hysst} \pn{introduces HySST}, an asymptotically near-optimal motion planning algorithm for hybrid systems that builds from the Stable Sparse RRT (SST) algorithm introduced in \cite{li2016asymptotically}. \sj{HySST stands as the first optimal motion planning algorithm for hybrid systems and its computational efficiency is notable, achieved through the sparsification of vertices during the search process.}

It is significantly challenging for \pn{the majority of} motion planning algorithms to maintain efficient computation performance, especially in solving high-dimensional problems. Although RRT-type algorithms have demonstrated notable efficiency in rapidly searching for solutions to high-dimensional problems compared to other algorithm types, there remains room for enhancing their computational performance.
To improve the computational performance, a modular motion planning system for purely continuous-time systems, named FaSTrack, is designed in \cite{herbert2017fastrack} to simultaneously plan and track a trajectory. FaSTrack accelerates the planning process by only considering a low-dimensional model of the system dynamics. The authors of~\cite{kuffner2000rrt} introduce RRT-Connect, an algorithm that propagates both in forward direction and backward direction, leading to a notable improvement in computational performance. 
\subsection{Contribution}
Motivated by the need of computationally efficient motion planning algorithms for hybrid systems, we design a bidirectional RRT-type algorithm for hybrid dynamical systems, called HyRRT-Connect. HyRRT-Connect incrementally constructs two search trees, one rooted in the initial state set and constructed forward in hybrid time, and the other rooted in the final state set and constructed backward in hybrid time. 
To facilitate backward propagation, we formally define a backward-in-time hybrid system that captures the dynamics of the given hybrid system when solutions evolve in backward hybrid time. In constructing each search tree, at first, HyRRT-Connect draws samples from the state space. Then, it selects the vertex such that the state associated with this vertex has minimal distance to the sample. Next, HyRRT-Connect simulates a state trajectory from the selected vertex by applying a randomly selected input. When HyRRT-Connect detects an overlap between a path in the forward search tree and a path in the backward search tree, it initiates the construction of a motion plan.  This construction involves initially reversing the trajectory associated with the path in the backward search tree, followed by concatenating this reversed trajectory with the trajectory associated with the path in the forward search tree. In this paper, we formally define the reversal and concatenation operations and thoroughly validate that the results of both operations satisfy the given hybrid dynamics.
In practice, HyRRT-Connect always tolerates some distance between states in the forward and backward search trees, due to the randomness in state and input selection. However, this tolerance can result in discontinuities along the flow of the constructed motion plan. To address this issue,  the trajectory associated with the backward path is reconstructed through simulation from the final state of the forward path while applying the reversed input from the backward path. By ensuring the same hybrid time domain as the reversal of the backward path, we guarantee that as the tolerance decreases, the reconstructed motion plan's endpoint converges to the final state set.
 
\subsection{Organization}
The remainder of the paper is structured as follows. Section~\ref{section:preliminaries} presents notation and preliminaries. Section~\ref{section:problemstatement} presents the problem statement and \pn{introduces} an example. Section \ref{section:hyrrtconnect} presents HyRRT-Connect algorithm. Section \ref{section:solutionchecking} presents the reconstruction process. Section \ref{section:simulation}  \pn{illustrates} HyRRT-Connect \pn{in an example}. Section \ref{section:parallelcomputation} discusses the parallelization and property of HyRRT-Connect Algorithm.
\section{Notation and Preliminaries}\label{section:preliminaries}
\subsection{Notation}
The real numbers are denoted as $\mathbb{R}$, its nonnegative subset is denoted as $\mathbb{R}_{\geq 0}$\pn{,} and its nonpositive subset is denoted as $\mathbb{R}_{\leq 0}$. The set of nonnegative integers is denoted as $\mathbb{N}$. The notation $\interior I$ denotes the interior of the interval $I$. The notation $\overline{S}$ denotes the closure of the set $S$. The notation $\partial S$ denotes the boundary of the set $S$. Given sets $P\subset\mathbb{R}^n$ and $Q\subset\mathbb{R}^n$, the Minkowski sum of $P$ and $Q$, denoted as $P + Q$, is the set $\{p + q: p\in P, q\in Q\}$. 
The notation $|\cdot|$ denotes the Euclidean norm.  The notation $\rge f$ denotes the range of the function $f$.
	Given a point $x\in \mathbb{R}^{n}$ and a subset $S\subset \mathbb{R}^{n}$, the distance between $x$ and $S$ is denoted $|x|_{S} := \inf_{s\in S} |x - s|$. The notation $\mathbb{B}$ denotes the closed unit ball of appropriate dimension in the Euclidean norm.


\subsection{Preliminaries}
A hybrid system $\mathcal{H}$ with inputs is modeled as  \cite{sanfelice2021hybrid}
\begin{equation}
\mathcal{H}: \left\{              
\begin{aligned}               
\dot{x} & = f(x, u)     &(x, u)\in C\\                
x^{+} & =  g(x, u)      &(x, u)\in D\\                
\end{aligned}   \right. 
\label{model:generalhybridsystem}
\end{equation}
where $x\in \mathbb{R}^n$ represents the state, $u\in \mathbb{R}^m$ represents the input, $C\subset \mathbb{R}^{n}\times\mathbb{R}^{m}$ represents the flow set, $f: \mathbb{R}^{n}\times\mathbb{R}^{m} \to \mathbb{R}^{n}$ represents the flow map, $D\subset \mathbb{R}^{n}\times\mathbb{R}^{m}$ represents the jump set, and $g:\mathbb{R}^{n}\times\mathbb{R}^{m} \to \mathbb{R}^{n}$ represents the jump map. The continuous evolution of $x$ is captured by the flow map $f$. The discrete evolution of $x$ is captured by the jump map $g$. The flow set $C$ collects the points where the state can evolve continuously. The jump set $D$ collects the points where jumps can occur.
Given a flow set $C$, the set $U_{C} := \{u\in \mathbb{R}^{m}: \exists x\in \mathbb{R}^{n}\text{ such that } (x, u)\in C\}$ includes all possible input values that can be applied during flows. Similarly, given a jump set $D$, the set $U_{D} := \{u\in \mathbb{R}^ {m}: \exists x\in \mathbb{R}^{n}\text{ such that } (x, u)\in D\}$ includes all possible input values that can be applied at jumps. These sets satisfy $C\subset \mathbb{R}^{n}\times U_{C}$ and $D\subset \mathbb{R}^{n}\times U_{D}$. Given a set $K\subset \mathbb{R}^{n}\times U_{\star}$, where $\star$ is either $C$ or $D$, we define
	$
	\Pi_{\star}(K) := \{x: \exists u\in U_{\star} \text{ such that } (x, u)\in K\}
	$
	as the projection of $K$ onto $\mathbb{R}^{n}$, and define 
	\begin{equation}
		\label{equation:Cprime}
		C' := \Pi_{C}(C)
		\end{equation} and 
		\begin{equation}
		\label{equation:Dprime}
		D' := \Pi_{D}(D).
		\end{equation}
In addition to ordinary time $t\in \mathbb{R}_{\geq 0}$, we employ $j\in \mathbb{N}$ to denote the number of jumps of the evolution of $x$ and $u$ for $\mathcal{H}$ in (\ref{model:generalhybridsystem}), leading to hybrid time $(t, j)$ for the parameterization of its solutions and inputs. 
The domain of a solution to $\mathcal{H}$, hybrid input/arc, solution pair to $\mathcal{H}$, and concatenation operation are defined as follows\pn{; see \cite{sanfelice2021hybrid} for more details}. 
\begin{definition}[(Hybrid time domain)]\label{definition:hybridtimedomain}
	A subset $E\subset \mypreals\times\nnumbers$ is a compact hybrid time domain if
	$$
		E = \bigcup_{j = 0}^{J - 1} ([t_{j}, t_{j + 1}]\times \{j\})
	$$
	for some finite sequence of times $0=t_{0}\leq t_{1}\leq t_{2}\leq \ldots \leq t_{J}$.
	
	A set $E \subset \mypreals\times\nnumbers$ is a hybrid time domain if it is the union of a nondecreasing sequence of compact hybrid time domain, namely, $E$ is the union of compact hybrid time domains $E_j$ with the property that $E_0 \subset E_1 \subset E_2 \subset \ldots \subset E_j \ldots$.
\end{definition}
The input to a hybrid system is defined as follows.
\begin{definition}[(Hybrid input)]\label{definition:hybridinput}
	A function $\myinputt: \dom \myinputt \to \myreals[m]$ is a hybrid input if $\dom \myinputt$ is a hybrid time domain and if, for each $j\in \nnumbers$, $t\mapsto \myinputt(t,j)$ is Lebesgue measurable and locally essentially bounded on the interval $I^{j}_{\myinputt}:=\{t:(t, j)\in \dom \myinputt\}$. 
\end{definition}
A hybrid arc  describes the state trajectory of the system as defined next.
\begin{definition}[(Hybrid arc)]\label{definition:hybridarc}
	A function $\mystatet: \dom \mystatet \to \myreals[n]$ is a hybrid arc if $\dom \mystatet$ is a hybrid time domain and if, for each $j\in \nnumbers$, $t\mapsto \mystatet(t,j)$ is locally absolutely continuous on the interval $I^{j}_{\mystatet}:=\{t:(t, j)\in \dom \mystatet\}$. 
\end{definition}
The definition of solution pair to a hybrid system is given as follows.
\begin{definition}[(Solution pair to a hybrid system)]
	\label{definition:solution}
	 A hybrid input $\myinputt$ and a hybrid arc $\mystatet$ define a solution pair $(\mystatet, \myinputt)$ to the hybrid system $\hs$ as in (\ref{model:generalhybridsystem}) if
	\begin{enumerate}[label=\arabic*)]
		\item $(\mystatet(0,0), \myinputt(0,0))\in \overline{C} \cup D$ and $\dom \mystatet = \dom \myinputt (= \dom (\mystatet, \myinputt))$.
		\item For each $j\in \mathbb{N}$ such that $I^{j}_{\mystatet}$ has nonempty interior $\interior(I^{j}_{\mystatet})$, $(\mystatet, \myinputt)$ satisfies
		$
		(\mystatet(t, j),\myinputt(t, j))\in C
		$ for all $t\in \interior I^{j}_{\mystatet}$, and 
		$
		\frac{\text{d} \mystatet}{\text{d} t}(t,j) = f(\mystatet(t,j), \myinputt(t,j))
		$ for almost all $t\in I^{j}_{\mystatet}$.
		\item For all $(t,j)\in \dom (\mystatet, \myinputt)$ such that $(t,j + 1)\in \dom (\mystatet, \myinputt)$, 
		\begin{equation}
			\begin{aligned}
			(\mystatet(t, j), \myinputt(t, j))&\in D\\
			\mystatet(t,j+ 1) &= g(\mystatet(t,j), \myinputt(t, j)).
			\end{aligned}
			\end{equation}
	\end{enumerate}
\end{definition}
The concatenation operation of solution pairs in \cite[Definition 5.1]{wang2025motion}  is given next.
\begin{definition}[(Concatenation operation)]
	\label{definition:concatenation}
	Given two functions $\phi_{1}: \dom \phi_{1} \to \mathbb{R}^{n}$ and $\phi_{2}:\dom \phi_{2} \to \mathbb{R}^{n}$, where $\dom \phi_{1}$ and $\dom \phi_{2}$ are hybrid time domains, $\phi_{2}$ can be concatenated to $\phi_{1}$ if $ \phi_{1}$ is compact and $\phi: \dom \phi \to \mathbb{R}^n$ is the concatenation of $\phi_{2}$ to $\phi_{1}$, denoted $\phi = \phi_{1}|\phi_{2}$, namely,
	\begin{enumerate}[label=\arabic*)]
		\item $\dom \phi = \dom \phi_{1} \cup (\dom \phi_{2} + \{(T, J)\}) $, where $(T, J) = \max \dom \phi_{1}$ and the plus sign denotes Minkowski addition;
		\item $\phi(t, j) = \phi_{1}(t, j)$ for all $(t, j)\in \dom \phi_{1}\backslash \{(T, J)\}$ and $\phi(t, j) = \phi_{2}(t - T, j - J)$ for all $(t, j)\in \dom \phi_{2} + \{(T, J)\}$.
	\end{enumerate}
\end{definition}
\section{Problem Statement and Applications}
The motion planning problem for hybrid systems studied in this paper is given in \cite[Problem 1]{wang2025motion} as follows, which, for convenience, is reproduced next.
\begin{problem}[Motion planning problem for hybrid systems]\label{problem:motionplanning}
	Given a hybrid system $\mathcal{H}$ as in (\ref{model:generalhybridsystem}) with input $u\in \myreals[m]$ and state $x\in \myreals[n]$, initial state set $X_{0}\subset\myreals[n]$,  final state set $X_{f}\subset\myreals[n]$, and unsafe set \endnote{The unsafe set $X_{u}$ can be used to constrain both the state and the input.} $X_{u}\subset\myreals[n]\times \myreals[m]$, find $(\mystatet, \myinputt): \dom (\mystatet, \myinputt)\to \mathbb{R}^{n}\times \mathbb{R}^{m}$, namely, \emph{a motion plan}, such that, for some $(T, J)\in \dom (\mystatet, \myinputt)$, the following hold:
	\begin{enumerate}[label=\arabic*)]
		\item $\mystatet(0,0) \in X_{0}$, namely, the initial state of the solution belongs to the given initial state set $X_{0}$;
		\item $(\mystatet, \myinputt)$ is a solution pair to $\mathcal{H}$ as defined in Definition~\ref{definition:solution};
		\item $(T,J)$ is such that $\mystatet(T,J)\in X_{f}$, namely, the solution belongs to the final state set at hybrid time $(T, J)$;
		\item $(\mystatet(t,j), \myinputt(t, j))\notin X_{u}$ for each $(t,j)\in \dom (\mystatet, \myinputt)$ such that $t + j \leq T+ J$, namely, the solution pair does not intersect with the unsafe set before its state trajectory reaches the final state set.
	\end{enumerate}
Therefore, given sets $X_{0}$, $X_{f}$\pn{,} and $X_{u}$, and a hybrid system $\mathcal{H}$ as in (\ref{model:generalhybridsystem}) with data $(C, f, D, g)$, a motion planning problem $\mathcal{P}$ is formulated as
$
	\mathcal{P} = (X_{0}, X_{f}, X_{u}, (C, f, D, g)).
$
\end{problem}
\begin{figure}[htbp]
		\centering
	\def\svgwidth{0.9\columnwidth}
\begingroup%
  \makeatletter%
  \providecommand\color[2][]{%
    \errmessage{(Inkscape) Color is used for the text in Inkscape, but the package 'color.sty' is not loaded}%
    \renewcommand\color[2][]{}%
  }%
  \providecommand\transparent[1]{%
    \errmessage{(Inkscape) Transparency is used (non-zero) for the text in Inkscape, but the package 'transparent.sty' is not loaded}%
    \renewcommand\transparent[1]{}%
  }%
  \providecommand\rotatebox[2]{#2}%
  \newcommand*\fsize{\dimexpr\f@size pt\relax}%
  \newcommand*\lineheight[1]{\fontsize{\fsize}{#1\fsize}\selectfont}%
  \ifx\svgwidth\undefined%
    \setlength{\unitlength}{460.76175145bp}%
    \ifx\svgscale\undefined%
      \relax%
    \else%
      \setlength{\unitlength}{\unitlength * \real{\svgscale}}%
    \fi%
  \else%
    \setlength{\unitlength}{\svgwidth}%
  \fi%
  \global\let\svgwidth\undefined%
  \global\let\svgscale\undefined%
  \makeatother%
  \begin{picture}(1,0.6529201)%
    \lineheight{1}%
    \setlength\tabcolsep{0pt}%
    \put(0,0){\includegraphics[width=\unitlength,page=1]{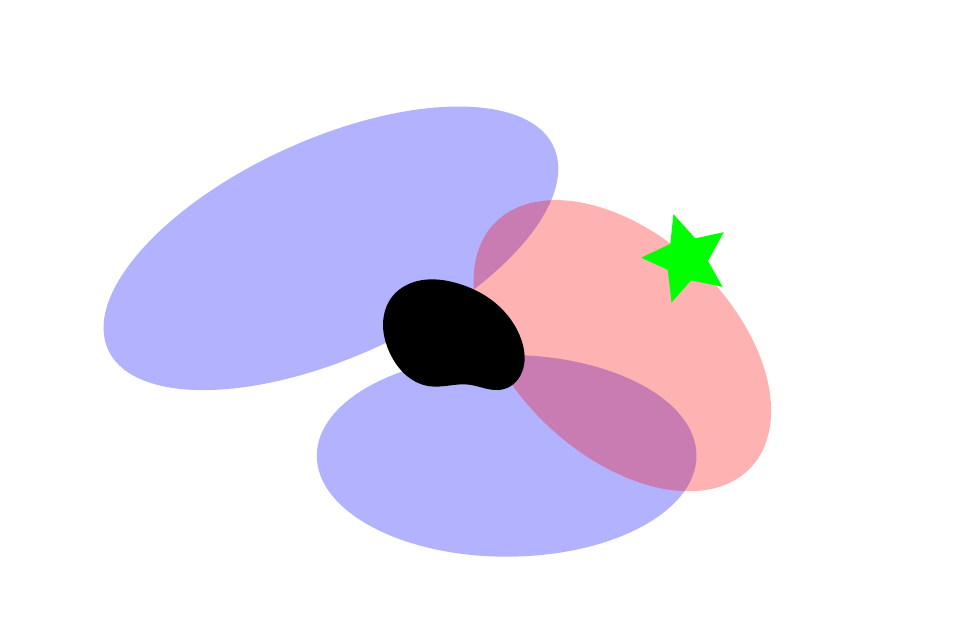}}%
    \put(0.05417655,0.20361801){\makebox(0,0)[lt]{\lineheight{1.25}\smash{\begin{tabular}[t]{l}initial state\end{tabular}}}}%
    \put(0.8141166,0.36989962){\makebox(0,0)[lt]{\lineheight{1.25}\smash{\begin{tabular}[t]{l}final state\end{tabular}}}}%
    \put(0.66638215,0.49017024){\makebox(0,0)[lt]{\lineheight{1.25}\smash{\begin{tabular}[t]{l}unsafe set\end{tabular}}}}%
    \put(0,0){\includegraphics[width=\unitlength,page=2]{hybridmotionplanning.pdf}}%
    \put(0.20653165,0.11709806){\makebox(0,0)[lt]{\lineheight{1.25}\smash{\begin{tabular}[t]{l}flow set\end{tabular}}}}%
    \put(0.82743469,0.26413127){\makebox(0,0)[lt]{\lineheight{1.25}\smash{\begin{tabular}[t]{l}jump set\end{tabular}}}}%
    \put(0,0){\includegraphics[width=\unitlength,page=3]{hybridmotionplanning.pdf}}%
  \end{picture}%
\endgroup%

		\caption{A motion plan to Problem \ref{problem:motionplanning}. The green square denote the initial state set. The green star denotes the final state set. The blue region denotes the flow set. The red region denotes the jump set. The solid blue lines denote flow and the dotted red lines denote jumps in the motion plan.}
\end{figure}
We illustrate this problem in two examples.
\begin{example}[(Actuated bouncing ball system)]\label{example:bouncingball}
	Consider a ball bouncing on a fixed horizontal surface as is shown in Figure \ref{fig:bouncingball}. The surface is located at the origin and, through control actions, is capable of affecting the velocity of the ball after the impact.  The dynamics of the ball while in the air are given by
	\begin{equation}
		\label{model:bouncingballflow}
		\dot{x} = \left[ \begin{matrix}
		x_{2} \\
		-\gamma
		\end{matrix}\right] =: f(x, u)\qquad (x, u)\in C
		\end{equation}
	where $x :=(x_{1}, x_{2})\in \mathbb{R}^2$, \sj{the height of the ball is denoted by $x_{1}$, and the velocity of the ball is denoted by $x_{2}$.} The gravity constant is denoted by $\gamma$. 
	\begin{figure}[htbp] 
			\centering
			\parbox[h]{0.49\columnwidth}{
				\centering
				\subfigure[The actuated bouncing ball system\label{fig:bouncingball} ]{%
	\def\svgwidth{0.49\columnwidth}
\begingroup%
  \makeatletter%
  \providecommand\color[2][]{%
    \errmessage{(Inkscape) Color is used for the text in Inkscape, but the package 'color.sty' is not loaded}%
    \renewcommand\color[2][]{}%
  }%
  \providecommand\transparent[1]{%
    \errmessage{(Inkscape) Transparency is used (non-zero) for the text in Inkscape, but the package 'transparent.sty' is not loaded}%
    \renewcommand\transparent[1]{}%
  }%
  \providecommand\rotatebox[2]{#2}%
  \newcommand*\fsize{\dimexpr\f@size pt\relax}%
  \newcommand*\lineheight[1]{\fontsize{\fsize}{#1\fsize}\selectfont}%
  \ifx\svgwidth\undefined%
    \setlength{\unitlength}{155.66442535bp}%
    \ifx\svgscale\undefined%
      \relax%
    \else%
      \setlength{\unitlength}{\unitlength * \real{\svgscale}}%
    \fi%
  \else%
    \setlength{\unitlength}{\svgwidth}%
  \fi%
  \global\let\svgwidth\undefined%
  \global\let\svgscale\undefined%
  \makeatother%
  \begin{picture}(1,0.86048663)%
    \lineheight{1}%
    \setlength\tabcolsep{0pt}%
    \put(0,0){\includegraphics[width=\unitlength,page=1]{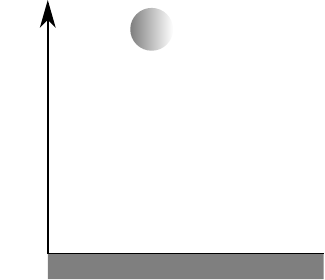}}%
    \put(0.03673765,0.65008529){\rotatebox{90}{\makebox(0,0)[rt]{\lineheight{1.25}\smash{\begin{tabular}[t]{r}height ($x_{1}$)\end{tabular}}}}}%
    \put(0,0){\includegraphics[width=\unitlength,page=2]{bouncingballfigure.pdf}}%
    \put(0.55137666,0.30449301){\makebox(0,0)[lt]{\lineheight{1.25}\smash{\begin{tabular}[t]{l}control\\input $(u)$\end{tabular}}}}%
  \end{picture}%
\endgroup%

}
			}
			\parbox[h]{0.49\columnwidth}{
				\centering
				\subfigure[A motion plan to the sample motion planning problem for actuated  bouncing ball system.\label{fig:samplesolutionbb}]{\includegraphics[width=0.46\columnwidth]{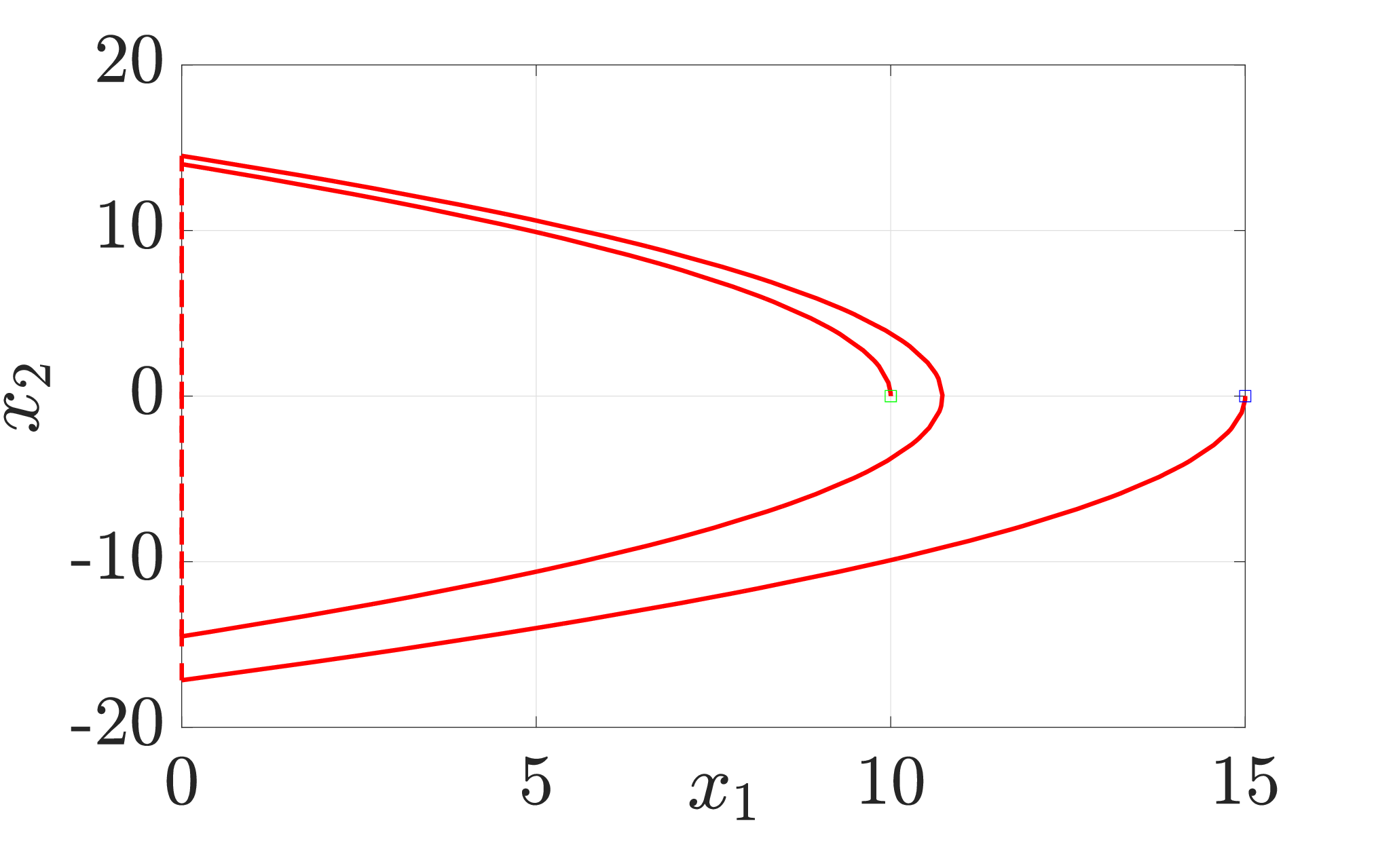}}
			}
			\caption{The actuated bouncing ball system in Example \ref{example:bouncingball}.}
	\end{figure}
	\sj{The ball is allowed to flow when the ball is \pn{on or} above the surface.} Hence, the flow set is
	\begin{equation}
		\label{model:bouncingballflowset}
		C := \{(x, u)\in \mathbb{R}^{2}\times \mathbb{R}: x_{1}\geq 0\}.
		\end{equation}
	At every impact, and with control input equal to zero, the velocity of the ball changes from negative to positive while the height remains the same. \sj{The dynamics at jumps are given by}
	\begin{equation}
		x^{+} = \left[ \begin{matrix}
		x_{1} \\
		-\lambda x_{2}+u
		\end{matrix}\right] =: g(x, u)\qquad (x, u)\in D
		\label{conservationofmomentum}
		\end{equation}
	where $u\geq 0$ is the input applied only at jumps and $\lambda \in (0,1)$ is the coefficient of restitution. 
	Jumps are allowed when the ball is on the surface with nonpositive velocity. Hence, the jump set is 
	\begin{equation}
		\label{model:bouncingballjumpset}
		D:= \{(x, u)\in \mathbb{R}^{2}\times \mathbb{R}: x_{1} = 0, x_{2} \leq 0, u\geq 0\}.
		\end{equation}


	An example of a motion planning problem for the actuated bouncing ball system is as follows: using a bounded input signal, find a solution pair to (\ref{model:generalhybridsystem}) when the bouncing ball is released from a certain height with zero velocity and such that it eventually reaches a given target height with zero velocity. The input is constrained to be positive and upper bounded by a given positive real number. To complete this task, not only the values of the input, but also the hybrid time domain of the input need to be planned properly such that the ball can reach the desired target. One such motion planning problem is given by defining the initial state set as $X_{0} = \{(14, 0)\}$, the final state set as $X_{f} = \{(10, 0)\}$, and the unsafe set as $X_{u} =\{(x, u)\in \mathbb{R}^{2}\times \mathbb{R}: u\in (-\infty, 0]\cup[5, \infty)\}$.  The motion planning problem $\mathcal{P}$ is given as $\mathcal{P} = (X_{0}, X_{f}, X_{u}, (C, f, D, g))$. We solve this motion planning problem later in this paper.
\end{example}

\begin{example}[(Walking robot)]\label{example:biped}
	The state $x$ of the compass model of a walking robot is composed of the angle vector $\theta$ and the velocity vector $\omega$ \cite{grizzle2001asymptotically}. 
	The angle vector $\theta$ contains the planted leg angle $\theta_{p}$, the swing leg angle $\theta_{s}$, and the torso angle $\theta_{t}$. The velocity vector $\omega$ contains the planted leg angular velocity $\omega_{p}$, the swing leg angular velocity $\omega_{s}$, and the torso angular velocity $\omega_{t}$. The input $u$ is the input torque, where $u_{p}$ is the torque applied on the planted leg from the ankle, $u_{s}$ is the torque applied on the swing leg from the hip, and $u_{t}$ is the torque applied on the torso from the hip.
	The continuous dynamics of $x = (\theta, \omega)$ are obtained from the Lagrangian method and are given by 
	$$\dot{\theta} = \omega,$$
	$$ \dot{\omega} = D_{f}(\theta)^{-1}( - C_{f}(\theta, \omega)\omega - G_{f}(\theta) + Bu) = : \alpha(x, u),$$
	where $D_{f}$ and $C_{f}$ are the inertial and Coriolis matrices, respectively, and $B$ is the actuator  relationship matrix.
	
	In \cite{short2018hybrid}, the input torques that produce an acceleration, denoted $a$, for a specific state $x$ are determined by a function $\mu$, defined as 
	$$\mu(x, a) := B^{-1}(D_{f}(\theta)a + C_{f}(\theta, \omega)\omega + G_{f}(\theta)).
	$$
	By applying $u = \mu(x, a)$ to $\dot{\omega} = \alpha(x, u)$, we obtain
	$$
	\label{equation:omegaflowmap}
	\dot{\omega} = a.
	$$
	Then, the flow map $f$ is defined as
		$$
		\label{model:bipedflowmap}
		f(x, a) := \left[\begin{matrix}
		\omega\\
		a
		\end{matrix}\right] \quad \forall  (x, a)\in C.
		$$
	Flow is allowed when only one leg is in contact with the ground. To determine if the biped has reached the end of a step, we define
	$$
	h(x) := \phi_{s} - \theta_{p}$$ for all $x\in \mathbb{R}^{6}
	$
	where $\phi_{s}$ denotes the step angle.
	The condition $h(x) \geq 0$ indicates that only one leg is in contact with the ground. Thus, the flow set is defined as 
	$$
	\label{model:bipedflowset}
	C:= \{(x, a)\in \mathbb{R}^{6}\times \mathbb{R}^{3}: h(x)\geq 0\}.
	$$
	Furthermore, a step occurs when the change of $h$ is such that $\theta_{p}$  is approaching $\phi_{s}$, and $h$ equals zero. Thus, the jump set $D$ is defined as
	$$
	\label{model:bipedjumpset}
	D := \{(x, a)\in \mathbb{R}^{6}\times \mathbb{R}^{3}: h(x) = 0, \omega_{p} \geq 0\}.
	$$
	Following \cite{grizzle2001asymptotically}, when a step occurs, the swing leg becomes the planted leg, and the planted leg becomes the swing leg. The function $\Gamma$ is defined to swap angles and velocity variables as
	$
	\label{equation:thetajumpmap}
	\theta^{+} = \Gamma(\theta).
	$
	The angular velocities after a step are determined by a contact model denoted as
	$\Omega(x) := (\Omega_{p}(x), \Omega_{s}(x), \Omega_{t}(x))$, where $\Omega_{p}$, $\Omega_{s}$, and $\Omega_{t}$ are the angular velocity of the planted leg, swing leg, and torso, respectively. Then, the jump map $g$ is defined as
	\begin{equation}
	\label{equation:bipedjumpmap}
	g(x, a) := \left[\begin{matrix}
	\Gamma(\theta)\\
	\Omega(x)
	\end{matrix}\right]\quad \forall (x, a)\in D.
	\end{equation} For more information about the contact model, see \cite[Appendix A]{grizzle2001asymptotically}.
	
	An example of a motion planning problem for the walking robot system is as follows: using a bounded input signal, find a solution pair to (\ref{model:generalhybridsystem}) associated to the walking robot system completing a step of a walking circle. One way to characterize a walking cycle is to define the final state and the initial state as the states before and after a jump occurs, respectively.  One such motion planning problem is given by defining the final state set as $X_{f} = \{(\phi_{s}, -\phi_{s}, 0, 0.1, 0.1, 0)\}$, the initial state set as $X_{0} = \{x_{0}\in \mathbb{R}^{6}: x_{0} = g(x_{f}, 0), x_{f}\in X_{f}\}$ where the input argument of $g$ can be set arbitrarily because input does not affect the value of $g$; see (\ref{equation:bipedjumpmap}), and the unsafe set as $X_{u} = \{(x, a)\in \mathbb{R}^{6}\times \mathbb{R}^{3}: a_{1} \notin [a_{1}^{\min}, a_{1}^{\max}]\text{ or }  a_{2} \notin [a_{2}^{\min}, a_{2}^{\max}]\text{ or } a_{3} \notin [a_{3}^{\min}, a_{3}^{\max}]\text{ or } (x, a)\in D  \}$, where $a_{1}^{\min}$, $a_{2}^{\min}$, and $a_{3}^{\min}$ are the lower bounds of $a_{1}$, $a_{2}$, and  $a_{3}$, respectively, and $a_{1}^{\max}$, $a_{2}^{\max}$, and $a_{3}^{\max}$ are the upper bounds of $a_{1}$, $a_{2}$, and  $a_{3}$, respectively.
We also solve this motion planning problem later in this paper.
	\begin{figure}[htbp] 
	\centering
	\def\svgwidth{0.5\columnwidth}
	\import{./figures/}{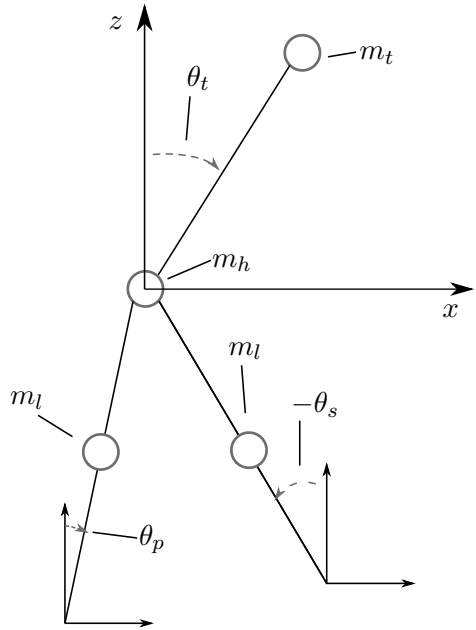}

	\caption{The biped system in Example \ref{example:biped}. The angle vector $\theta$ contains the planted leg angle $\theta_{p}$, the swing leg angle $\theta_{s}$, and the torso angle $\theta_{t}$. The velocity vector $\omega$ contains the planted leg angular velocity $\omega_{p}$, the swing leg angular velocity $\omega_{s}$, and the torso angular velocity $\omega_{t}$. The input $u$ is the input torque, where $u_{p}$ is the torque applied on the planted leg from the ankle, $u_{s}$ is the torque applied on the swing leg from the hip, and $u_{t}$ is the torque applied on the torso from the hip.} \label{fig:bipedsystem} 
\end{figure} 
\end{example}\label{section:problemstatement}
\section{Algorithm Description}\label{section:hyrrtconnect}
\subsection{Overview}\label{section:algorithmoverview}
\ifbool{conf}{\vspace{-0.15cm}}{}
In this section, a bidirectional RRT-type motion planning algorithm for hybrid systems, called HyRRT-Connect, is proposed. HyRRT-Connect searches for a motion plan by incrementally constructing two search trees: one that starts from the initial state set and propagates forward in hybrid time, while the other tree starts from the final state set and propagates backward in hybrid time. Upon detecting overlaps between the two search trees, the trees are connected, subsequently yielding a motion plan, \pn{as described} in Section \ref{section:solutionchecking}.
Each search tree is modeled by a directed tree. A directed tree $\mathcal{T}$ is a pair $\mathcal{T} = (V, E)$, where $V$ is a set whose elements are called vertices and $E$ is a set of paired vertices whose elements are called edges. The edges in the directed tree are directed, which means the pairs of vertices that represent edges are ordered. The set of edges $E$ is defined as
	$
	E \subseteq \{(v_{1}, v_{2}): v_{1}\in V, v_{2}\in V, v_{1}\neq v_{2}\}.
	$
	The edge $e = (v_{1}, v_{2})\in E$ represents an edge from $v_{1}$ to $v_{2}$.
A path in $\mathcal{T} = (V, E)$ is a sequence of vertices
\begin{equation}\label{eq:path}
	p = (v_{1}, v_{2}, ..., v_{k})
\end{equation}
such that $(v_{i}, v_{i + 1})\in E$ for all $i= \{1, 2,..., k - 1\}$.

The search tree constructed forward in hybrid time is denoted as $\fw{\stree} = (\fw{V}, \fw{E})$ and the search tree constructed backward in hybrid time is denoted as $\bw{\stree} = (\bw{V}, \bw{E})$. \sj{For consistency}, we denote $\hs$ in (\ref{model:generalhybridsystem}) as $\fw{\hs} = (\fw{C}, \fw{f}, \fw{D}, \fw{g})$. Each vertex $v$ in $\fw{V}$(respectively, $\bw{V}$) is associated with a state of $\fw{\hs}$ (respectively, the hybrid system that represents inverse dynamics of $\fw{\hs}$, denoted $\bw{\hs}$), denoted $\overline{x}_{v}$. Each edge $e$ in $\fw{E}$ (respectively, $\bw{E}$) is associated with a solution pair to $\fw{\hs}$ (respectively, $\bw{\hs}$), denoted $\overline{\psi}_{e}$, that connects the states associated with their endpoint vertices. 
The solution pair associated with the path $p = (v_{1}, v_{2}, ..., v_{k})$ is the concatenation of all the solutions \sj{associated with} the edges therein, namely,
\begin{equation}
	\label{equation:concatenationpath}
	\tilde{\psi}_{p} := \overline{\psi}_{(v_{1}, v_{2})}|\overline{\psi}_{(v_{2}, v_{3})}|\ ...\  |\overline{\psi}_{(v_{k-1}, v_{k})}
	\end{equation}
where $\tilde{\psi}_{p}$ denotes the solution pair associated with $p$. For the concatenation notion, see Definition \ref{definition:concatenation}.

The proposed HyRRT-Connect algorithm requires a library of possible inputs to construct $\fw{\stree}$, denoted $\fw{\il} = (\fw{\il}_{C}, \fw{\il}_{D})$, and to construct $\bw{\stree}$, denoted $\bw{\il} = (\bw{\il}_{C}, \bw{\il}_{D})$. The input library $\fw{\il}$ (respectively, $\bw{\il}$) includes the input signals for the flows of $\fw{\hs}$ (respectively, $\bw{\hs}$), collected in $\fw{\il}_{C}$ (respectively, $\bw{\il}_{C}$), and the input values for the jumps of $\fw{\hs}$ (respectively, $\bw{\hs}$), collected in $\fw{\il}_{D}$ (respectively, $\bw{\il}_{D}$).

HyRRT-Connect addresses the motion planning problem $\mathcal{P} = (X_{0}, X_{f}, X_{u}, (\fw{C}, \fw{f}, \fw{D}, \fw{g}))$ using input libraries $\fw{\il}$ and $\bw{\il}$ through the following steps:
\begin{steps}
	\item Sample a finite number of points from $X_{0}$ (respectively, $X_{f}$) and initialize a search tree $\fw{\stree} = (\fw{V}, \fw{E})$ (respectively, $\bw{\stree} = (\bw{V}, \bw{E})$) by adding vertices associated with each sample.
	\item Incrementally construct $\fw{\stree}$ forward in hybrid time and $\bw{\stree}$ backward in hybrid time, executing both procedures in an interleaved manner\endnote{The information used in forward propagation remains unaffected by the backward propagation, and vice versa. This independence allows for the potential of executing both forward and backward propagation simultaneously, as opposed to an interleaved approach. Nonetheless, the nature of motion planning algorithm, which involves frequently joining the results computed in parallel to check overlaps between the forward and backward search trees, may prevent the computation improvement associated with parallel computation. Additional comments and experimental results can be found in the forthcoming Section ~\ref{section:parallelcomputation}.}.
	\item If an appropriate overlap between $\fw{\stree}$ and $\bw{\stree}$ is found, reverse the solution pair in $\bw{\stree}$, concatenate it to the solution pair in $\fw{\stree}$\pn{,} and return the concatenation result.
\end{steps}
An appropriate overlap between $\fw{\stree}$ and $\bw{\stree}$ implies the paths in both trees that can collectively be used to construct a motion plan to $\mathcal{P}$. Details on these overlaps are discussed in Section \ref{section:solutionchecking}.

\ifbool{conf}{\vspace{-0.20cm}}{}
\subsection{Backward-in-time Hybrid System}
\ifbool{conf}{\vspace{-0.12cm}}{}
In the HyRRT-Connect algorithm, a hybrid system that represents backward-in-time dynamics of $\fw{\hs} = (\fw{C}, \fw{f}, \fw{D}, \fw{g})$, denoted $\bw{\hs} = (\bw{C}, \bw{f}, \bw{D}, \bw{g})$, is required when propagating trajectories from $X_{f}$. The construction of $\mathcal{H}^{\text{bw}}$ is as follows.
\begin{definition}[(Backward-in-time hybrid system)]\label{definition:bwhs} Given a hybrid system $
	\fw{\hs} = (\fw{C}, \fw{f}, \fw{D}, \fw{g})$, the backward-in-time hybrid system of $\fw{\hs}$, denoted $\bw{\hs}$, \sj{is defined as}
	\begin{equation}
	\bw{\hs}: \left\{              
	\begin{aligned}               
	\dot{x} & =  \bw{f}(x, u)     &(x, u)\in \bw{C}\\                
	x^{+} & \in  \bw{g}(x, u)      &(x, u)\in \bw{D}\\                
	\end{aligned}   \right. 
	\label{model:generalhybridbackwardsystem}
	\end{equation}
	where
	\begin{enumerate}[label = \arabic*)]
		\item The backward-in-time flow set is constructed as
		\begin{equation}
			\label{equation:backwardflowset}
			\bw{C}:= \fw{C}.
			\end{equation}
		\item The backward-in-time flow map is constructed as
	\begin{equation}
			\label{equation:backwardflow}
			\begin{aligned}
			\bw{f}(x, u) :=  -\fw{f}(x, u) \quad\pnc{\forall} (x, u)\in \bw{C}.
			\end{aligned}
			\end{equation}
		
		\item The backward-in-time jump map is constructed as
		\begin{equation}
			\label{set:gbw}
			\begin{aligned}
			\bw{g}(x, u) &:= \{z\in \mathbb{R}^{n}: x = \fw{g}(z, u),\\
			&(z, u)\in \fw{D}\}\quad \forall (x, u)\in \mathbb{R}^{n}\times \mathbb{R}^{m}.
			\end{aligned}
			\end{equation}
		
		\item The backward-in-time jump set is constructed as
		\begin{equation}
			\label{set:dbw}
			\begin{aligned}
			\bw{D} &:= \{(x, u)\in \mathbb{R}^{n}\times \mathbb{R}^{m}:\\
			&\exists z\in\myreals[n]: x = \fw{g}(z, u), (z, u)\in \fw{D}\}.
			\end{aligned}
			\end{equation}
	\end{enumerate}
\end{definition}
The above shows a general method of constructing the backward-in-time system given the forward-in-time hybrid system.
While the jump map $\fw{g}$ of the forward-in-time system $\fw{\hs}$ is single-valued, the corresponding map $\bw{g}$ in $\bw{\hs}$ may not be, especially if $\bw{g}$ is not invertible. Therefore, a difference inclusion in (\ref{model:generalhybridbackwardsystem}) \sj{governs the discrete dynamics}. The backward-in-time jump map $\bw{g}$ is defined on the preimage of $\fw{g}$ for any available input.

We now illustrate the construction of the backward-in-time hybrid system for the actuated bouncing ball system \pn{in Example \ref{example:bouncingball}. 

\begin{example}[(Actuated bouncing ball system in Example \ref{example:bouncingball}, revisited)]\label{example:bouncingballbw} The backward-in-time hybrid system of the actuated bouncing ball system is constructed as follows.
	\begin{itemize}
		\item From (\ref{equation:backwardflowset}), the backward-in-time flow set $C^{\text{bw}}$ is given by
		\begin{equation}
			\label{equation:bouncingballbwflowset}
			\bw{C}:= \fw{C} =  \{(x, u)\in \mathbb{R}^{2}\times \mathbb{R}: x_{1} \geq 0\}.
			\end{equation}
		\item From (\ref{equation:backwardflow}), the backward-in-time flow map $f^{\text{bw}}$ is given by
		\begin{equation}
			\label{equation:bouncingballbwflowmap}
			\bw{f}(x, u) := -\fw{f}(x, u) = \left[ \begin{matrix}
			-x_{2} \\
			\gamma
			\end{matrix}\right] \qquad \forall(x, u)\in \bw{C}.
			\end{equation}
		\item From (\ref{set:dbw}), the backward-in-time jump set is given by
		\begin{equation}
			\label{equation:bouncingballbwjumpset}
			\bw{D} := \{(x, u)\in \mathbb{R}^{2}\times \mathbb{R}: x_{1} = 0, x_{2} \geq u, u\geq 0\}. 
			\end{equation}
		\item From (\ref{set:gbw}), since $\lambda\in (0, 1\pn{]}$, the backward-in-time jump map $g^{\text{bw}}$ is given by
		\begin{equation}
			\label{equation:bouncingballbwjumpmap}
			\bw{g}(x, u) :=  \left[ \begin{matrix}
			x_{1} \\
			-\frac{x_{2}}{\lambda}+\frac{u}{\lambda}
			\end{matrix}\right]\ \forall(x, u)\in \bw{D}.
			\end{equation}
	\end{itemize}
	Note \pn{that when $\lambda = 0$, $g$ is not invertible. In this case,} the backward-in-time jump map is given by
	$$
		\bw{g}(x, u) :=  \{x_{1}\}\times\myreals_{\leq 0} \qquad \forall(x, u)\in \bw{D}
		$$
	where, as a difference to (\ref{equation:bouncingballbwjumpset}), the backward-in-time jump set is given by
	$$
		\bw{D} := \{(x, u)\in \mathbb{R}^{2}\times \mathbb{R}: x_{1} = 0, x_{2} = u \geq 0\}.
		$$

\end{example}
\begin{remark}
	The backward-in-time hybrid system $\bw{\hs}$ of $\fw{\hs}$ characterizes the state evolution backward in hybrid time, relative to $\fw{\hs}$ at the equivalent state and input.  To justify that the backward-in-time hybrid system is properly defined, a reversal operation is introduced in the forthcoming Definition~\ref{definition:reversedhybrdarc}. Through this operation, we demonstrate in Proposition~\ref{theorem:reversedarcbwsystem} that the reversal of the solution pair to $\bw{\hs}$ is a solution pair to $\fw{\hs}$.
\end{remark}}
\ifbool{conf}{\vspace{-0.35cm}}{}
\subsection{Construction of Motion Plans}\label{section:constructionmotionplan}
\ifbool{conf}{\vspace{-0.30cm}}{}
To construct a motion plan, HyRRT-Connect reverses a solution pair associated with a path detected in $\mathcal{T}^{\text{bw}}$ and concatenates it with a solution pair associated with a path detected in $\mathcal{T}^{\text{fw}}$, \sj{according to Definition \ref{definition:concatenation}. The following result shows that the resulting} concatenation is a solution pair to $\fw{\hs}$ under mild conditions. 
\begin{proposition}
	\label{theorem:concatenationsolution}
	Given two solution pairs $\psi_{1} = (\phi_{1}, \myinputt_{1})$ and $\psi_{2} = (\phi_{2}, \myinputt_{2})$ to a hybrid system $\fw{\hs}$, their concatenation $\psi = (\phi, \myinputt) = (\phi_{1}|\phi_{2}, \myinputt_{1}|\myinputt_{2})$, denoted $\psi = \psi_{1}|\psi_{2}$,  is a solution pair to $\fw{\hs}$ if the following hold:
	\begin{enumerate}[label = \arabic*)]
		\item $\psi_{1} = (\phi_{1}, \myinputt_{1})$ is compact;
		\item $\phi_{1}(T, J) = \phi_{2}(0,0)$, where $(T, J) = \max \dom \psi_{1}$;
		\item If both $I_{\psi_{1}}^{J}$ and $I_{\psi_{2}}^{0}$ have nonempty interior, where $I_{\psi}^{j} = \{t: (t, j)\in \dom \psi\}$ and $(T, J) = \max \dom \psi_{1}$, then $\psi_{2}(0, 0)\in C$.
	\end{enumerate}
\end{proposition}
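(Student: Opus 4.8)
The plan is to verify directly the three defining conditions of a solution pair in Definition~\ref{definition:solution} for $\psi = \psi_1|\psi_2$, using that $\psi_1$ and $\psi_2$ are already solution pairs to $\fw{\hs}$ and that the concatenation in Definition~\ref{definition:concatenation} merely glues the two pieces at the shared hybrid time $(T,J)=\max\dom\psi_1$ without introducing a new jump. First I would record that condition 1) (compactness of $\psi_1$) makes $(T,J)$ well defined and the concatenation admissible, so that $\dom\psi = \dom\psi_1\cup(\dom\psi_2+\{(T,J)\})$ is again a hybrid time domain whose jump indices run $0,\dots,J$ from $\psi_1$ and continue $J,J+1,\dots$ from $\psi_2$. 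The only index shared by both pieces is $j=J$, where the time intervals $I^J_{\psi_1}$ and $I^0_{\psi_2}+T$ meet at $t=T$; this shared index is the one place where some care is needed.

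The initial condition is immediate: either $\psi(0,0)=\psi_1(0,0)$, or, in the degenerate case $\dom\psi_1=\{(0,0)\}$, condition 2) forces $\psi_1(0,0)$ and $\psi_2(0,0)$ to agree in state; in both cases $\psi(0,0)\in\overline{C}\cup D$ because $\psi_1$ is a solution pair, and $\dom\phi=\dom\myinputt$ holds by construction, giving part 1).

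For the flow condition (part 2)) I would split by jump index. For $j<J$ the interval $I^j_\psi$ coincides with $I^j_{\psi_1}$, and for $j>J$ it coincides with $I^{j-J}_{\psi_2}+T$, so both the membership $(\phi,\myinputt)\in C$ on the interior and the differential equation for almost every $t$ are inherited verbatim from $\psi_1$ and $\psi_2$, as is measurability and local essential boundedness of the input. The delicate index is $j=J$, where $I^J_\psi=I^J_{\psi_1}\cup(I^0_{\psi_2}+T)$. Here condition 2), $\phi_1(T,J)=\phi_2(0,0)$, is what makes $t\mapsto\phi(t,J)$ locally absolutely continuous across $t=T$ (the left piece ends and the right piece starts at the same value), so $\psi$ is a genuine hybrid arc and hybrid input; the differential equation then holds for almost every $t\in I^J_\psi$ since the two subintervals each supply it a.e. and $\{T\}$ has measure zero. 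The membership $(\phi(t,J),\myinputt(t,J))\in C$ for interior $t\neq T$ again comes from $\psi_1$ or $\psi_2$; the single point not covered automatically is $t=T$, and it is relevant only when $T$ is interior to $I^J_\psi$, i.e. exactly when both $I^J_{\psi_1}$ and $I^0_{\psi_2}$ have nonempty interior. This is precisely where condition 3) enters: in that case the pair evaluated at the junction is $\psi(T,J)=(\phi_2(0,0),\myinputt_2(0,0))=\psi_2(0,0)\in C$, closing the gap.

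Finally, for the jump condition (part 3)), every $(t,j)$ with $(t,j+1)\in\dom\psi$ lies entirely within the $\psi_1$-part or within the shifted $\psi_2$-part — no jump is created at the junction $(T,J)$, since the concatenation continues at the same index $J$ — so the inclusion $(\phi,\myinputt)\in D$ and the jump equation $\phi(t,j+1)=g(\phi(t,j),\myinputt(t,j))$ are inherited from whichever piece contains the pair. I expect the main obstacle to be the bookkeeping at the shared index $j=J$: arguing correctly that $C$-membership need only be checked at $t=T$, that this check is required exactly when both adjacent intervals are nondegenerate, and that condition 3) supplies that membership while condition 2) supplies the absolute continuity that $\psi$ needs to be a hybrid arc. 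I would be particularly careful that, by Definition~\ref{definition:concatenation}, the value at $(T,J)$ is taken from the $\psi_2$-piece, so the pair there is $\psi_2(0,0)$ rather than $\psi_1(T,J)$, which is exactly what condition 3) controls.
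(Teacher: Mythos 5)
Your proposal is correct in substance and follows essentially the same route as the paper's proof: a direct verification of the conditions of Definition~\ref{definition:solution} with the case split $j<J$, $j=J$, $j>J$, using condition 2) to glue the arc continuously at the junction and condition 3) to supply flow-set membership at $t=T$ exactly when $T$ is interior to $I^{J}_{\psi}$. The paper packages the identical argument into a chain of lemmas (hybrid time domain, input measurability and essential boundedness, absolute continuity of the arc, $C$-membership, flow equation, jump conditions), so the difference is organizational rather than mathematical.

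One point needs correction, however. In your jump paragraph you claim the jump conditions are ``inherited from whichever piece contains the pair,'' and in your final sentence you attribute the junction subtlety to condition 3). Consider the case where $\psi_{1}$ ends with a jump, i.e. $(T,J-1)\in\dom\psi_{1}$ and $(T,J)=\max\dom\psi_{1}$. The jump equation for $\psi$ at $(T,J-1)$ reads $\phi(T,J)=g(\phi(T,J-1),\myinputt(T,J-1))$, and, as you yourself observe, Definition~\ref{definition:concatenation} assigns $\phi(T,J)=\phi_{2}(0,0)$, not $\phi_{1}(T,J)$. What closes this case is condition 2), namely $\phi_{2}(0,0)=\phi_{1}(T,J)=g(\phi_{1}(T,J-1),\myinputt_{1}(T,J-1))$; condition 3) concerns membership in $C$ and is irrelevant here. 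This is precisely the ``special case'' the paper isolates in Lemma~\ref{lemma:concatenation_solution_dg}, where the hypothesis $\phi_{1}(T,J)=\phi_{2}(0,0)$ is invoked. Since you already use condition 2) for exactly this equality in your continuity argument, the slip is a mislabel rather than a missing idea, but as written the junction jump is not purely ``inherited'': it genuinely requires condition 2).
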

\begin{proof}
See Appendix \ref{appendix:proof_concatenation}.
\end{proof}

\begin{remark}
		Item 1 in Proposition \ref{theorem:concatenationsolution} guarantees that $\psi_{2}$ can be concatenated to $\psi_{1}$. The concatenation operation defined in Definition \ref{definition:concatenation} suggests that if $\psi_{2}$ can be concatenated to $\psi_{1}$, $\psi_{1}$ is required to be compact. 
		Item 2 in Proposition \ref{theorem:concatenationsolution} guarantees that the concatenation $\psi$ satisfies the requirement of being absolutely continuous in Definition~\ref{definition:hybridarc} at hybrid time $(T, J)$, where $(T, J) = \max \dom \psi_{1}$.
		Item 3 in Proposition \ref{theorem:concatenationsolution} guarantees that the concatenation result $\psi$ satisfies item 2 in Definition \ref{definition:solution} at hybrid time $(T, J)$, where $(T, J) = \max \dom \psi_{1}$. Note that item 2 therein does not require that $\psi_{1}(T, J)\in C$ and $\psi_{2}(0,0)\in C$ since $T\notin \interior I_{\psi_{1}}^{J}$ and $0\notin \interior I_{\psi_{2}}^{0}$. However, $T$ may belong to the interior of $I_{\psi}^{J}$ after concatenation. Hence, item 3 guarantees that if $T$ belongs to the interior of $I_{\psi}^{J}$ after concatenation, $\psi$ still satisfies item 2 in Definition \ref{definition:solution}.
\end{remark}
The reversal operation is defined next.
\begin{definition}[(Reversal of a solution pair)]
	\label{definition:reversedhybrdarc}
	Given a compact solution pair $(\phi, \myinputt)$ to $\fw{\hs} = (\fw{C}, \fw{f}, \fw{D}, \fw{g})$, where $\phi: \dom \phi \to \mathbb{R}^{n}$, $\myinputt: \dom \myinputt \to \mathbb{R}^{m}$, and $(T, J) = \max \dom (\phi, \myinputt)$, the pair $(\phi', \myinputt')$ is the reversal of $(\phi, \myinputt)$, where $\phi': \dom \phi' \to \mathbb{R}^{n}$ with $\dom \phi' \subset \mathbb{R}_{\geq 0}\times \mathbb{N}$ and $\myinputt': \dom \myinputt' \to \mathbb{R}^{m}$ with $\dom \myinputt' = \dom \phi'$,  if:
	\begin{enumerate}[label = \arabic*)]
		\item  The function $\phi'$ is defined as
		\begin{enumerate}[label = \alph*)]
			\item $\dom \phi' = \{(T, J)\} - \dom \phi$, where the minus sign denotes \sj{the} Minkowski difference; 
			\item $\phi'(t, j) = \phi(T-t, J - j)$ for all $(t, j)\in \dom \phi'$.
		\end{enumerate}
		\item The function $\myinputt'$ is defined as
		\begin{enumerate}[label = \alph*)]
			\item $\dom \myinputt' = \{(T, J)\} - \dom \myinputt$, where the minus sign denotes Minkowski difference; 
			\item For all $j\in \mathbb{N}$ such that $I^{j} = \{t: (t, j)\in \dom \myinputt'\}$ has nonempty interior, 
			\begin{enumerate}[label = \roman*)]
				\item For all $t\in \interior I^{j}$,
				$
				\myinputt'(t, j) = \myinputt(T-t, J - j);
				$
				\item If $I^{0}$ has nonempty interior, then $\myinputt'(0, 0)\in \mathbb{R}^{m}$ is such that 
				$
				(\phi'(0, 0), \myinputt'(0, 0))\in \overline{\fw{C}};
				$
				\item For all $t\in \partial I^{j}$ such that $(t, j + 1)\notin \dom \myinputt'$ and $(t, j) \neq (0, 0)$,
				$
				\myinputt'(t, j)\in \mathbb{R}^{m}.
				$
			\end{enumerate}
			\item For all $(t, j)\in \dom \myinputt'$ such that $(t, j + 1)\in \dom \myinputt'$,
			$
			\myinputt'(t, j) = \myinputt(T- t, J - j -1).
			$
		\end{enumerate}
	\end{enumerate}
\end{definition}
Proposition \ref{theorem:reversedarcbwsystem} shows that the reversal of the solution pair to a hybrid system is a solution pair to its backward-in-time hybrid system. 
\begin{proposition}
	\label{theorem:reversedarcbwsystem}
	Given a hybrid system $\fw{\hs}$ and its backward-in-time system $\bw{\hs}$, the reversal $\psi' = (\phi', \myinputt')$ of $\psi = (\phi, \myinputt)$ is a compact solution pair to $\hs^{\text{bw}}$ if and only if $\psi = (\phi, \myinputt)$ is a compact solution pair to $\fw{\hs}$.
\end{proposition}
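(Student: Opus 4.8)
The plan is to verify the three defining conditions of a solution pair (Definition \ref{definition:solution}) directly, setting up the computations so that each condition for $\psi'$ relative to $\bw{\hs}$ is read off from the corresponding condition for $\psi$ relative to $\fw{\hs}$ and vice versa; I deliberately avoid invoking a ``backward-of-backward equals forward'' identity, which is delicate because $\bw{g}$ is set-valued. I would begin with the hybrid time domain: writing $\dom\phi=\bigcup_{j=0}^{J-1}([t_j,t_{j+1}]\times\{j\})$ with $t_J=T$, the reflected set $\{(T,J)\}-\dom\phi$ reindexes to $\bigcup_{k=0}^{J-1}([T-t_{J-k},T-t_{J-k-1}]\times\{k\})$, which is again a compact hybrid time domain since $0=T-t_J\le\cdots\le T-t_0=T$. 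Reflection swaps $\min$ and $\max$, so $\max\dom\phi'=(T,J)$ and the reversal is an involution on the arc, $\phi''=\phi$; this is what lets the forward computations serve the converse.

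For the flow condition, I would consider a level $j$ of $\psi'$ whose interval has nonempty interior; the corresponding level $J-j$ of $\psi$ then also has nonempty interior, and item 2b)i) of Definition \ref{definition:reversedhybrdarc} gives $\myinputt'(t,j)=\myinputt(T-t,J-j)$ on that interior. The chain rule yields $\tfrac{d}{dt}\phi'(t,j)=-\dot\phi(T-t,J-j)=-\fw{f}(\phi'(t,j),\myinputt'(t,j))=\bw{f}(\phi'(t,j),\myinputt'(t,j))$ for almost every $t$, using $\bw{f}=-\fw{f}$, while $(\phi'(t,j),\myinputt'(t,j))\in\fw{C}=\bw{C}$ on the interior is inherited from the forward flow interval. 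Every identity here is reversible, so it serves both implications.

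The jump condition is the crux. A jump of $\psi'$ from $(t,j)$ to $(t,j+1)$ corresponds under reflection to a jump of $\psi$ from $(\tau,k):=(T-t,J-j-1)$ to $(\tau,k+1)$, with $\myinputt'(t,j)=\myinputt(\tau,k)$ by item 2c). Writing $x:=\phi'(t,j)=\phi(\tau,k+1)$, $u:=\myinputt(\tau,k)$ and $z:=\phi'(t,j+1)=\phi(\tau,k)$, the forward jump relation $x=\fw{g}(z,u)$ with $(z,u)\in\fw{D}$ is, by the very definitions (\ref{set:dbw}) and (\ref{set:gbw}), identical to the pair of statements $(x,u)\in\bw{D}$ and $z\in\bw{g}(x,u)$; hence the backward jump condition for $\psi'$ and the forward jump condition for $\psi$ are the same assertion, delivering both implications at once. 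For the initial condition $(\phi'(0,0),\myinputt'(0,0))\in\overline{\bw{C}}\cup\bw{D}$, I would split on whether $I^0$ has nonempty interior: if it does, item 2b)ii) selects $\myinputt'(0,0)$ so that $(\phi'(0,0),\myinputt'(0,0))\in\overline{\fw{C}}=\overline{\bw{C}}$; otherwise $(0,0)$ is the source of an initial jump and membership in $\bw{D}$ is already furnished by the jump analysis, and the converse initial condition for $\psi$ is obtained symmetrically from the final segment of $\psi'$.

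The main obstacle I anticipate is not any single analytic step, each of which is short, but the bookkeeping of the reflection on hybrid time domains combined with the three-way case split on boundary points (interior of a flow interval, flow endpoint, and jump source) that the reversal definition is engineered to accommodate. Keeping the index arithmetic consistent, namely $(t,j)\leftrightarrow(T-t,J-j)$ for flow and $(t,j)\leftrightarrow(T-t,J-j-1)$ for jumps, and confirming that the free boundary input values permitted by items 2b)ii)--iii) never violate conditions 2) and 3) of Definition \ref{definition:solution} at the seams between flow and jump, is where the argument demands the most care.
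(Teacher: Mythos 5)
Your plan follows essentially the same route as the paper's proof: direct verification of the conditions of Definition \ref{definition:solution} under the reflection correspondence $(t,j)\leftrightarrow(T-t,J-j)$ for flow and $(t,j)\leftrightarrow(T-t,J-j-1)$ for jumps, with the same chain-rule computation $\frac{d}{dt}\phi'(t,j)=-\fw{f}(\phi'(t,j),\myinputt'(t,j))=\bw{f}(\phi'(t,j),\myinputt'(t,j))$ and the same observation that, by (\ref{set:gbw}) and (\ref{set:dbw}), the backward jump condition for $\psi'$ and the forward jump condition for $\psi$ are literally one assertion. Where you genuinely depart from the paper is the converse: the paper proves only the direction ``$\psi$ solution to $\fw{\hs}$ $\Rightarrow$ $\psi'$ solution to $\bw{\hs}$'' and dispatches the other by the remark that one may swap $\psi'$ with $\psi$ and $\bw{\hs}$ with $\fw{\hs}$. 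That substitution tacitly needs a backward-of-backward-is-forward identity, which is problematic exactly as you anticipated: Definition \ref{definition:bwhs} only constructs a backward system from a single-valued jump map, while $\bw{g}$ is set-valued, and the input reversal forgets values at free endpoints, so the double reversal does not literally return $\myinputt$. Your per-step equivalences avoid relying on that identity, which is a real, if modest, gain in rigor over the paper's treatment of the ``only if'' half.

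Two omissions should be flagged. First, being a solution pair presupposes that $\phi'$ is a hybrid arc and $\myinputt'$ is a hybrid input; you check that the reflected domain is a compact hybrid time domain, but never verify local absolute continuity of $t\mapsto\phi'(t,j)$ or Lebesgue measurability and local essential boundedness of $t\mapsto\myinputt'(t,j)$. This occupies a large fraction of the paper's proof, and measurability is not simply ``compose with an affine reflection'': by items 2b)ii)--iii) and 2c) of Definition \ref{definition:reversedhybrdarc}, $\myinputt'$ is freely re-assigned at interval endpoints, so each level set must be split into an interior part (measurable by reflection) and a boundary part (a finite set), which is the paper's $S_{i}^{\interior}\cup S_{i}^{\partial}$ decomposition. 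Second, the one place where your ``every identity is reversible'' principle genuinely breaks is condition 1) of Definition \ref{definition:solution} in the converse direction: when $I^{0}_{\psi}$ has nonempty interior, the value $\myinputt(0,0)$ appears nowhere in $\psi'$ (its reflected time $(T,J)$ carries a free value by item 2b)iii)), so from ``$\psi'$ is a solution pair to $\bw{\hs}$'' one can conclude at best $(\phi(0,0),\bar{u})\in\overline{\fw{C}}$ for some limiting input value $\bar{u}$, not $(\phi(0,0),\myinputt(0,0))\in\overline{\fw{C}}\cup\fw{D}$. Your phrase ``obtained symmetrically from the final segment of $\psi'$'' glosses over this; the paper's substitution remark glosses over the same point, but since your whole design is to make each step an equivalence, this is the step you should explicitly identify as not being one (and handle, e.g., by restricting how the initial input value of $\psi$ is interpreted).
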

\begin{proof}
See Appendix \ref{appendix:proof_reverse}.
\end{proof}

Proposition \ref{theorem:concatenationsolution} and Proposition \ref{theorem:reversedarcbwsystem} validate the results of the concatenation and reversal operations, respectively. The following assumption \sj{on the solution pairs that are used to construct motion plans} integrates the conditions in Proposition \ref{theorem:concatenationsolution} and Proposition \ref{theorem:reversedarcbwsystem}.
\begin{assumption}
	\label{assumption:theoreticalcondition}
	Given a solution pair $\psi_{1} = (\phi_{1}, \myinputt_{1})$ to a hybrid system $\fw{\hs} = (\fw{C}, \fw{f}, \fw{D}, \fw{g})$ and a solution pair $\psi_{2} = (\phi_{2}, \myinputt_{2})$ to the backward-in-time hybrid system $\bw{\hs}$ associated to $\fw{\hs}$, the following hold:
	\begin{enumerate}[label = \arabic*)]
		\item $\psi_{1} $ and $\psi_{2} $ are compact;
		\item $\phi_{1}(T_{1}, J_{1}) = \phi_{2}(T_{2}, J_{2})$, where $(T_{1}, J_{1}) = \max \dom \psi_{1}$ and $(T_{2}, J_{2}) = \max \dom \psi_{2}$;
		\item If both $I_{\psi_{1}}^{J_{1}}$ and $I_{\psi_{2}}^{J_{2}}$ have nonempty interior, where $I^{j}_{\psi} =\{t: (t, j)\in \dom \psi\}$, $(T_{1}, J_{1}) = \max \dom \psi_{1}$, and $(T_{2}, J_{2}) = \max$ $\dom \psi_{2}$, then $\psi_{2}(T_{2}, J_{2}) \in C$.
	\end{enumerate}
\end{assumption}
\begin{remark}
		Given a hybrid system $\fw{\hs}$, its backward-in-time system $\bw{\hs}$, a solution pair $\psi_{1}$ to $\fw{\hs}$, and a solution pair $\psi_{2}$ to $\bw{\hs}$, Assumption \ref{assumption:theoreticalcondition} is imposed on $\psi_{1}$ and $\psi_{2}$ to guarantee that the concatenation of the reversal of $\psi_{2}$ to $\psi_{1}$ is a solution pair to $\fw{\hs}$. 
		Assumption \ref{assumption:theoreticalcondition} guarantees that the conditions needed to apply Proposition \ref{theorem:reversedarcbwsystem} and Proposition  \ref{theorem:concatenationsolution} hold. Note that conditions that guarantee the existence of nontrivial solutions have been proposed  in \cite[Proposition 3.4]{chai2018forward}. If $\xi \in X_{0}$ is such that
		$\xi \in D'$, where $D'$ is defined in (\ref{equation:Dprime}), or there exist $\epsilon > 0$, an absolutely continuous function $z:[0, \epsilon] \to \mathbb{R}^{n}$ with $z(0) = \xi$, and a Lebesgue measurable and locally essentially bounded function
		$\tilde{u} : [0, \epsilon] \to U_{C}$ such that $(z(t),\tilde{u}(t))\in \fw{C}$ for all $t\in (0,\epsilon)$ and $\frac{\text{d} z}{\text{d} t}(t) = \fw{f}(z(t), \tilde{u}(t))$ for almost all $t \in [0, \epsilon]$, where\endnote{Given a flow set $\fw{C}\subset \mathbb{R}^{n}\times \mathbb{R}^{m}$, the set-valued maps $\Psi^{u}_{c}: \mathbb{R}^{n} \to U_{C}$ is defined for each $x\in \mathbb{R}^{n}$ as $\Psi^{u}_{c}(x):=\{ u \in U_{C}: (x, u)\in \fw{C}\}$.} $\tilde{u}(t)\in \Psi^{u}_{c}(z(t))$ for every $t\in [0, \epsilon]$,
		then the existence of nontrivial solution pairs is guaranteed from $\xi$. 
		Items 2 and 3 in Assumption \ref{assumption:theoreticalcondition} relate the final states and their ``last" interval of flow of the given solution pairs.\EndRemark
\end{remark}

The following result validates that the \pn{hybrid signal} constructed \pn{using} the solution pairs satisfying Assumption~\ref{assumption:theoreticalcondition} is a solution pair to $\fw{\hs}$. 
\begin{lemma}
	\label{lemma:reverseconcatenation}
	Given a hybrid system $\fw{\hs}$ and its backward-in-time hybrid system $\bw{\hs}$, if $\psi_{1}$ is a solution pair to $\fw{\hs}$ and $\psi_{2}$ is a solution pair to $\bw{\hs}$ such that $\psi_{1}$ and $\psi_{2}$ satisfy Assumption \ref{assumption:theoreticalcondition},  then the concatenation $\psi = \psi_{1}|\psi_{2}'$ is a solution pair to $\fw{\hs}$, where $\psi'_{2}$ is the reversal of $\psi_{2}$.
\end{lemma}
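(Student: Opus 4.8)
The plan is to derive the lemma directly from the two preceding results: Proposition~\ref{theorem:reversedarcbwsystem} supplies the fact that the reversal $\psi_{2}'$ is a solution pair to $\fw{\hs}$, and Proposition~\ref{theorem:concatenationsolution} then certifies that the concatenation is a solution pair to $\fw{\hs}$. Assumption~\ref{assumption:theoreticalcondition} is engineered precisely so that its three items translate, one by one, into the hypotheses of these two propositions, so the argument is mostly a matter of matching hypotheses to conclusions and keeping careful track of how the reversal acts at the splice point.

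First I would show that $\psi_{2}'$ is a compact solution pair to $\fw{\hs}$. Because the backward-in-time construction of Definition~\ref{definition:bwhs} is an involution at the level of systems, i.e. $\bw{(\bw{\hs})} = \fw{\hs}$ (one checks $\bw{(\bw{C})} = \fw{C}$ and $\bw{(\bw{f})} = \fw{f}$, and that $\bw{(\bw{g})}$, $\bw{(\bw{D})}$ recover $\fw{g}$, $\fw{D}$), Proposition~\ref{theorem:reversedarcbwsystem} applied to the system $\bw{\hs}$ and its backward-in-time system $\bw{(\bw{\hs})} = \fw{\hs}$ states that the reversal $\psi_{2}'$ of $\psi_{2}$ is a compact solution pair to $\fw{\hs}$ if and only if $\psi_{2}$ is a compact solution pair to $\bw{\hs}$. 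By hypothesis $\psi_{2}$ is a solution pair to $\bw{\hs}$, and it is compact by item~1 of Assumption~\ref{assumption:theoreticalcondition}; hence $\psi_{2}'$ is a compact solution pair to $\fw{\hs}$.

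Next I would verify the three hypotheses of Proposition~\ref{theorem:concatenationsolution} for the ordered pair $(\psi_{1}, \psi_{2}')$. Hypothesis~1 (compactness of $\psi_{1}$) is item~1 of Assumption~\ref{assumption:theoreticalcondition}, which also guarantees that $\psi_{2}'$ can be concatenated to $\psi_{1}$ in the sense of Definition~\ref{definition:concatenation}. For Hypothesis~2, Definition~\ref{definition:reversedhybrdarc} gives $\phi_{2}'(0,0) = \phi_{2}(T_{2},J_{2})$ with $(T_{2},J_{2}) = \max\dom\psi_{2}$, while item~2 of the assumption gives $\phi_{1}(T_{1},J_{1}) = \phi_{2}(T_{2},J_{2})$ with $(T_{1},J_{1}) = \max\dom\psi_{1}$; chaining the two equalities produces the required state match $\phi_{1}(T_{1},J_{1}) = \phi_{2}'(0,0)$ at the splice point. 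For Hypothesis~3, I would note that the initial flow interval $I_{\psi_{2}'}^{0}$ of the reversal is the mirror image $\{T_{2}-s : s\in I_{\psi_{2}}^{J_{2}}\}$ of the final flow interval of $\psi_{2}$, so $I_{\psi_{2}'}^{0}$ has nonempty interior exactly when $I_{\psi_{2}}^{J_{2}}$ does; in that case item~3 of Assumption~\ref{assumption:theoreticalcondition} gives $(\phi_{2}(T_{2},J_{2}), \myinputt_{2}(T_{2},J_{2}))\in C$, and since $\phi_{2}'(0,0) = \phi_{2}(T_{2},J_{2})$ while Definition~\ref{definition:reversedhybrdarc} constrains the splice-time input only through $(\phi_{2}'(0,0), \myinputt_{2}'(0,0))\in\overline{\fw{C}}$, we may take $\myinputt_{2}'(0,0) = \myinputt_{2}(T_{2},J_{2})$, which yields $\psi_{2}'(0,0)\in C$ as needed (recall $\bw{C} = \fw{C}$).

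With all three hypotheses verified, Proposition~\ref{theorem:concatenationsolution} gives that $\psi = \psi_{1}\,|\,\psi_{2}'$ is a solution pair to $\fw{\hs}$, which is the claim. I expect the only genuine obstacle to be Hypothesis~3 of the concatenation: one must carefully reconcile the input value of the reversal at its initial hybrid time $(0,0)$, which Definition~\ref{definition:reversedhybrdarc} pins down only up to membership in the closure $\overline{\fw{C}}$, with the strict membership $\psi_{2}'(0,0)\in C$ demanded by Proposition~\ref{theorem:concatenationsolution}, using item~3 of Assumption~\ref{assumption:theoreticalcondition} to upgrade from the closure to $\fw{C}$ itself. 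All remaining steps are direct substitutions from the definitions of reversal and concatenation.
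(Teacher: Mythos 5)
Your proof is correct and follows essentially the same route as the paper's: Proposition~\ref{theorem:reversedarcbwsystem} certifies that $\psi_{2}'$ is a compact solution pair to $\fw{\hs}$ (the paper invokes the ``only if'' direction of its iff where you instead argue through the involution $\bw{(\bw{\hs})} = \fw{\hs}$), and the three items of Assumption~\ref{assumption:theoreticalcondition} are matched one-to-one to the three hypotheses of Proposition~\ref{theorem:concatenationsolution}, exactly as in the paper. If anything, your treatment of the splice point is more careful than the paper's: the paper asserts $\psi_{2}'(0,0) = \psi_{2}(T_{2},J_{2})$ as if the input component of the reversal at $(0,0)$ were forced, whereas Definition~\ref{definition:reversedhybrdarc} only constrains it to satisfy $(\phi_{2}'(0,0), \myinputt_{2}'(0,0))\in\overline{\fw{C}}$, and your explicit choice $\myinputt_{2}'(0,0) = \myinputt_{2}(T_{2},J_{2})$, legitimized by item~3 of Assumption~\ref{assumption:theoreticalcondition}, is exactly what is needed to obtain the strict membership $\psi_{2}'(0,0)\in C$.
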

\begin{proof}
	According to Definition \ref{definition:reversedhybrdarc}, the reversal operation can be performed on $\psi_{2}$ when $\psi_{2}$ is compact. The first condition stipulated in Assumption \ref{assumption:theoreticalcondition} ensures that $\psi_{2}$ is compact. As per Proposition \ref{theorem:reversedarcbwsystem}, the reversed trajectory $\psi'_{2}$ of $\psi_{2}$ constitutes a solution pair to $\fw{\hs}$.
	
	Next, we show that each condition in Proposition \ref{theorem:concatenationsolution} are satisfied.
	\begin{enumerate}[label = \arabic*)]
			\item In Assumption \ref{assumption:theoreticalcondition}, the first condition demonstrates the compactness of $\psi_{1}$. Consequently, this ensures the fulfillment of the first condition in Proposition \ref{theorem:concatenationsolution}.
			\item It is noted that the second condition of Assumption \ref{assumption:theoreticalcondition} implies $\phi_{1}(T_{1}, J_{1}) = \phi_{2}(T_{2}, J_{2})$. Given that $\phi'_{2}(0, 0) =  \phi_{2}(T_{2}, J_{2})$,  it follows that the equality $\phi_{1}(T_{1}, J_{1}) = \phi'_{2}(0, 0)$ holds. Hence, this establishes that $\psi_{1}$ and $\psi'_{2}$ meet the second condition specified in Proposition  \ref{theorem:concatenationsolution}.
			\item Given that $I_{\psi_{2}}^{J_{2}} = I_{\psi'_{2}}^{0}$ and $\psi_{2}(T_{2}, J_{2}) = \psi'_{2}(0, 0)$, the third condition in Assumption \ref{assumption:theoreticalcondition}  indicates that $\psi_{1}$ and $\psi'_{2}$ meet the requirements of the third condition in Proposition \ref{theorem:concatenationsolution}.
	\end{enumerate}
	Given that the conditions in Proposition \ref{theorem:concatenationsolution} are met, Proposition \ref{theorem:concatenationsolution} ensures that the concatenation of $\psi'_{2}$ to $\psi_{1}$ is a solution pair to ~$\fw{\hs}$.
\end{proof}
Lemma \ref{lemma:reverseconcatenation} is exploited by our forthcoming HyRRT-Connect algorithm when detecting overlaps between $\fw{\stree}$ and $\bw{\stree}$.
\begin{example}[(Actuated bouncing ball system in Example~\ref{example:bouncingball}, revisited)]
	Consider an example solution pair $\psi_{1}$ to the actuated bouncing ball system $\fw{\hs}$ as defined in Example \ref{example:bouncingball} and depicted in Figure \ref{fig:fwsol}, along with an example solution pair $\psi_{2}$ to $\bw{\hs}$ as defined in Example \ref{example:bouncingballbw} and shown in Figure \ref{fig:bwsol}. The reversal of $\mystatet_{2}$, denoted $\mystatet'_{2}$, as defined in Definition~\ref{definition:reversedhybrdarc}, is computed and illustrated in Figure~\ref{fig:revsol}. The concatenation of $\mystatet'_{2}$ to $\mystatet_{1}$ is computed and illustrated in Figure~\ref{fig:recsol}.
	
	Next, we justify that $\mystatet_{1}$ and $\mystatet_{2}$ satisfy Assumption~\ref{assumption:theoreticalcondition}. The domains of $\mystatet_{1}$ and $\mystatet_{2}$ are both compact, as illustrated in Figure \ref{fig:fwsol} and Figure \ref{fig:bwsol}, confirming that the first item in Assumption \ref{assumption:theoreticalcondition} is met. As depicted in Figure \ref{fig:recsol}, we find that $\mystatet_{1}(T_{1}, J_{1}) = \mystatet_{2}(T_{2}, J_{2})$ where $(T_{1}, J_{1}) = \max \dom \mystatet_{1} \approx (0.33, 0)$ and $(T_{2}, J_{2}) = \max \dom \mystatet_{2} \approx (2.78, 1)$. This satisfies the second item in Assumption \ref{assumption:theoreticalcondition}. Additionally, since $I_{\psi_{1}}^{J_{1}} = I_{\psi_{1}}^{0} = [0, 0.33]$ and $I_{\psi_{2}}^{J_{2}} = I_{\psi_{2}}^{1} = [1.42, 2.78]$ both have nonempty interiors, and $\mystatet_{1}(T_{1}, J_{1}) = (13.45, -3.36) \approx \mystatet_{2}(T_{2}, J_{2})$, it follows that $\psi(T_{1}, J_{1})\in C$. Thus, the last item in Assumption \ref{assumption:theoreticalcondition} is satisfied.
	\begin{figure}[htbp]
		\centering
		\subfigure[The state trajectory of $\psi_{1}$.\label{fig:fwsol}]{\includegraphics[width = 0.48 \columnwidth]{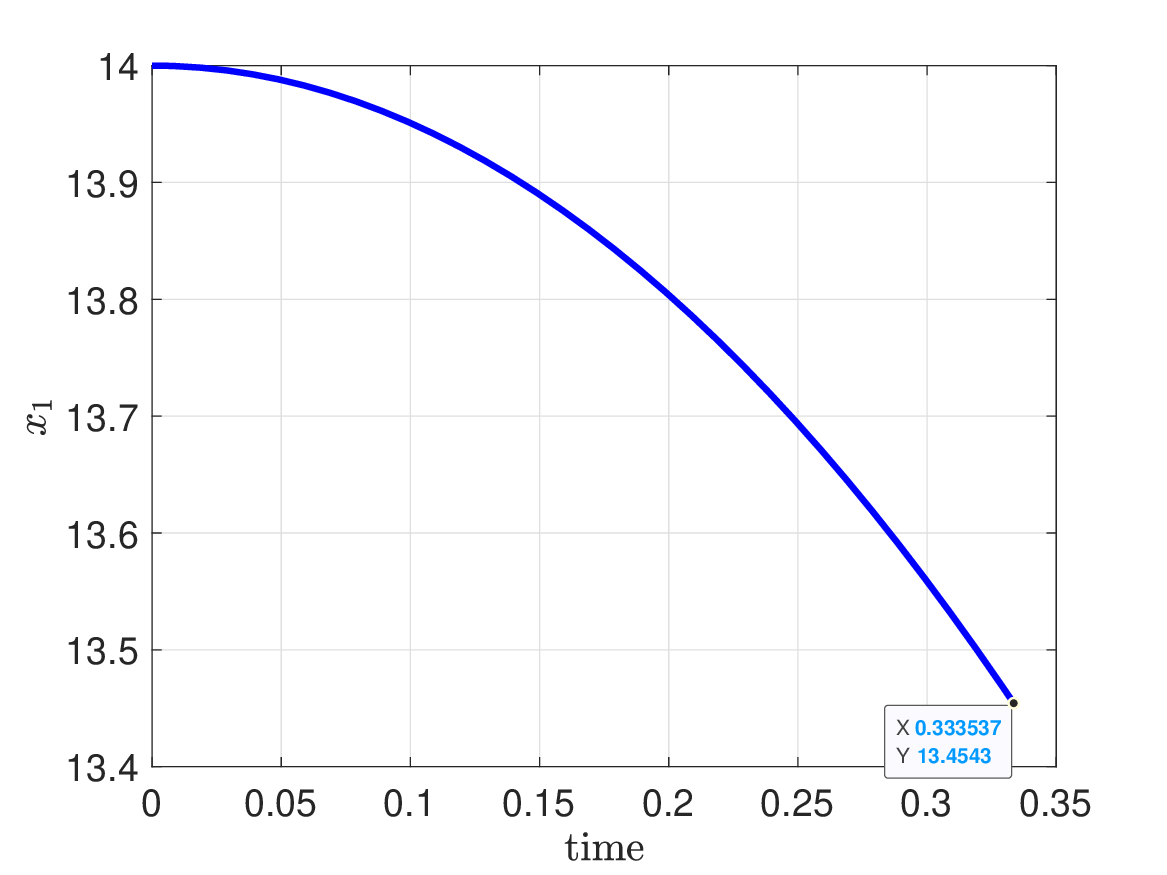}}
		\subfigure[The state trajectory of $\psi_{2}$.\label{fig:bwsol}]{\includegraphics[width = 0.48 \columnwidth]{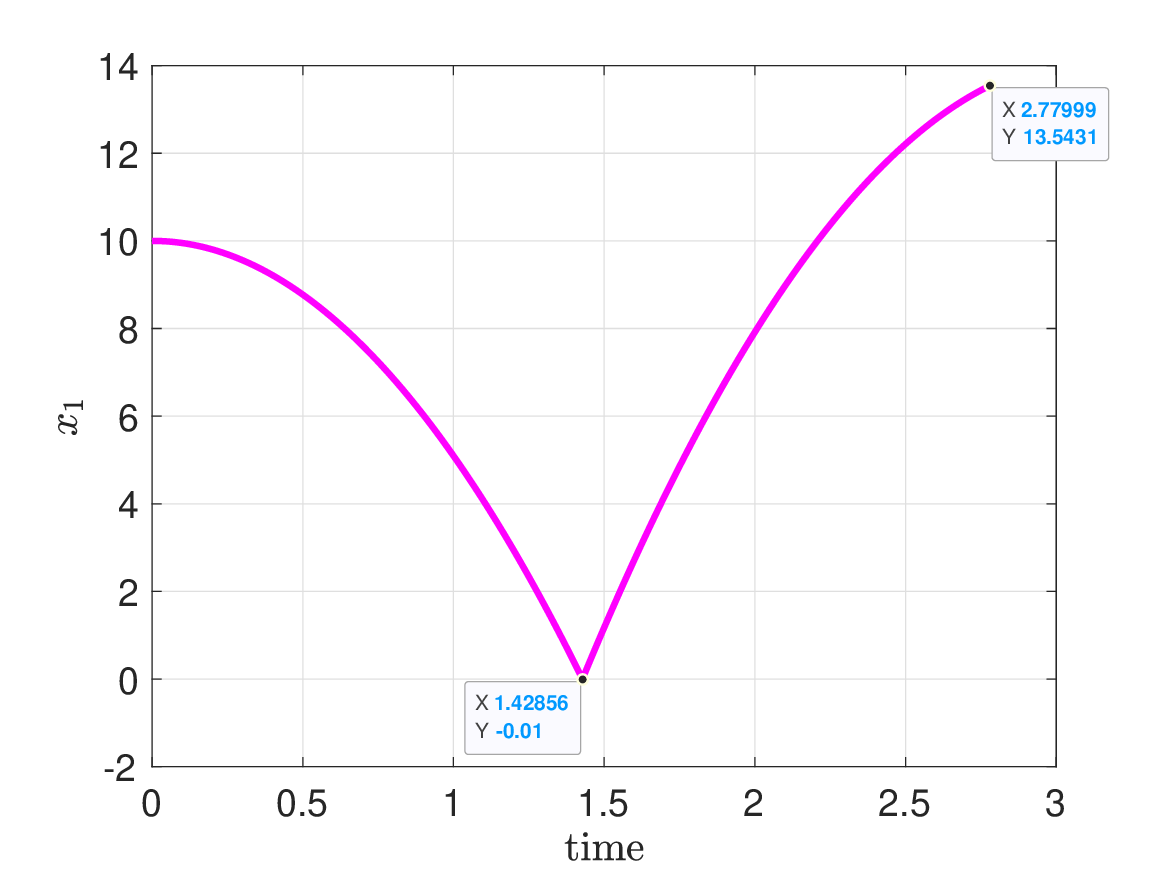}}
		\subfigure[The reversal of the state trajectory of $\psi_{2}$.\label{fig:revsol}]{\includegraphics[width = 0.48 \columnwidth]{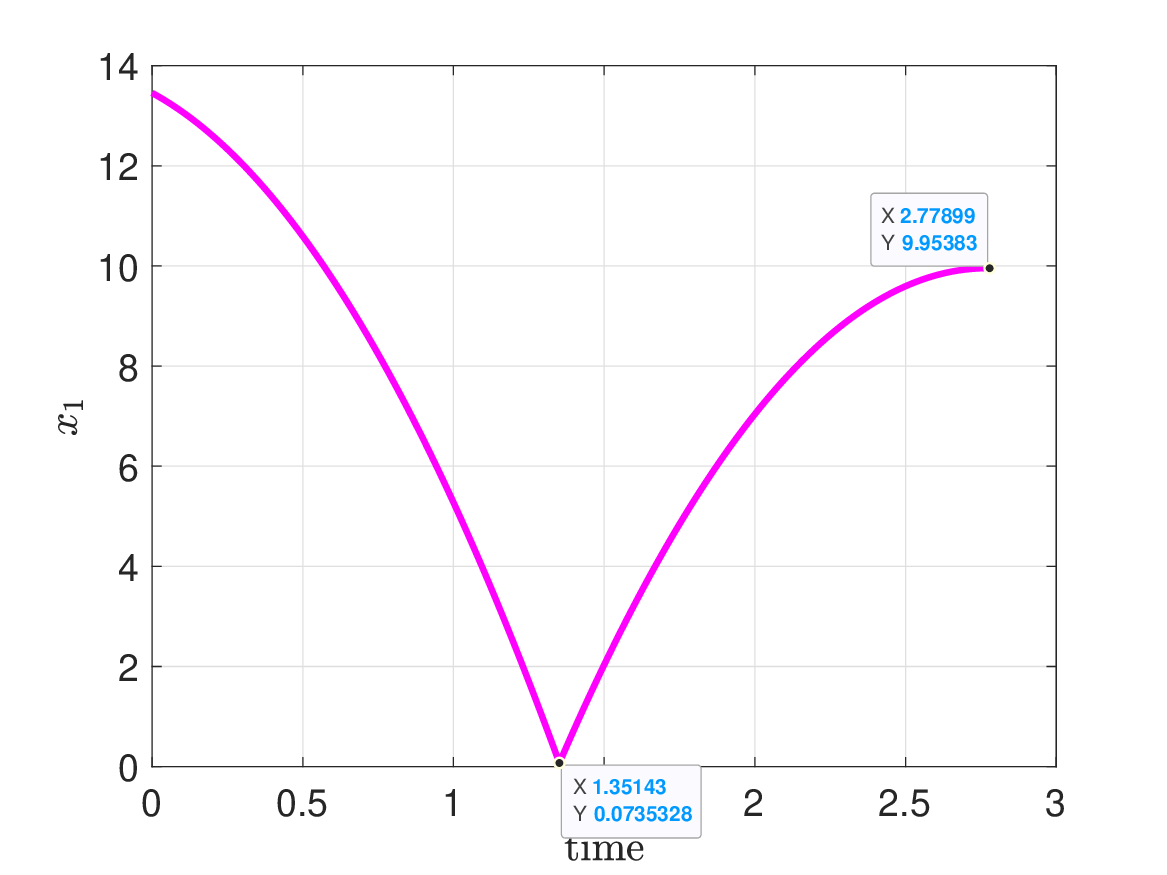}}
		\subfigure[The concatenation of the reversal of  $\psi_{2}$ to $\psi_{1}$.\label{fig:recsol}]{\includegraphics[width = 0.48 \columnwidth]{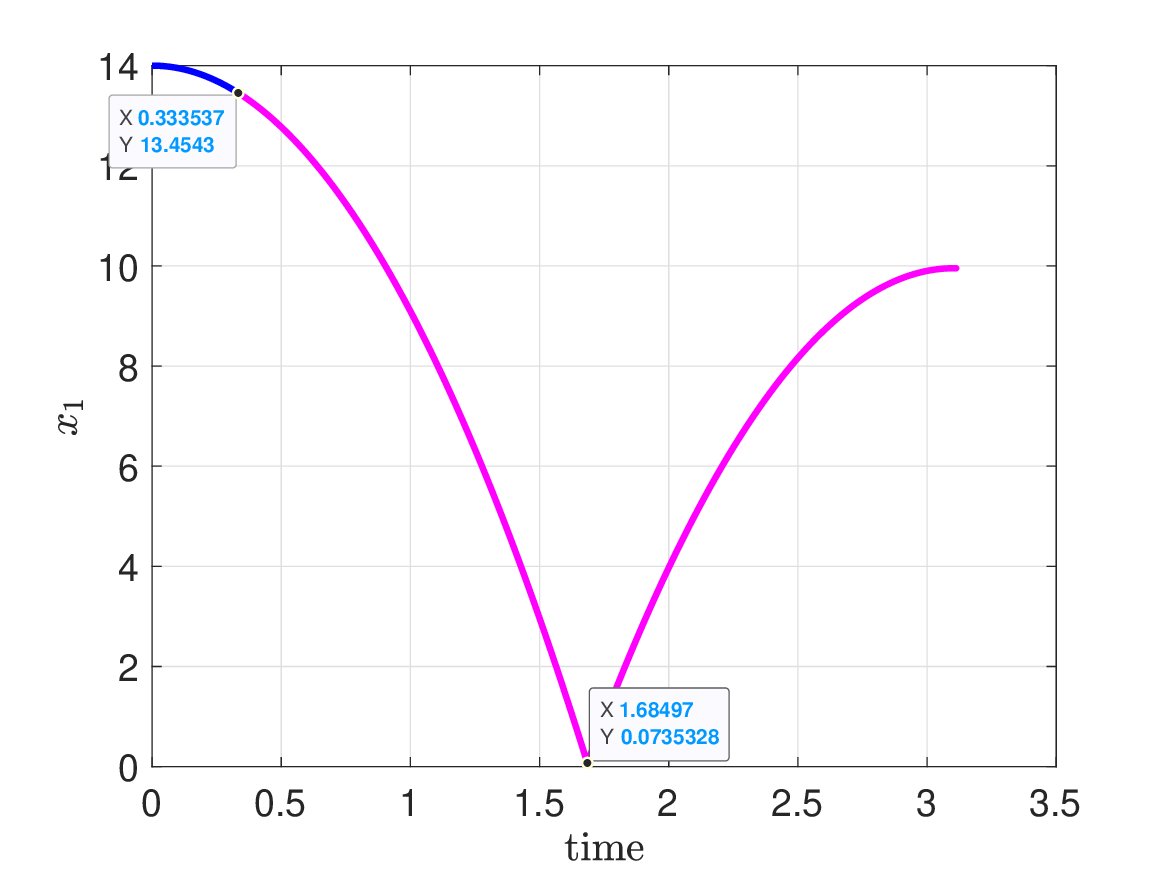}}
		\caption{Examples demonstrating the trajectories of reversal and concatenation operations.}
	\end{figure}
\end{example}
	\ifbool{conf}{\vspace{-0.4cm}}{}
\subsection{HyRRT-Connect Algorithm}
	\ifbool{conf}{\vspace{-0.3cm}}{}
The proposed algorithm is given in Algorithm~\ref{algo:bihyrrt}. The inputs of Algorithm~\ref{algo:bihyrrt} are the problem $\mathcal{P} = (X_{0}, X_{f}, X_{u}, (\fw{C}, \fw{f}, \fw{D}, \fw{g}))$, the backward-in-time hybrid system $\bw{\hs}$ obtained from (\ref{model:generalhybridbackwardsystem}), the input libraries $\fw{\il}$ and  $\bw{\il}$, two parameters $\fw{p}_{n}\in (0, 1)$ and $\bw{p}_{n}\in (0, 1)$, which tune the probability of proceeding with the flow regime or the jump regime during the forward and backward construction, respectively, an upper bound $K\in \mathbb{N}_{>0}$ for the number of iterations to execute, and four tunable sets acting as constraints in finding a closest vertex to $x_{rand}$: $\fw{X}_{c}\supset \overline{C^{\text{fw}'}}$, $\fw{X}_{d}\supset \overline{D^{\text{fw}'}}$, $\bw{X}_{c}\supset \overline{C^{\text{bw}'}}$ and $\bw{X}_{d}\supset \overline{D^{\text{bw}'}}$\pnc{,} where $C^{\text{fw}'}$, $C^{\text{bw}'}$, $D^{\text{fw}'}$ and $D^{\text{bw}'}$ are defined as in  (\ref{equation:Cprime}) and (\ref{equation:Dprime}). \textbf{Step} 1 in Section \ref{section:algorithmoverview} corresponds to the function calls $\fw{\stree}.\texttt{init}$ and~$\bw{\stree}.\texttt{init}$ in line 1 of Algorithm \ref{algo:bihyrrt}. The construction of $\fw{\stree}$ in \textbf{Step}~2 is implemented in lines 3 - 10. The construction of $\bw{\stree}$ in \textbf{Step}~2 is implemented in lines 11 - 18. The solution checking in \textbf{Step} 3 is executed depending on the return of the function call $\texttt{extend}$ and will be further discussed in\endnote{The solution checking process in \textbf{Step} 3 is not reflected in Algorithm \ref{algo:bihyrrt}. This omission is intentional, as practitioners might devise varied termination conditions for the HyRRT-Connect algorithm. } Section \ref{section:solutionchecking}. Each function in Algorithm \ref{algo:bihyrrt} is defined next.
\begin{algorithm}[htbp]
	\caption{HyRRT-Connect algorithm}
	\label{algo:bihyrrt}
	\hspace*{\algorithmicindent} \textbf{Input}: $X_{0}, X_{f}, X_{u}, \fw{\hs} = (\fw{C}, \fw{f}, \fw{D}, \fw{g}),\newline \bw{\hs} = (\bw{C}, \bw{f}, \bw{D}, \bw{g}), (\fw{\mathcal{U}}, \bw{\mathcal{U}}), \fw{p}_{n},  \bw{p}_{n} \in (0, 1)$, $K\in \mathbb{N}_{>0}$, $\fw{X}_{c}\supset \overline{C^{\text{fw}'}}$, $\fw{X}_{d}\supset \overline{D^{\text{fw}'}}$, $\bw{X}_{c}\supset \overline{C^{\text{bw}'}}$ and $\bw{X}_{d}\supset \overline{D^{\text{bw}'}}$.\\
	\hspace*{\algorithmicindent} \textbf{Output:} motion plan $\psi$ or \texttt{Failure} signal.
	\begin{algorithmic}[1]
		\State $\fw{\stree}.\texttt{init}(X_{0})$, $\bw{\stree}.\texttt{init}(X_{f})$
		\For{$k = 1$ to $K$}
		\State randomly select a real number $\fw{r}$ from $[0, 1]$.
		\If{$\fw{r}\leq \fw{p}_{n}$}
		\State $\fw{x}_{rand}\leftarrow \texttt{random\_state}(\overline{C^{\text{fw}'}})$.
		\State $\texttt{is\_extended\_fw}\leftarrow\texttt{extend}(\fw{\stree}, \fw{x}_{rand},\newline (\fw{\il}_{C}, \fw{\il}_{D}), \fw{\hs}, X_{u}, \fw{X}_{c})$.
		\Else
		\State $\fw{x}_{rand}\leftarrow \texttt{random\_state}(D^{\text{fw}'})$.
		\State $\texttt{is\_extended\_fw}\leftarrow\texttt{extend}(\fw{\stree}, \fw{x}_{rand},\newline (\fw{\il}_{C}, \fw{\il}_{D}), \fw{\hs}, X_{u}, \fw{X}_{d})$.
		\EndIf
		\State \hspace{-0.3cm}randomly select a real number $\bw{r}$ from $[0, 1]$.
		\If{$\bw{r}\leq \bw{p}_{n}$}
		\State $\bw{x}_{rand}\leftarrow \texttt{random\_state}(\overline{C^{\text{bw}'}})$.
		\State $\texttt{is\_extended\_bw}\leftarrow\texttt{extend}(\bw{\stree}, \bw{x}_{rand},\newline (\bw{\il}_{C}, \bw{\il}_{D}), \bw{\hs}, X_{u}, \bw{X}_{c})$.
		\Else
		\State $\bw{x}_{rand}\leftarrow \texttt{random\_state}(D^{\text{bw}'})$.
		\State $\texttt{is\_extended\_bw}\leftarrow\texttt{extend}(\bw{\stree}, \bw{x}_{rand},\newline (\bw{\il}_{C}, \bw{\il}_{D}), \bw{\hs}, X_{u}, \bw{X}_{d})$.
		\EndIf
		\If{$\texttt{is\_extended\_fw} = \texttt{Advanced}$ or $\texttt{is\_extended\_bw} = \texttt{Advanced}$}
			\State $(\texttt{is\_a\_motion\_found}, \psi)\leftarrow\texttt{\hspace{2cm}solution\_check}(\mathcal{T}, X_{0}, X_{f}, C)$.
			\If{\texttt{is\_a\_motion\_found} = \texttt{true}}
			\State \Return $\psi$.
			\EndIf
		\EndIf
		\EndFor
		\State \Return \texttt{Failure}.
	\end{algorithmic}
\end{algorithm}
\begin{algorithm}[htbp]
	\caption{Extend function}
	\label{algo:extend}
	\hspace*{\algorithmicindent} \textbf{Input: $\mathcal{T}, x, (\mathcal{U}_{C}, \mathcal{U}_{D}), \mathcal{H}, X_{u}, X_{*}$}\\
		\hspace*{\algorithmicindent} \textbf{Output:} $\texttt{Advanced}$ or $\texttt{Trapped}$ signal.
	\begin{algorithmic}[1]
		\Function{extend}{$(\mathcal{T}, x, (\mathcal{U}_{C}, \mathcal{U}_{D}), \mathcal{H}, X_{u}, X_{*})$}
		\State $v_{cur}\leftarrow \texttt{nearest\_neighbor}(x, \mathcal{T}, \mathcal{H}, X_{*})$;
		\State $(\texttt{is\_a\_new\_vertex\_generated}, x_{new}, \psi_{new} ) \leftarrow \texttt{new\_state}(v_{cur}, (\mathcal{U}_{C}, \mathcal{U}_{D}), \mathcal{H}, X_{u})$
		\If {$\texttt{is\_a\_new\_vertex\_generated} = \texttt{true}$}
		\State $v_{new} \leftarrow \mathcal{T}.\texttt{add\_vertex}(x_{new})$;
		\State $\mathcal{T}.\texttt{add\_edge}(v_{cur}, v_{new}, \psi_{new})$;
		\State \Return $\texttt{Advanced}$;
		\EndIf
		\State \Return $\texttt{Trapped}$;
		\EndFunction
	\end{algorithmic}
\end{algorithm}
\subsubsection{$\mathcal{T}.\texttt{init}(X)$:} 
	The $\mathcal{T}.\texttt{init}$ function initializes the search tree $\mathcal{T} = (V, E)$ by randomly selecting points from set $X$. The set $X$ can either be $X_{0}$ or $X_{f}$.  For each sampling point $x_{0}\in X$, a corresponding vertex $v_{0}$ is added to $V$, while no edges are added to $E$ at this stage.
\subsubsection{$x_{rand}$$\leftarrow$$\texttt{random\_state}(S)$:} 
	The function call $\texttt{random\_state}$ randomly selects a point from the set $S\subset \mathbb{R}^{n}$. Rather than selecting a point from $\overline{C'}\cup D'$, $\texttt{random\_state}$ is designed to select points from $\overline{C'}$ and $D'$ separately depending on the value of $r$ . The reason is that if $\overline{C'}$ (or $D'$) has zero measure while $D'$ (respectively, $\overline{C'}$) does not, the probability that the point selected from $\overline{C'}\cup D'$ lies in $\overline{C'}$ (respectively, $D'$) is zero, which would prevent establishing probabilistic completeness discussed in Section \ref{section:parallelcomputation}.
\subsubsection{$v_{cur}$$\leftarrow$$\texttt{nearest\_neighbor}$$(x_{rand}, $$\mathcal{T}, $$\mathcal{H}, X^{\Delta}_{*})$:} 
The function call $\texttt{nearest\_neighbor}$ searches for a vertex $v_{cur}$ in the search tree $\mathcal{T} = (V, E)$ such that its associated state value has minimal distance to $x_{rand}$. This function is implemented by solving the following optimization problem over $X^{\Delta}_{\star}$, where $\star$ is either $c$ or $d$ and $\Delta$ is either $\texttt{fw}$ or $\texttt{bw}$.
\begin{problem}
	\label{problem:nearestneighbor}
	Given a hybrid system $\mathcal{H} = (C, f, D, g)$, $x_{rand}\in \myreals[n]$, and a search tree $\mathcal{T} = (V, E)$, solve
	$$
	\begin{aligned}
	\argmin_{v\in V}& \quad |\overline{x}_{v} -  x_{rand}|\\
	\textrm{s.t.}& \quad\overline{x}_{v} \in X^{\Delta}_{\star}.
	\end{aligned}
	$$
\end{problem}
The data of Problem \ref{problem:nearestneighbor} comes from the arguments of the $\texttt{nearest\_neighbor}$ function call. This optimization problem can be solved by traversing all the vertices in $V$.
\subsubsection{$(\texttt{is\_a\_new\_vertex\_generated}, x_{new}, \psi_{new} ) \leftarrow \texttt{new\_state}(v_{cur}, (\mathcal{U}_{C}, \mathcal{U}_{D}), \mathcal{H} = (C, f, D, g) , X_{u})$:} 	\label{section:newstate}
If $\overline{x}_{v_{cur}}\in \overline{C'}\backslash D'$ ($\overline{x}_{v_{cur}}$$\in$$D'\backslash \overline{C'}$), the function call $\texttt{new\_state}$ generates a new solution pair $\psi_{new}$ to the hybrid system $\mathcal{H}$ starting from $\overline{x}_{v_{cur}}$ by applying an input signal $\tilde{u}$ (respectively, an input value $u_{D}$) randomly selected from $\mathcal{U}_{C}$ (respectively, $\mathcal{U}_{D}$). 
If $\overline{x}_{v_{cur}}$$\in\overline{C'}\cap D'$, then this function generates $\psi_{new}$ by randomly selecting flows or jump. The final state of $\psi_{new}$ is denoted as $x_{new}$.
Note that the choices of inputs are random. Some RRT variants choose the optimal input that drives $x_{new}$ closest to $x_{rand}$. However, \cite{kunz2015kinodynamic} proves that such a choice makes the RRT algorithm probabilistically incomplete. 

After $\psi_{new}$ and $x_{new}$ are generated, the function $\texttt{new\_state}$ checks if there exists $(t, j)\in \dom \psi_{new}$ such that $\psi_{new}(t, j)\in X_{u}$. In such a case, $\psi_{new}$ intersects with the unsafe set and $\texttt{new\_state}$ returns $\texttt{false}$. Otherwise, this function returns $\texttt{true}$.
After $\psi_{new}$ and $x_{new}$ are generated, the function $\texttt{new\_state}$ evaluates if $\psi_{new}$ is trivial. If $\psi_{new}$ is trivial, addition of $\psi_{new}$ to $\mathcal{T}$ will not generate additional solutions already included in $\mathcal{T}$ and is, hence, unnecessary, setting  $\texttt{is\_a\_new\_vertex\_generated}$ to $\texttt{false}$. The function call $\texttt{new\_state}$ then returns. If $\psi_{new}$ is not trivial, the function $\texttt{new\_state}$ finds $(t, j)\in \dom \psi_{new}$ such that $\psi_{new}(t, j)\in X_{u}$, implying intersection between $\psi_{new}$ and the unsafe set $X_{u}$, in which case it sets $\texttt{is\_a\_new\_vertex\_generated}$ to $\texttt{false}$. If neither condition is met, $\texttt{is\_a\_new\_vertex\_generated}$ is set to $\texttt{true}$.
\subsubsection{$v_{new}\leftarrow\mathcal{T}.\texttt{add\_vertex}(x_{new})$ and  $\mathcal{T}.\texttt{add\_edge}(v_{cur}, v_{new}, \psi_{new})$:} 
The function call $\mathcal{T}.\texttt{add\_vertex}(x_{new})$ adds a new vertex $v_{new}$ associated with $x_{new}$ to $\mathcal{T}$ and returns $v_{new}$. The function call $\mathcal{T}.\texttt{add\_edge}(v_{cur}, v_{new}, \psi_{new})$ adds a new edge $e_{new} = (v_{cur}, v_{new})$ associated with $\psi_{new}$  to $\mathcal{T}$. 
\section{Motion Plan Check and Reconstruction}\label{section:solutionchecking}
In Algorithm \ref{algo:bihyrrt}, when a call to the function $\texttt{extend}$ using $\fw{\hs}$ (Line 6 and Line 9) or $\bw{\hs}$ (Line 14 and Line 17) returns $\texttt{advanced}$, indicating that at least one new vertex has been added to the search tree, a solution-checking process (the function call $\texttt{solution\_check}$ in Line 20) is executed to determine whether this addition results in a valid motion plan. After introducing the function calls used to construct the search trees in Section~\ref{section:hyrrtconnect}, we discuss the function call $\texttt{solution\_check}$ in this section.

The following two scenarios are identified for which a motion plan can be constructed by utilizing one path from $\fw{\stree}$ and another one from $\bw{\stree}$:
\begin{enumerate}[label=S\arabic*)]
	\item A vertex in $\fw{\stree}$ is associated with the same state in the flow set as some vertex in $\bw{\stree}$.
	\item A vertex in $\fw{\stree}$ is associated with a state such that a forward-in-hybrid time jump from such state results in the state associated with some vertex in $\bw{\stree}$, or conversely, a vertex in $\bw{\stree}$ is associated with a state such that a backward-in-hybrid time jump from such state results in the state associated with some vertex in $\fw{\stree}$.
\end{enumerate}
These scenarios provide solution pairs that can be used to construct a motion plan by reversing a solution pair in $\bw{\stree}$ and concatenating its reversal to a solution pair in $\fw{\stree}$, as guaranteed by Lemma \ref{lemma:reverseconcatenation}. Each of these scenarios is evaluated by the function call $\texttt{solution\_check}$ whenever an $\texttt{Advanced}$ signal is returned by the $\texttt{extend}$ function. However, the random selection of the \pn{inputs} prevents the exact satisfaction of S1. This may lead to a discontinuity along the flow in the resulting motion plan. To address this issue, a reconstruction process is introduced below. This process propagates forward in hybrid time from the state associated with the vertex identified in $\fw{\stree}$. In addition, neglecting approximation errors due to numerical computation, it is typically possible to solve for an exact input at a jump from one state to an other, as required in S2. Next, we present an implementation for identifying two paths from $\fw{\stree}$ and $\bw{\stree}$, respectively, such that the solution pairs associated with these paths satisfy either S1 or S2, thereby leading to a motion plan for the given motion planning problem.
\subsection{Same State Associated with Vertices in $\fw{\stree}$ and $\bw{\stree}$ in~S1}
\label{section:checksolution_stateequal}
The function call $\texttt{solution\_check}$ first checks if there exists a path
\begin{equation}\label{equation:fwpath}
\begin{aligned}\fw{p} &:= ((\fw{v}_{0}, \fw{v}_{1}), (\fw{v}_{1}, \fw{v}_{2}), ..., (\fw{v}_{m - 1}, \fw{v}_{m})) \\
&=: (\fw{e}_{0}, \fw{e}_{1}, ..., \fw{e}_{m - 1})
\end{aligned}
\end{equation}
in $\fw{\stree}$ (see (\ref{eq:path}) for a definition), where $m\in\nnumbers$, and a path 
\begin{equation}\label{equation:bwpath}
\begin{aligned}
\bw{p} &:= ((\bw{v}_{0}, \bw{v}_{1}), (\bw{v}_{1}, \bw{v}_{2}), ..., (\bw{v}_{n - 1}, \bw{v}_{n}))\\
& = : (\bw{e}_{0}, \bw{e}_{1}, ..., \bw{e}_{n - 1})
\end{aligned}
\end{equation}
in $\bw{\stree}$, where $n\in\nnumbers$, satisfying the following conditions:
\begin{enumerate}[label=C\arabic*)]
	\item $\overline{x}_{\fw{v}_{0}} \in X_{0}$\pn{, where $\fw{v}_{0}$ is the first vertex in $\fw{p}$,}
	\item for \pn{each} $i \in \{0, 1, ..., m - 2\}$, if $\overline{\psi}_{\fw{e}_{i}}$ and $\overline{\psi}_{\fw{e}_{i + 1}}$ are both purely continuous, then $\overline{\psi}_{\fw{e}_{i + 1}}(0, 0)\in \fw{C}$,
	\item $\overline{x}_{\bw{v}_{0}} \in X_{f}$\pn{, where $\bw{v}_{0}$ is the first vertex in $\bw{p}$,}
	\item for \pn{each} $i \in \{0, 1, ..., n - 2\}$, if $\overline{\psi}_{\bw{e}_{i}}$ and $\overline{\psi}_{\bw{e}_{i + 1}}$ are both purely continuous, then $\overline{\psi}_{\bw{e}_{i + 1}}(0, 0)\in \bw{C}$,
	\item $\overline{x}_{\fw{v}_{m}} = \overline{x}_{\bw{v}_{n}}$, \pn{where $\fw{v}_{m}$ is the last vertex in $\fw{p}$ and $\bw{v}_{n}$ is the last vertex in $\bw{p}$,}
	\item if $\overline{\psi}_{\fw{e}_{m - 1}}$ and $\overline{\psi}_{\bw{e}_{n - 1}}$ are both purely continuous, 
	then $\overline{\psi}_{\bw{e}_{n - 1}}(\bw{T}, 0) \in \fw{C}$, where $(\bw{T}, 0) = \max\dom \overline{\psi}_{\bw{e}_{n - 1}}$.
\end{enumerate}
Conditions C1-C6 are derived directly from the problem setting in Problem~\ref{problem:motionplanning}: C1 ensures that the path starts from $X_{0}$ (item 1 in Problem \ref{problem:motionplanning}); C2, C4, C5, and C6 collectively guarantee that the solution pair associated with the path satisfies the given hybrid dynamics (item 2 in Problem~\ref{problem:motionplanning}); C3 ensures that the path ends at $X_{f}$ (item 3 in Problem~\ref{problem:motionplanning}). If $\texttt{solution\_check}$ is able to find a path $\fw{p}$ in $\fw{\stree}$ and a path $\bw{p}$ in $\bw{\stree}$ satisfying C1-C6, then a motion plan to $\mathcal{P}$ can be constructed as $\fw{\psi}|\bwr{\psi}$, where, for simplicity, $\fw{\psi} = (\fw{\mystatet}, \fw{\myinputt}) =: \tilde{\psi}_{\fw{p}}$ denotes the solution pair associated with the path $\fw{p}$ in (\ref{equation:fwpath}) and is referred to as a \emph{forward partial motion plan}. Similarly, $\bw{\psi} = (\bw{\mystatet}, \bw{\myinputt}) =: \tilde{\psi}_{\bw{p}}$ denotes the solution pair associated with the path $\bw{p}$ in (\ref{equation:bwpath})  and is referred to as a \emph{backward partial motion plan}.  With $\bwr{\psi}$ denoting the reversal of $\bw{\psi}$, the result $\fw{\psi}|\bwr{\psi}$ is guaranteed to satisfy each item in Problem \ref{problem:motionplanning}, as follows:
\begin{enumerate}[label=\arabic*)]
	\item By C1, it follows that $\fw{\psi}|\bwr{\psi}$ starts from $X_{0}$. Namely, item 1 in Problem \ref{problem:motionplanning} is satisfied.
	\item Due to C2 (respectively, C4), by iterative applying Proposition \ref{theorem:concatenationsolution} to each pair of $\overline{\psi}_{\fw{e}_{i}}$ and $\overline{\psi}_{\fw{e}_{i + 1}}$ (respectively, $\overline{\psi}_{\bw{e}_{i}}$ and $\overline{\psi}_{\bw{e}_{i + 1}}$) where $i\in \{0, 1, ..., m - 2\}$ (respectively, $i\in \{0, 1, ..., n - 2\}$), it follows that $\fw{\psi}$ (respectively, $\bw{\psi}$) is a solution pair to $\fw{\hs}$ (respectively, $\bw{\hs}$). Furthermore, given C5 and C6, Lemma~\ref{lemma:reverseconcatenation} establishes that $\fw{\psi}|\bwr{\psi}$ is a solution pair to $\fw{\hs}$.

	\item C3 ensures that $\fw{\psi}|\bwr{\psi}$ ends within $X_{f}$. This confirms the satisfaction of item 3 in Problem \ref{problem:motionplanning}.
	\item For any edge $e\in\fw{p}\cup\bw{p}$,  the trajectory $\overline{\psi}_{e}$ avoids intersecting the unsafe set as a result of the exclusion of solution pairs that intersect the unsafe set in the function call $\texttt{new\_state}$. Therefore, item 4 in Problem~\ref{problem:motionplanning} is satisfied.
\end{enumerate}
Since each requirement in Problem \ref{problem:motionplanning} is satisfied, it is established that $\fw{\psi}|\bwr{\psi}$ is a motion plan to $\mathcal{P}$. 

	In practice, guaranteeing C5 above is not possible due to numerical error. Thus, given $\delta > 0$ representing the tolerance associated with C5, the function call $\texttt{solution\_check}$ implements C5 as
	\begin{equation}
	\label{equation:tolerance}
	|\overline{x}_{\fw{v}_{m}} - \overline{x}_{\bw{v}_{n}}|\leq	\delta\sj{,}
	\end{equation} leading to a potential discontinuity during the flow.
	\subsection{Reconstruction Process}
	To smoothen the discontinuity associated with~(\ref{equation:tolerance}), we propose a reconstruction process that
		given the hybrid input $\bw{\myinputt}$ of $\bw{\psi}$ identified in S1, requires computing the \emph{maximal} solution pair to an auxiliary system, denoted $(\simu{\mystatet}, \simu{\myinputt})$, as follows. Using the initial condition and hybrid input
\begin{enumerate}[label= R\arabic*)]
				\item $\simu{\mystatet}(0, 0) = \fw{\mystatet}(\fw{T}, \fw{J})$, where $\fw{\mystatet}$ is the state trajectory of $\fw{\psi}$ identified in S1 and $(\fw{T}, \fw{J}) = \max\dom \fw{\mystatet}$,
				\item $\simu{\myinputt}(t, j) = \bwr{\myinputt}(t, j)$ for each $(t, j)\in \dom \simu{\myinputt}$; see item 2 in Definition~\ref{definition:reversedhybrdarc} for the reversal of a hybrid input.
			\end{enumerate}
We generate $\simu{\mystatet}$ via the following auxiliary system, denoted $\recon{\hs}$:
\begin{equation}
\recon{\hs}: \left\{              
\begin{aligned}               
\dot{x} & = \recon{f}(x, \bwr{\myinputt}(t, j))     &(t, j)\in \recon{C}\\                
x^{+} & =  \recon{g}(x, \bwr{\myinputt}(t, j))      &(t, j)\in \recon{D}\\                
\end{aligned}   \right. 
\label{model:reconhybridsystem}
\end{equation}where$x\in \myreals[n]$ is the state,
\begin{enumerate}
\item $\recon{D} := \{(t, j)\in \dom \bwr{\myinputt}: (t, j + 1)\in  \dom \bwr{\myinputt}\}$;
\item $\recon{C} := \overline{\dom \bwr{\myinputt}\backslash \recon{D}}$;
\item $\recon{g}(x, u) := g(x, u)$ for all\endnote{Note that the flow map $f$ and the jump map $g$ in (\ref{model:generalhybridsystem}) are defined on $\mathbb{R}^{n}\times\mathbb{R}^{m}$.} $(x, u)\in \mathbb{R}^{n}\times\mathbb{R}^{m}$;
\item $\recon{f}(x, u) := f(x, u)$ for all $(x, u)\in \mathbb{R}^{n}\times\mathbb{R}^{m}$.
\end{enumerate}
	\begin{remark}
			The definitions of $\recon{C}$ and $\recon{D}$ indicate that $\simu{\mystatet}$ follows the flow and jump of $\bwr{\myinputt}$. Condition R1 ensures that the reconstructed motion plan begins at the final state of the forward partial motion plan, effectively eliminating any discontinuity. Given that $\simu{\mystatet}$ is maximal, it follows that
			\begin{equation}\label{equation:equalhtd}
				\dom \simu{\mystatet} = \dom \bwr{\myinputt} = \dom \bwr{\mystatet}.
				\end{equation}
	\end{remark}

	\subsubsection{Convergence of $\simu{\mystatet}$ to $X_{f}$} 
	We first establish a dependency between $|\simu{\mystatet}(\simu{T}, \simu{J}) - \bw{\mystatet}(0, 0)|$ and the tolerance $\delta$ in (\ref{equation:tolerance}), where $(\simu{T}, \simu{J}) = \max\dom \simu{\mystatet}$.
		The following assumption is imposed on the flow map $f$ of the hybrid system $\mathcal{H}$ in (\ref{model:generalhybridsystem}). Assuming that the flow map $f$ is Lipschitz continuous ensures that compact solutions governed by the differential equation $\dot{x} = f(x, u)$, when starting close to each other, have their endpoint distances bounded exponentially in terms of their initial deviation, with a growth rate determined by the Lipschitz constant.
		\begin{assumption}
			\label{assumption:flowlipschitz}
			The flow map $f$ is Lipschitz continuous. In particular, there exist $K^{f}_{x}, K^{f}_{u}\in \mathbb{R}_{>0}$ such that, for each $(x_{0}, x_{1}, u_{0}, u_{1}),$ such that $(x_{0}, u_{0}) \in C$, $(x_{1}, u_{0}) \in C$, and $(x_{0}, u_{1}) \in C$,
				$$
					\begin{aligned}
					|f(x_{0}, u_{0}) - f(x_{1}, u_{0})|&\leq K^{f}_{x}|x_{0} - x_{1}|\\
					|f(x_{0}, u_{0}) - f(x_{0}, u_{1})|&\leq K^{f}_{u}|u_{0} - u_{1}|.
					\end{aligned}
				$$
		\end{assumption}
		The following lemma is a slight modification of \cite[Lemma 2]{kleinbort2018probabilistic}. It ensures that, for the continuous pieces of a motion plan, the reconstructed solution pair remains close to the backward partial motion plan.
			\begin{lemma}\label{lemma:flowbound}
				Given a hybrid system $\hs$ satisfying Assumption \ref{assumption:flowlipschitz}, if there exists a purely continuous solution pair $\bw{\psi} = (\bw{\mystatet}, \bw{\myinputt})$ to $\bw{\hs}$, then the reconstructed solution $\simu{\mystatet}$, which is a solution to $\recon{\hs}$ satisfying R1 and R2 with $\bwr{\myinputt}$ being the reversal of $\bw{\myinputt}$, satisfies the following properties: 
				\begin{enumerate}[label= P\arabic*)]
					\item $\dom \simu{\mystatet} = \dom \bwr{\myinputt}$; thus, $\simu{\mystatet}$ is purely continuous;
					\item if $|\simu{\mystatet}(0, 0) - \bwr{\mystatet}(0, 0)| \leq \delta$, then $|\simu{\mystatet}(T, 0) - \bwr{\mystatet}(T, 0)| \leq \exp{(K^{f}_{x}T)}\delta$, where $(T, 0) = \pn{\max \dom \simu{\mystatet} = \max \dom \bwr{\mystatet}}$, and $\max \dom \simu{\mystatet} = \max \dom \bwr{\mystatet}$ is valid because $\dom \simu{\mystatet} = \dom \bwr{\mystatet}$ in P1.
				\end{enumerate}
			\end{lemma}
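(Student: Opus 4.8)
The plan is to treat the two assertions separately: P1 is a structural observation about pure continuity, while P2 is a Grönwall-type comparison between two flows of the \emph{same} differential equation driven by the \emph{same} input.

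First I would establish P1. Since $\bw{\psi} = (\bw{\mystatet}, \bw{\myinputt})$ is purely continuous, its reversal $\bwr{\psi} = (\bwr{\mystatet}, \bwr{\myinputt})$ from Definition~\ref{definition:reversedhybrdarc} is also purely continuous; in particular $\dom \bwr{\myinputt}$ contains no pair $(t, j)$ with $(t, j+1)\in \dom \bwr{\myinputt}$. Consequently the jump set $\recon{D} := \{(t, j)\in \dom \bwr{\myinputt}: (t, j+1)\in \dom \bwr{\myinputt}\}$ of the auxiliary system $\recon{\hs}$ is empty, and $\recon{C} = \overline{\dom \bwr{\myinputt}}$. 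Because $\simu{\mystatet}$ is the maximal solution to $\recon{\hs}$, it can only flow, so it is purely continuous and $\dom \simu{\mystatet} = \dom \bwr{\myinputt}$, which is exactly (\ref{equation:equalhtd}) and proves P1. Writing $(T, 0) = \max \dom \simu{\mystatet} = \max \dom \bwr{\mystatet}$ then makes sense on the common interval $[0, T]\times\{0\}$.

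For P2, the key observation is that $\simu{\mystatet}$ and $\bwr{\mystatet}$ solve the same differential equation with the same input. Indeed, by R2 and $\recon{f} = f$, the reconstructed trajectory satisfies $\tfrac{\mathrm d}{\mathrm d t}\simu{\mystatet}(t, 0) = f(\simu{\mystatet}(t, 0), \bwr{\myinputt}(t, 0))$ for almost all $t$; and by Proposition~\ref{theorem:reversedarcbwsystem}, $\bwr{\psi}$ is a solution pair to $\fw{\hs}$, so $\bwr{\mystatet}$ satisfies $\tfrac{\mathrm d}{\mathrm d t}\bwr{\mystatet}(t, 0) = f(\bwr{\mystatet}(t, 0), \bwr{\myinputt}(t, 0))$ for almost all $t$ with $(\bwr{\mystatet}(t, 0), \bwr{\myinputt}(t, 0))\in \fw{C}$. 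Setting $y(t) := \simu{\mystatet}(t, 0) - \bwr{\mystatet}(t, 0)$, I would subtract the two equations and take norms; since both $(\simu{\mystatet}(t, 0), \bwr{\myinputt}(t, 0))$ and $(\bwr{\mystatet}(t, 0), \bwr{\myinputt}(t, 0))$ lie in $\fw{C}$ for the common input value $\bwr{\myinputt}(t, 0)$, Assumption~\ref{assumption:flowlipschitz} yields $|\dot y(t)| \leq K^{f}_{x}\,|y(t)|$ for almost every $t\in [0, T]$. As $y$ is absolutely continuous, so is $t\mapsto |y(t)|$, and $\tfrac{\mathrm d}{\mathrm d t}|y(t)| \leq |\dot y(t)| \leq K^{f}_{x}|y(t)|$ almost everywhere; Grönwall's inequality then gives $|y(T)| \leq \exp(K^{f}_{x}T)\,|y(0)|$, and substituting $|y(0)| = |\simu{\mystatet}(0, 0) - \bwr{\mystatet}(0, 0)| \leq \delta$ delivers P2.

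The step I expect to require the most care is verifying that the Lipschitz estimate of Assumption~\ref{assumption:flowlipschitz} is actually applicable pointwise: the assumption bounds $|f(x_{0}, u_{0}) - f(x_{1}, u_{0})|$ only when $(x_{0}, u_{0})$ and $(x_{1}, u_{0})$ both lie in $C$, so I must confirm that for almost every $t$ the two states $\simu{\mystatet}(t, 0)$ and $\bwr{\mystatet}(t, 0)$ are each admissible with the shared input value $\bwr{\myinputt}(t, 0)$. For $\bwr{\mystatet}$ this is immediate from its being a solution pair to $\fw{\hs}$ on a flow interval; for $\simu{\mystatet}$ it follows from its being a flowing solution to $\recon{\hs}$, whose flow map coincides with $f$ on $C$. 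The remaining work is the standard absolutely-continuous Grönwall bookkeeping, which matches \cite[Lemma 2]{kleinbort2018probabilistic} up to the reversal and the hybrid parameterization; no new estimate is needed beyond the exponential comparison.
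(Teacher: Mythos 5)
Your overall structure tracks the paper's proof closely: P1 is obtained exactly as in the paper (pure continuity of the reversal plus the domain identity (\ref{equation:equalhtd})), and for P2 you inline the Gr\"onwall comparison that the paper gets by citing \cite[Lemma 2]{kleinbort2018probabilistic} (restated as Lemma \ref{lemma:2in6}): since R2 forces the two trajectories to share the input $\bwr{\myinputt}$, the $K_{u}$ term of that lemma vanishes and your bound $\exp(K^{f}_{x}T)\delta$ coincides with the paper's. Replacing the citation by an explicit absolutely-continuous Gr\"onwall argument is sound and more self-contained.

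However, the step you yourself flagged as delicate is resolved incorrectly, and this is a genuine gap. You claim $(\simu{\mystatet}(t,0), \bwr{\myinputt}(t,0))\in \fw{C}$ for almost every $t$ ``because $\simu{\mystatet}$ is a flowing solution to $\recon{\hs}$.'' But the flow set $\recon{C} = \overline{\dom \bwr{\myinputt}\backslash \recon{D}}$ of the auxiliary system is a subset of the hybrid \emph{time} domain, not of the state-input space: $\recon{\hs}$ constrains \emph{when} flow occurs, never \emph{where} the state lies, and $\recon{f} = f$ is explicitly extended to all of $\mathbb{R}^{n}\times\mathbb{R}^{m}$ (see the paper's endnote on this point). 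Nothing in the construction prevents $\simu{\mystatet}$ from leaving $C'$; indeed the reconstructed trajectory starts at a point displaced by up to $\delta$ from $\bwr{\mystatet}(0,0)$ and may well exit the flow set (for the bouncing ball it can dip below the surface). Consequently the displayed inequality of Assumption \ref{assumption:flowlipschitz}, which is quantified only over pairs $(x_{0}, u_{0}), (x_{1}, u_{0})\in C$, cannot be invoked pointwise along $\simu{\mystatet}$ as your argument requires. To close the gap one must either read the assumption's opening sentence (``the flow map $f$ is Lipschitz continuous'') as Lipschitz continuity wherever the reconstructed trajectory evolves --- which is the reading the paper relies on implicitly, since the hypotheses of \cite[Lemma 2]{kleinbort2018probabilistic} are not restricted to $C$ --- or add a separate argument that $\simu{\mystatet}$ remains in $C'$, which does not follow from the definition of $\recon{\hs}$.
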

			\begin{proof}
				Given that $\bw{\psi}$ is purely continuous, $\bwr{\psi}$ is purely continuous and, by (\ref{equation:equalhtd}), P1 is established immediately. By the Lipschitz continuity of the flow map $f$ in Assumption \ref{assumption:flowlipschitz}, since $\simu{\mystatet}$ is generated by (\ref{model:reconhybridsystem}), \cite[Lemma 2]{kleinbort2018probabilistic} (also presented in the appendix as Lemma \ref{lemma:2in6}), along with R1 and R2, establishes that $|\simu{\mystatet}(T, 0) - \mystatet(T, 0)| \leq \exp{(K^{f}_{x}T)}\delta$.
		\end{proof}
	
		The following assumption is imposed on the jump map $g$ of the hybrid system $\hs$ in (\ref{model:generalhybridsystem}). Assuming that the jump map $g$ is bounded in the linear manner ensures that any solutions governed by the difference equation $x^{+} = g(x, u)$, when starting close to each other, have their endpoint distances bounded exponentially in terms of their initial deviation, with a growth rate determined by the linear constant.
		\begin{assumption}
			\label{assumption:pcjumpmap}
			The jump map $g$ is such that there exist $K^{g}_{x}, K^{g}_{u}\in \mathbb{R}_{>0}$ such that, for each $(x_{0}, u_{0}) \in D$ and  each $(x_{1}, u_{1}) \in D$,
			$$
				|g(x_{0}, u_{0}) - g(x_{1}, u_{1})|\leq K^{g}_{x}|x_{0} - x_{1}| + K^{g}_{u}|u_{0} - u_{1}|.
			$$
		\end{assumption}
		The following lemma ensures that, for the discrete pieces of a motion plan, the reconstructed solution pair remains close to the backward partial motion plan.
		\begin{lemma}\label{lemma:jumpbound}
				Given a hybrid system $\hs$ satisfying Assumption \ref{assumption:pcjumpmap}, if there exists a purely discrete solution pair $\bw{\psi} = (\bw{\mystatet}, \bw{\myinputt})$ to $\bw{\hs}$, then the reconstructed solution $\simu{\mystatet}$, which is a solution to $\recon{\hs}$ satisfying R1 and R2 with $\bwr{\myinputt}$ being the reversal of $\bw{\myinputt}$, satisfies the following properties: 
				\begin{enumerate}[label= P\arabic*)]
					\item $\dom \simu{\mystatet} = \dom \bwr{\myinputt}$; thus, $\simu{\mystatet}$ is purely discrete;
					\item if $|\simu{\mystatet}(0, 0) - \bwr{\mystatet}(0, 0)| \leq \delta$, then $|\simu{\mystatet}(0, J) - \bwr{\mystatet}(0, J)| \leq \exp(J\ln(K_{x}^{g}))\delta$, where $(0, J) = \pn{\max \dom \simu{\mystatet} = \max \dom \bwr{\mystatet}}$, and $\max \dom \simu{\mystatet} = \max \dom \bwr{\mystatet}$ is valid because $\dom \simu{\mystatet} = \dom \bwr{\mystatet}$ in P1.
				\end{enumerate}
			\end{lemma}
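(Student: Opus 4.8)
The plan is to mirror the structure of the proof of Lemma \ref{lemma:flowbound}, replacing the Gr\"onwall-type continuous estimate by a discrete, geometric one. Property P1 is immediate: since $\bw{\psi}$ is purely discrete, its reversal $\bwr{\psi}$ is purely discrete as well, and (\ref{equation:equalhtd}) gives $\dom \simu{\mystatet} = \dom \bwr{\myinputt} = \dom \bwr{\mystatet}$; hence $\simu{\mystatet}$ is purely discrete with domain of the form $\{0\}\times\{0,1,\dots,J\}$, which also justifies writing $(0,J) = \max\dom\simu{\mystatet} = \max\dom\bwr{\mystatet}$.

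For P2, first I would record that the two trajectories being compared are driven by the \emph{same} hybrid input $\bwr{\myinputt}$: the reconstructed arc $\simu{\mystatet}$ satisfies R2 by construction, while $\bwr{\mystatet}$ is, by Proposition \ref{theorem:reversedarcbwsystem}, the state component of a solution pair $(\bwr{\mystatet}, \bwr{\myinputt})$ to $\fw{\hs}$. Since $\recon{g} = g$, at each jump index $j$ both arcs obey $\simu{\mystatet}(0, j+1) = g(\simu{\mystatet}(0, j), \bwr{\myinputt}(0, j))$ and $\bwr{\mystatet}(0, j+1) = g(\bwr{\mystatet}(0, j), \bwr{\myinputt}(0, j))$ with identical input value $\bwr{\myinputt}(0, j)$.

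The core of the argument is then a single one-step estimate via Assumption \ref{assumption:pcjumpmap}:
\begin{align*}
|\simu{\mystatet}(0, j+1) - \bwr{\mystatet}(0, j+1)|
&= |g(\simu{\mystatet}(0, j), \bwr{\myinputt}(0, j)) - g(\bwr{\mystatet}(0, j), \bwr{\myinputt}(0, j))| \\
&\leq K^{g}_{x}\,|\simu{\mystatet}(0, j) - \bwr{\mystatet}(0, j)| + K^{g}_{u}\cdot 0,
\end{align*}
where the input term vanishes precisely because the inputs coincide. An induction on $j$ from the hypothesis $|\simu{\mystatet}(0,0) - \bwr{\mystatet}(0,0)| \leq \delta$ then yields $|\simu{\mystatet}(0, J) - \bwr{\mystatet}(0, J)| \leq (K^{g}_{x})^{J}\delta = \exp(J\ln(K^{g}_{x}))\delta$, which is exactly P2.

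The argument is largely routine, and the points requiring care are bookkeeping rather than analysis. I must ensure the hybrid time domains of $\simu{\mystatet}$ and $\bwr{\mystatet}$ are aligned jump-for-jump, so that the recursion compares states at the same $(0,j)$; this is guaranteed by P1 together with (\ref{equation:equalhtd}). I must also invoke Proposition \ref{theorem:reversedarcbwsystem} to certify that $\bwr{\mystatet}$ genuinely satisfies the forward jump map $g$ under the input $\bwr{\myinputt}$. The cancellation of the $K^{g}_{u}$ term is the crux that makes the bound depend only on the state-Lipschitz constant $K^{g}_{x}$; unlike the continuous case there is no integral comparison, so I expect no genuine obstacle beyond verifying these domain- and input-consistency conditions.
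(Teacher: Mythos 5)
Your proposal is correct and follows essentially the same route as the paper's proof: P1 via the purely discrete character of the reversal together with (\ref{equation:equalhtd}), and P2 via the recursion from Assumption \ref{assumption:pcjumpmap} in which the $K^{g}_{u}$ term vanishes because R2 forces the two arcs to share the input $\bwr{\myinputt}$, yielding $\exp(J\ln K^{g}_{x})\delta$. Your explicit appeal to Proposition \ref{theorem:reversedarcbwsystem} to certify that $\bwr{\mystatet}$ obeys the forward jump map $g$ is a slightly more careful piece of bookkeeping than the paper states, but it is the same argument.
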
	
			\begin{proof}
				Given that $\bw{\psi}$ is purely discrete, $\bwr{\psi}$ is purely discrete and P1 is immediately established by (\ref{equation:equalhtd}). Furthermore, since $\simu{\mystatet}$ is generated by (\ref{model:reconhybridsystem}), R1 ensures that $|\simu{\mystatet}(0, 0) - \mystatet(0, 0)| \leq \delta$, and R2 ensures that $|\simu{\myinputt}(t, j) - \bwr{\myinputt}(t, j)| = 0$ for all $(t, j)\in \dom \simu{\myinputt} = \dom \bwr{\myinputt}$, in accordance with Assumption \ref{assumption:pcjumpmap}, we obtain the following inequality: 
				$$
				\begin{aligned}
				|\simu{\mystatet}(0, J) &- \mystatet(0, J)| \leq \exp(\ln K_{x}^{g})|\simu{\mystatet}(0, J - 1) - \mystatet(0, J - 1)| \\
				&+ K^{g}_{u} |\simu{\myinputt}(0, J - 1) - \bwr{\myinputt}(0, J - 1)|\\
				&= \exp(\ln K_{x}^{g})|\simu{\mystatet}(0, J - 1) - \mystatet(0, J - 1)|\\
				&\leq \exp(2\ln K_{x}^{g})|\simu{\mystatet}(0, J - 2) - \mystatet(0, J - 2)|\\
				&\leq \ \pnc{\vdots}\\
				&\leq \exp(J\ln K_{x}^{g})|\simu{\mystatet}(0, 0) - \mystatet(0, 0)| \leq \exp(J\ln K_{x}^{g})\delta,
				\end{aligned}
				$$
				thus establishing P2.
		\end{proof}
	
		Next, we show that the final state of the reconstructed motion plan $\simu{\mystatet}$ converges to $\bw{\mystatet}(0, 0)\in X_{f}$ as the tolerance $\delta$ in (\ref{equation:tolerance}) approaches zero. 
		\begin{theorem}\label{theorem:toleranceexist}
			Suppose Assumptions \ref{assumption:flowlipschitz} and \ref{assumption:pcjumpmap} are satisfied, and there exist a solution pair $\fw{\psi} = (\fw{\mystatet}, \fw{\myinputt})$ to $\fw{\hs}$ and a solution pair $\bw{\psi} = (\bw{\mystatet}, \bw{\myinputt})$ to $\bw{\hs}$ satisfying S1. For each $\epsilon > 0$, there exists a tolerance $\delta > 0$ in (\ref{equation:tolerance}) such that $
			|\fw{\mystatet}(\fw{T}, \fw{J}) - \bw{\mystatet}(\bw{T}, \bw{J})| \leq \delta$
			leads to
			$
			|\simu{\mystatet}(\simu{T}, \simu{J}) - \bw{\mystatet}(0, 0)| \leq \epsilon
			$
			where $(\fw{T}, \fw{J}) = \max \dom \fw{\mystatet}$, $(\bw{T}, \bw{J}) = \max \dom \bw{\mystatet}$, $\simu{\mystatet}$ is the reconstructed solution pair as a solution to $\recon{\hs}$ satisfying R1 and R2, and $(\simu{T}, \simu{J}) = \max \dom \simu{\mystatet}$.
		\end{theorem}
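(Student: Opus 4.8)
The plan is to reduce the claim to an endpoint deviation estimate between the reconstructed arc $\simu{\mystatet}$ and the reversed backward partial motion plan $\bwr{\mystatet}$, and then to propagate that estimate through the hybrid time domain one flow interval and one jump at a time using Lemmas~\ref{lemma:flowbound} and~\ref{lemma:jumpbound}. First I would record the two boundary identities supplied by the reversal operation. Since $\dom \simu{\mystatet} = \dom \bwr{\myinputt} = \dom \bwr{\mystatet}$ by~(\ref{equation:equalhtd}), the terminal times coincide, $(\simu{T}, \simu{J}) = (\bw{T}, \bw{J})$, and Definition~\ref{definition:reversedhybrdarc} gives $\bwr{\mystatet}(\simu{T}, \simu{J}) = \bw{\mystatet}(0,0)$; hence the target quantity rewrites as $|\simu{\mystatet}(\simu{T}, \simu{J}) - \bw{\mystatet}(0,0)| = |\simu{\mystatet}(\simu{T}, \simu{J}) - \bwr{\mystatet}(\simu{T}, \simu{J})|$, so it suffices to bound the endpoint deviation between two arcs on a common domain. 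At the other end, R1 gives $\simu{\mystatet}(0,0) = \fw{\mystatet}(\fw{T}, \fw{J})$ while $\bwr{\mystatet}(0,0) = \bw{\mystatet}(\bw{T}, \bw{J})$, so the initial deviation equals $|\fw{\mystatet}(\fw{T}, \fw{J}) - \bw{\mystatet}(\bw{T}, \bw{J})|$, which is at most $\delta$ by hypothesis.

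Next I would exploit that $\simu{\mystatet}$ and $\bwr{\mystatet}$ are driven by the same input $\bwr{\myinputt}$ (the latter being a solution pair to $\fw{\hs}$ by Proposition~\ref{theorem:reversedarcbwsystem}) and live on the same hybrid time domain, which I decompose as $\dom \bwr{\mystatet} = \bigcup_{j=0}^{\simu{J}}([t_j, t_{j+1}]\times\{j\})$ with $0 = t_0 \le t_1 \le \cdots \le t_{\simu{J}+1} = \simu{T}$. Writing $d(t,j) := |\simu{\mystatet}(t,j) - \bwr{\mystatet}(t,j)|$, I would treat each flow interval $[t_j, t_{j+1}]\times\{j\}$ as a purely continuous reconstruction and invoke Lemma~\ref{lemma:flowbound} to obtain $d(t_{j+1}, j) \le \exp(K^{f}_{x}(t_{j+1}-t_j))\,d(t_j, j)$, and treat each jump $(t_{j+1}, j)\to(t_{j+1}, j+1)$ as a purely discrete reconstruction and invoke Lemma~\ref{lemma:jumpbound} (the input-deviation term vanishes since both arcs use $\bwr{\myinputt}$) to obtain $d(t_{j+1}, j+1) \le K^{g}_{x}\,d(t_{j+1}, j)$. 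Chaining these inequalities from $(0,0)$ to $(\simu{T}, \simu{J})$ telescopes the flow factors into $\exp(K^{f}_{x}\sum_{j}(t_{j+1}-t_j)) = \exp(K^{f}_{x}\simu{T})$ and collects one factor $K^{g}_{x}$ per jump, yielding
\begin{equation*}
|\simu{\mystatet}(\simu{T}, \simu{J}) - \bw{\mystatet}(0,0)| \le \exp(K^{f}_{x}\simu{T})\,(K^{g}_{x})^{\simu{J}}\,\delta =: C\delta .
\end{equation*}
Since the backward partial motion plan is compact, $\simu{T}$ and $\simu{J}$ are finite and $C$ is a finite positive constant; choosing $\delta := \epsilon / C$ then closes the argument.

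The main obstacle I anticipate is making the piecewise chaining rigorous. Lemmas~\ref{lemma:flowbound} and~\ref{lemma:jumpbound} are stated for globally purely continuous and purely discrete solution pairs with initial deviation measured at $(0,0)$ and bounded by the single tolerance $\delta$, whereas here I need them applied to intermediate segments whose ``initial'' deviation is the accumulated quantity $d(t_j, j)$ rather than $\delta$. I would handle this either by reading the two lemmas as multiplicative amplification-factor estimates valid for an arbitrary initial-deviation bound, or by a clean induction on the jump index with hypothesis $d(t_j, j) \le \exp(K^{f}_{x}t_j)(K^{g}_{x})^{j}\delta$. A secondary point to verify is that the restriction of $\recon{\hs}$ to each segment inherits exactly the purely continuous/purely discrete reconstruction hypotheses of the respective lemmas, and that the maximal solution $\simu{\mystatet}$ genuinely extends over the entire domain $\dom \bwr{\myinputt}$ as asserted in~(\ref{equation:equalhtd}); granting these, the estimate and the choice of $\delta$ follow directly.
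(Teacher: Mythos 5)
Your proposal is correct and follows essentially the same route as the paper's proof: reduce the target quantity to the endpoint deviation $|\simu{\mystatet}(\simu{T}, \simu{J}) - \bwr{\mystatet}(\simu{T}, \simu{J})|$, note the initial deviation equals $|\fw{\mystatet}(\fw{T}, \fw{J}) - \bw{\mystatet}(\bw{T}, \bw{J})| \leq \delta$ via R1 and the reversal identities, iterate Lemmas~\ref{lemma:flowbound} and~\ref{lemma:jumpbound} across $\dom \simu{\mystatet} = \dom \bwr{\mystatet}$, and take $\delta = \epsilon/\bigl(\exp(K_{x}^{f}\simu{T})(K_{x}^{g})^{\simu{J}}\bigr)$. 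Your explicit segment-by-segment induction merely fills in what the paper compresses into ``iteratively applying'' the two lemmas (and your constant, with $K_{x}^{g}$ in the jump factor, is the intended one—the paper's $\ln(K_{x}^{f})$ there is a typo).
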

		\begin{proof}
				Given that $\bw{\psi} = (\bw{\mystatet}, \bw{\myinputt})$ is a solution pair to $\bw{\hs}$, Proposition ~\ref{theorem:reversedarcbwsystem} guarantees that $\bwr{\psi} = (\bwr{\mystatet}, \bwr{\myinputt})$ is a solution pair to $\fw{\hs}$. By (\ref{equation:equalhtd}), it follows that $\dom \simu{\mystatet} = \dom \bwr{\mystatet}$.
				Since $|\fw{\mystatet}(\fw{T}, \fw{J}) - \bw{\mystatet}(\bw{T}, \bw{J})|  = |\simu{\mystatet}(0, 0) - \bwr{\mystatet}(0, 0)| \leq \delta$, by iteratively applying Lemma \ref{lemma:flowbound} and Lemma \ref{lemma:jumpbound} throughout $\dom \simu{\mystatet} = \dom \bwr{\mystatet}$, it follows that 
				$$
				\begin{aligned}
				|\simu{\mystatet}(\simu{T}, \simu{J}) - \bw{\mystatet}(0, 0)| &= |\simu{\mystatet}(\simu{T}, \simu{J}) - \bwr{\mystatet}(\simu{T}, \simu{J})|\\
				&\leq \exp(K_{x}^{f}\simu{T} + \simu{J}\ln(K_{x}^{f}))\delta, 
				\end{aligned}$$
				which establishes the existence of $\delta > 0$. In particular, $\delta$ can be taken to be equal to $\frac{\epsilon}{\exp(K_{x}^{f}\simu{T} + \simu{J}\ln(K_{x}^{f}))}$.
		\end{proof}
		Furthermore, if $\bw{\mystatet}(0, 0)$ is not on the boundary of $X_{f}$, the following result shows there is a tolerance ensuring that $\simu{\mystatet}$ concludes within~$X_{f}$.
		\begin{corollary}\label{corollary:convergencexf}
			Suppose Assumptions \ref{assumption:flowlipschitz} and \ref{assumption:pcjumpmap} are satisfied, and there exist a  solution pair $\fw{\psi} = (\fw{\mystatet}, \fw{\myinputt})$ to $\fw{\hs}$, and a solution pair $\bw{\psi} = (\bw{\mystatet}, \bw{\myinputt})$ to $\bw{\hs}$ satisfying S1, and some $\epsilon' > 0$ such that $\bw{\mystatet}(0, 0) + \epsilon' \mathbb{B} \subset X_{f}$. Then, there exists a tolerance $\delta > 0$ in (\ref{equation:tolerance}) such that
	$
	|\fw{\mystatet}(\fw{T}, \fw{J}) - \bw{\mystatet}(\bw{T}, \bw{J})| \leq \delta
	$
	leads to 
	$
	\simu{\mystatet}(\simu{T}, \simu{J})\in X_{f}
	$
	where $(\fw{T}, \fw{J}) = \max \dom \fw{\mystatet}$, $(\bw{T}, \bw{J}) = \max \dom \bw{\mystatet}$, $\simu{\mystatet}$ is the reconstructed  as a solution to $\recon{\hs}$ satisfying R1 and R2, and $(\simu{T}, \simu{J}) = \max \dom \simu{\mystatet}$.
		\end{corollary}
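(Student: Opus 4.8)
The plan is to treat this as an immediate specialization of Theorem~\ref{theorem:toleranceexist}. Every standing hypothesis of that theorem---Assumptions~\ref{assumption:flowlipschitz} and~\ref{assumption:pcjumpmap}, together with the existence of a forward solution pair $\fw{\psi}$ to $\fw{\hs}$ and a backward solution pair $\bw{\psi}$ to $\bw{\hs}$ satisfying S1---is already assumed in the corollary, so the theorem applies verbatim. The single new ingredient is the interiority hypothesis $\bw{\mystatet}(0,0) + \epsilon'\mathbb{B}\subset X_{f}$, which I would use to fix the free error parameter of the theorem.

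First I would invoke Theorem~\ref{theorem:toleranceexist} with the choice $\epsilon := \epsilon'$. This yields a tolerance $\delta > 0$ (explicitly, $\delta = \epsilon'/\exp(K^{f}_{x}\simu{T} + \simu{J}\ln(K^{f}_{x}))$, read off from the theorem's proof) for which the bound $|\fw{\mystatet}(\fw{T}, \fw{J}) - \bw{\mystatet}(\bw{T}, \bw{J})| \leq \delta$ forces $|\simu{\mystatet}(\simu{T}, \simu{J}) - \bw{\mystatet}(0, 0)| \leq \epsilon'$, where $(\simu{T}, \simu{J}) = \max\dom\simu{\mystatet}$.

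Next I would recast this distance bound as a set membership. Since $\mathbb{B}$ denotes the closed unit ball, the inequality $|\simu{\mystatet}(\simu{T}, \simu{J}) - \bw{\mystatet}(0, 0)| \leq \epsilon'$ is exactly the statement $\simu{\mystatet}(\simu{T}, \simu{J}) \in \bw{\mystatet}(0, 0) + \epsilon'\mathbb{B}$. Combining this with the hypothesis $\bw{\mystatet}(0, 0) + \epsilon'\mathbb{B} \subset X_{f}$ gives $\simu{\mystatet}(\simu{T}, \simu{J}) \in X_{f}$, which is the desired conclusion.

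There is essentially no hard step: the argument is a one-line specialization of the theorem followed by a containment. The only point requiring a moment's care is the closed-versus-open-ball convention---the theorem delivers a nonstrict inequality $\leq \epsilon'$, and because $\mathbb{B}$ is the closed unit ball, this nonstrict bound lands precisely inside the closed ball $\bw{\mystatet}(0,0) + \epsilon'\mathbb{B}$ assumed to lie in $X_{f}$. Had the hypothesis instead guaranteed only an open neighborhood of $\bw{\mystatet}(0,0)$, I would simply apply the theorem with a slightly smaller radius (for instance $\epsilon'/2$) to recover the same conclusion.
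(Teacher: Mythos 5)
Your proposal is correct and follows essentially the same route as the paper's own proof: invoke Theorem~\ref{theorem:toleranceexist} with $\epsilon = \epsilon'$ to obtain $\delta > 0$ guaranteeing $|\simu{\mystatet}(\simu{T}, \simu{J}) - \bw{\mystatet}(0, 0)| \leq \epsilon'$, then use the containment $\bw{\mystatet}(0, 0) + \epsilon'\mathbb{B} \subset X_{f}$ to conclude $\simu{\mystatet}(\simu{T}, \simu{J})\in X_{f}$. Your added remark on the closed-ball convention matching the nonstrict inequality is a fine point of care that the paper leaves implicit, but it does not change the argument.
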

		\begin{proof}
				By selecting $\epsilon = \epsilon'$, Theorem \ref{theorem:toleranceexist} ensures the existence of some $\delta> 0$ such that $|\simu{\mystatet}(\simu{T}, \simu{J}) - \bw{\mystatet}(0, 0)| \leq \epsilon'$. By $\bw{\mystatet}(0, 0) + \epsilon'\mathbb{B} \subset X_{f}$, it is established that $\simu{\mystatet}(\simu{T}, \simu{J})\in X_{f}$.
		\end{proof}
	
		Then, by replacing $\bw{\mystatet}$ with $\simu{\mystatet}$ and concatenating the reconstructed pair $\simu{\psi} := (\simu{\mystatet}, \bwr{\myinputt})$ to $\fw{\psi} = (\fw{\mystatet}, \fw{\myinputt})$, HyRRT-Connect generates the motion plan $\fw{\psi}|\simu{\psi}$, where the discontinuity associated with (\ref{equation:tolerance}) is removed. Note that the tolerance $\delta$ in (\ref{equation:tolerance}) is adjustable. Setting $\delta$ to a smaller value brings the endpoint of $\simu{\mystatet}$ closer to $X_{f}$, However, it also reduces the possibility of finding a motion plan, thereby increasing the time expected to find forward and backward partial motion plans.
		\begin{remark}
				Connecting two points via flow typically involves solving a two point boundary value problem constrained by the flow set. Solving such problems is difficult. This is the reason why we do not consider to actively connect two paths in $\fw{\stree}$ and $\bw{\stree}$ via flow.
	\end{remark}
\subsection{Connecting Two Search Trees via a Jump}\label{section:jumpconnection}
In S2, HyRRT-Connect checks the existence of $\fw{p}$ in~(\ref{equation:fwpath}) and $\bw{p}$ in (\ref{equation:bwpath})\pn{,} which, in addition to meeting C1-C4 in Section \ref{section:checksolution_stateequal}, results in a solution to the following constrained equation, denoted $u^{*}$, provided one exists\endnote{It is indeed possible that all the motion plans are purely continuous. In this case, no solution to (\ref{problem:jumpequation}) would be found since no jumps exist in every motion plan.}:
\begin{equation}\label{problem:jumpequation}
\overline{x}_{\bw{v}_{n}} = g(\overline{x}_{\fw{v}_{m}}, u^{*}),\quad(\overline{x}_{\fw{v}_{m}}, u^{*})\in \fw{D}.
\end{equation} 
\pn{At times, this} constrained equation can be solved analytically \pn{(e.g., for the system in Example~\ref{example:bouncingball}) or} numerically (e.g., using ideas in \cite{boyd2004convex})  using numerical techniques like root-finding methods \cite{burden2011numerical} or optimization methods \cite{boyd2004convex}.
A solution to (\ref{problem:jumpequation}) implies that $\overline{x}_{\fw{v}_{m}}$ and $\overline{x}_{\bw{v}_{n}}$ can be connected by applying $u^{*}$ at a jump from $\overline{x}_{\fw{v}_{m}}$ to $\overline{x}_{\bw{v}_{n}}$. Hence, a motion plan is constructed by concatenating $\fw{\psi}$, a single jump from $\overline{x}_{\fw{v}_{m}}$ to $\overline{x}_{\bw{v}_{n}}$, and $\bwr{\psi}$.
	\begin{remark}
		This approach enables the construction of a motion plan prior to the detection of any overlaps between $\fw{\stree}$ and $\bw{\stree}$, potentially leading to improved computational efficiency, \pn{as} illustrated in the forthcoming Section \ref{section:simulation}. Furthermore, as this connection is established through a jump, it prevents the discontinuity during the flow introduced in (\ref{equation:tolerance}).
\end{remark}

%


\section{Software Tool and Simulation Results}\label{section:simulation}
Algorithm \ref{algo:bihyrrt} leads to a software tool to solve Problem~\ref{problem:motionplanning}. This software only requires the inputs listed in Algorithm \ref{algo:bihyrrt}. We illustrate the HyRRT-Connect algorithm and this tool\endnote{Code at \hyperlink{https://github.com/HybridSystemsLab/HyRRTConnect}{https://github.com/HybridSystemsLab/HyRRTConnect.git}} in Example \ref{example:bouncingball}  and Example \ref{example:biped}.
\begin{example}[(Actuated bouncing ball system in Example ~\ref{example:bouncingball}, revisited)]
	We initially showcase the simulation results of the HyRRT-Connect algorithm without the functionality of connecting via jumps discussed in Section~\ref{section:jumpconnection}.
	We consider the case where HyRRT-Connect precisely connects the forward and backward partial motion plans. This is demonstrated by deliberately setting the initial state set as $X_{0} = \{(14, 0)\}$ and the final state set as $X_{f} = \{(0, -16.58)\}$. In this case, no tolerance is applied, and thus, no reconstruction process is employed. Instead, strict equality in C5 in Section \ref{section:checksolution_stateequal} is used to identify the motion plan. The motion plan detected under these settings is depicted in Figure~\ref{fig:examplebbexactflow}, where the forward and backward partial motion plans identified in S1 are depicted by the green and magenta lines, respectively.
However, for most scenarios, such as $X_{0} = \{(14, 0)\}$ and $X_{f} = \{(10, 0)\}$ in Example \ref{example:bouncingball}, if we require strict equality without allowing any tolerance \sj{as in (\ref{equation:tolerance})}, then HyRRT-Connect fails to return a motion plans in almost all the runs. This demonstrates the necessity of allowing a certain degree of tolerance in (\ref{equation:tolerance}). The simulation results, allowing a tolerance of $\delta = 0.2$, are shown in Figure \ref{fig:examplebbinexactflow}. A discontinuity during the flow between the partial motion plans is observed, as depicted in the red circle in Figure \ref{fig:examplebbinexactflow}. This discontinuity is addressed through the reconstruction process and a deviation between the endpoint of the reconstructed motion plan and the final state set is also observed in Figure \ref{fig:examplebbinexactrecon}, which, according to Theorem \ref{theorem:toleranceexist}, is bounded.
	\begin{figure}[htbp]
		\centering
		\includegraphics[width=0.7\columnwidth]{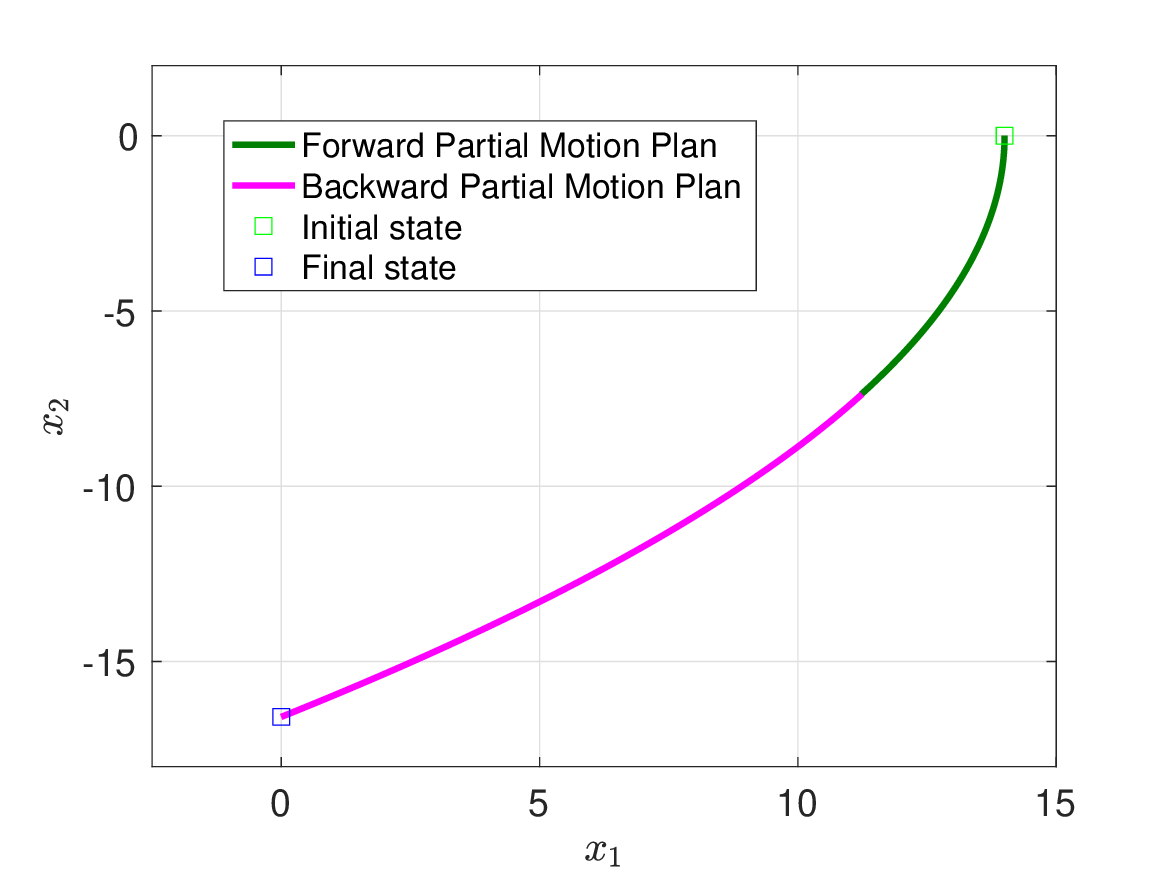}
		\caption{Motion plans for actuated bouncing ball example solved by HyRRT-Connect, where precise connection during the flow is achieved.}\label{fig:examplebbexactflow}
	\end{figure}
	\begin{figure}[htbp]
		\centering
		\includegraphics[width=0.7\columnwidth]{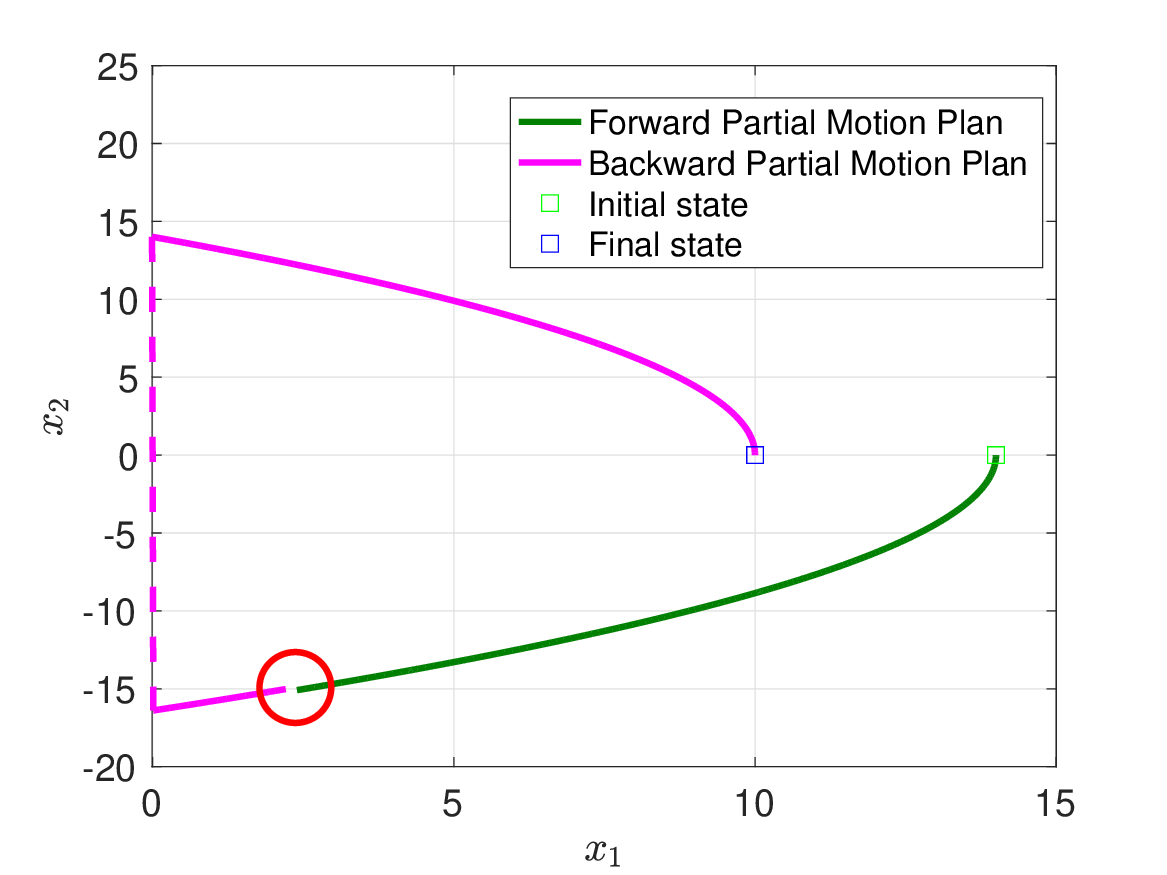}
		\caption{Motion plans for actuated bouncing ball example solved by HyRRT-Connect, where  a discontinuity during the flow, as is depicted in red circle, is observed.}\label{fig:examplebbinexactflow}
	\end{figure}
	\begin{figure}[htbp]
		\centering
		\includegraphics[width=0.7\columnwidth]{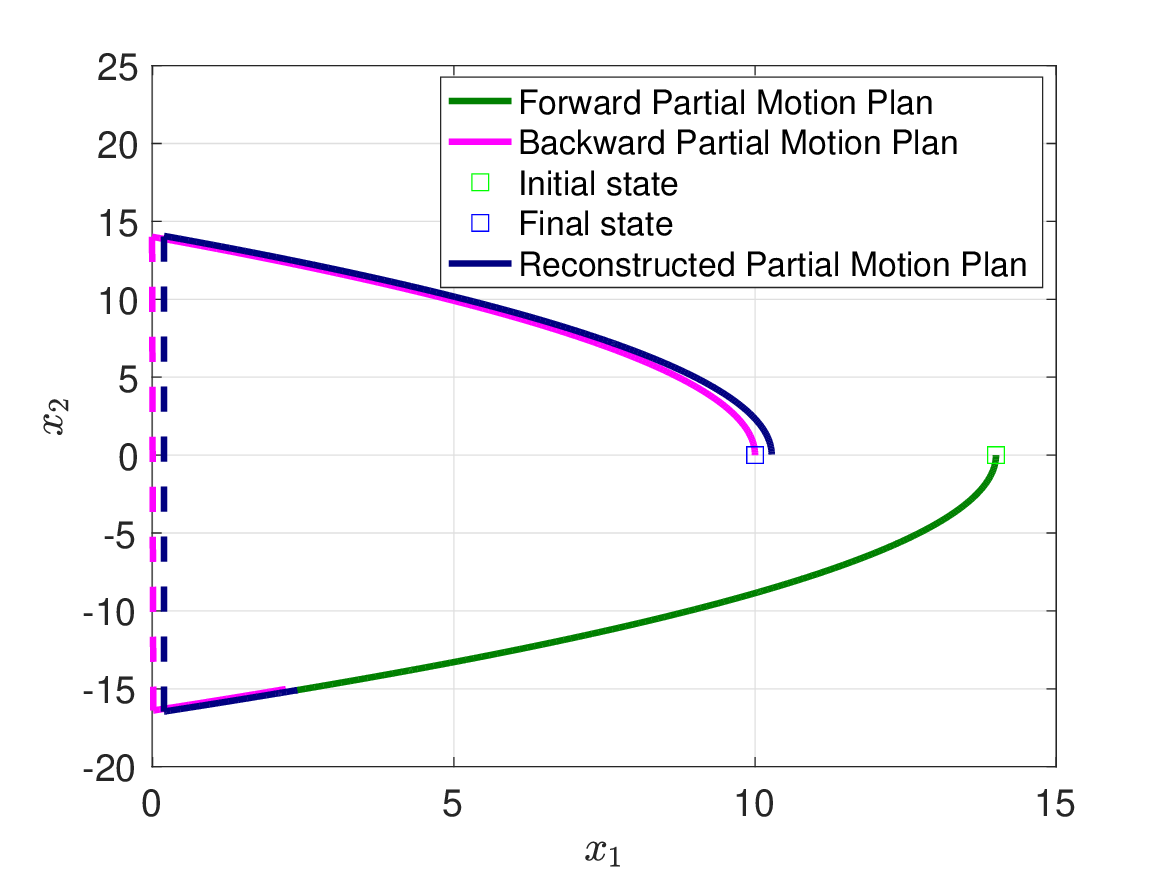}
		\caption{Motion plans for actuated bouncing ball example solved by HyRRT-Connect, where the backward partial motion plan is reconstructed.}\label{fig:examplebbinexactrecon}
	\end{figure}
	\begin{figure}[htbp]
		\centering
		\subfigure[HyRRT-Connect\label{fig:examplehySST}]{\includegraphics[width=0.7\columnwidth]{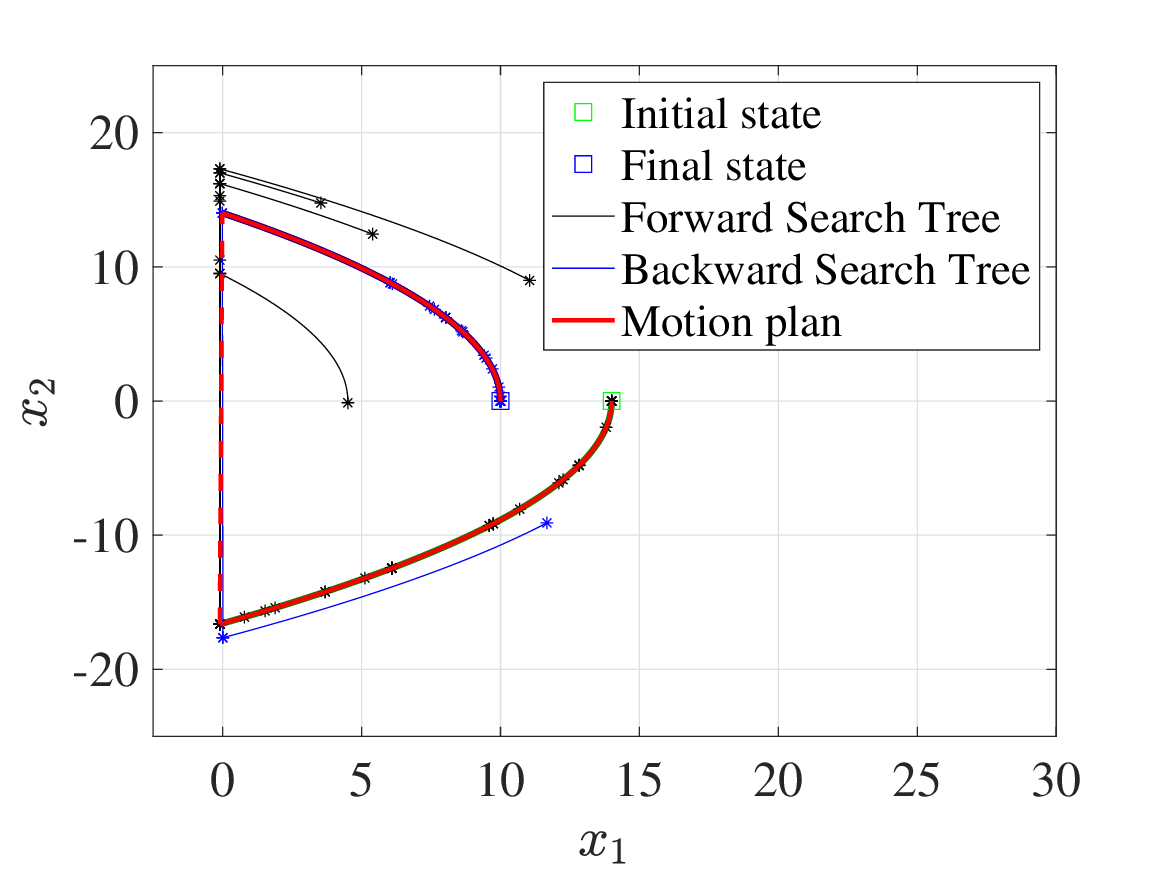}}
		\subfigure[HyRRT\label{fig:examplehyRRT}]{\includegraphics[width=0.7\columnwidth]{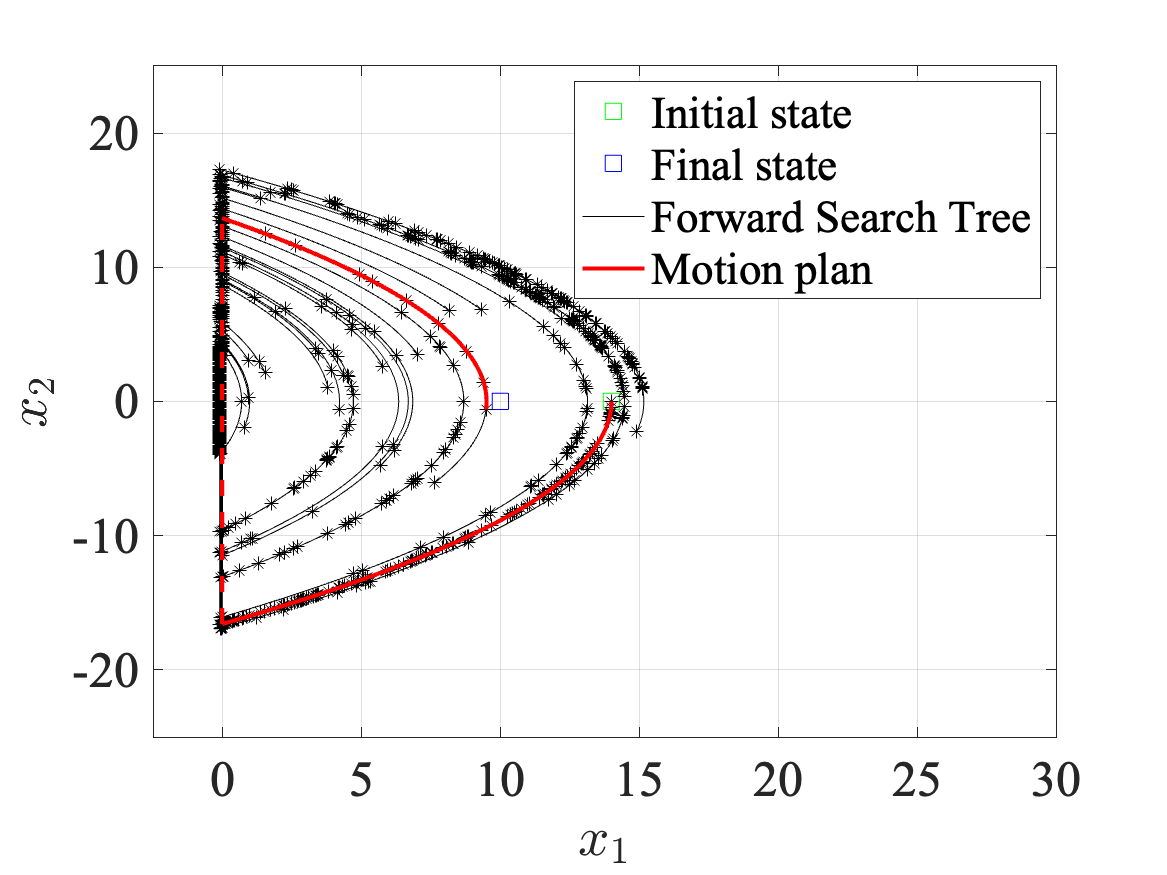}}
		\caption{Motion plans for the actuated bouncing ball example solved by HyRRT-Connect and HyRRT in \cite{wang2025motion}\label{fig:examplebb}.}
\end{figure}

We proceed to perform simulation results of HyRRT-Connect showcasing its full functionalities, including the ability to connect partial motion plans via jumps, and results are shown in Figure~\ref{fig:examplehySST}. This feature enables HyRRT-Connect to avoid discontinuities during the flow, as it computes exact solutions at jumps to connect forward and backward partial motion plans.
Furthermore, we compare the computational performance of the proposed HyRRT-Connect algorithm, its variant Bi-HyRRT (where the function to connect partial motion plans via jumps is disabled), and HyRRT.
 Conducted on a $3.5$GHz Intel Core i7 processor using MATLAB, each algorithm is run $20$ times on the same problem. HyRRT-Connect on average creates $78.8$ vertices in $0.27$ seconds, Bi-HyRRT $186.5$ vertices in $0.76$ seconds, and HyRRT $4\textsl{}57.4$ vertices in $3.93$ seconds. Compared to HyRRT, both HyRRT-Connect and Bi-HyRRT show considerable improvements in computational efficiency. Notably, HyRRT-Connect, with its jump-connecting capability, achieves a $64.5\%$ reduction in computation time and $57.7\%$ fewer vertices than Bi-HyRRT, demonstrating the benefits of jump connections.
\begin{table}
	\centering
	\begin{tabular}{c|ccc}
		\hline
		&HyRRT & Bi-HyRRT & Bi-HyRRT-Connect \\
		\hline
		Time & $3.93$ & $0.76$ & $0.27$\\
		Vertices & $457.4$ & $186.5$ & $78.8$\\
		\hline
	\end{tabular}
	\caption{Computation performance using different RRT-type algorithms.}
\end{table}
\end{example}
\begin{example}[(Walking robot system in Example ~\ref{example:biped}, revisited)]
		The simulation results demonstrate that HyRRT-Connect successfully finds a motion plan for the high-dimensional walking robot system with a tolerance $\delta$ of $0.3$. The forward search tree $\fw{\stree}$, with its partial motion plan shown in green, is displayed in Figure \ref{fig:examplebipedfw}. Similarly, the backward search tree $\bw{\stree}$, with its partial motion plan in magenta, is shown in Figure \ref{fig:examplebipedbw}. 
		These simulations were performed in MATLAB on a 3.5 GHz Intel Core i7 processor. Running HyRRT-Connect and HyRRT $20$ times each for the same problem, HyRRT-Connect generates $470.2$ vertices and takes $19.8$ seconds, while HyRRT generates $2357.1$ vertices and takes $71.5$ seconds. This indicates a significant $72.3\%$ improvement in computation time and $80.1\%$ in vertex creation for HyRRT-Connect compared to HyRRT, highlighting the efficiency of bidirectional exploration.
\end{example}

\section{Discussion on Parallel Implementation and Probabilistic Completeness}\label{section:parallelcomputation}
After each new vertex is added to the search tree, the HyRRT-Connect algorithm frequently halts and restarts parallel computations to check for overlaps between the forward and backward search trees. This process can prevent the potential computational performance improvement from parallelization. Our simulations using MATLAB's $\texttt{parpool}$ for parallel computation showed no improvement compared to an interleaved approach. In fact, the parallel implementation took significantly longer: about $2.47$ seconds for the actuated bouncing ball system, compared to $0.27$ second with interleaving, and $167.8$ seconds for the walking robot system versus only $19.8$ seconds. 
It is important to note that the time required for halting and restarting parallel computation is contingent upon factors like the specific parallel computation software toolbox used, the hardware platform, and other implementation details. Consequently, the conclusions regarding performance may differ with varying implementations.
 
The HyRRT-Connect algorithm renders probabilistic completeness, implying that as the number of samples approaches infinity, the probability of failure to find a motion plan converges to zero if one exists. This property extends from the probabilistic completeness of the HyRRT algorithm shown in \cite{wang2025motion}, which propagates in a forward direction. By truncating an existing motion plan into two segments, HyRRT maintains probabilistic completeness in finding each segment, thereby ensuring the probabilistic completeness of HyRRT-Connect.
%
%
%
\begin{figure}[htbp]
	\centering
	\subfigure[Forward search tree and partial motion plan.\label{fig:examplebipedfw}]{\includegraphics[width=0.7\columnwidth]{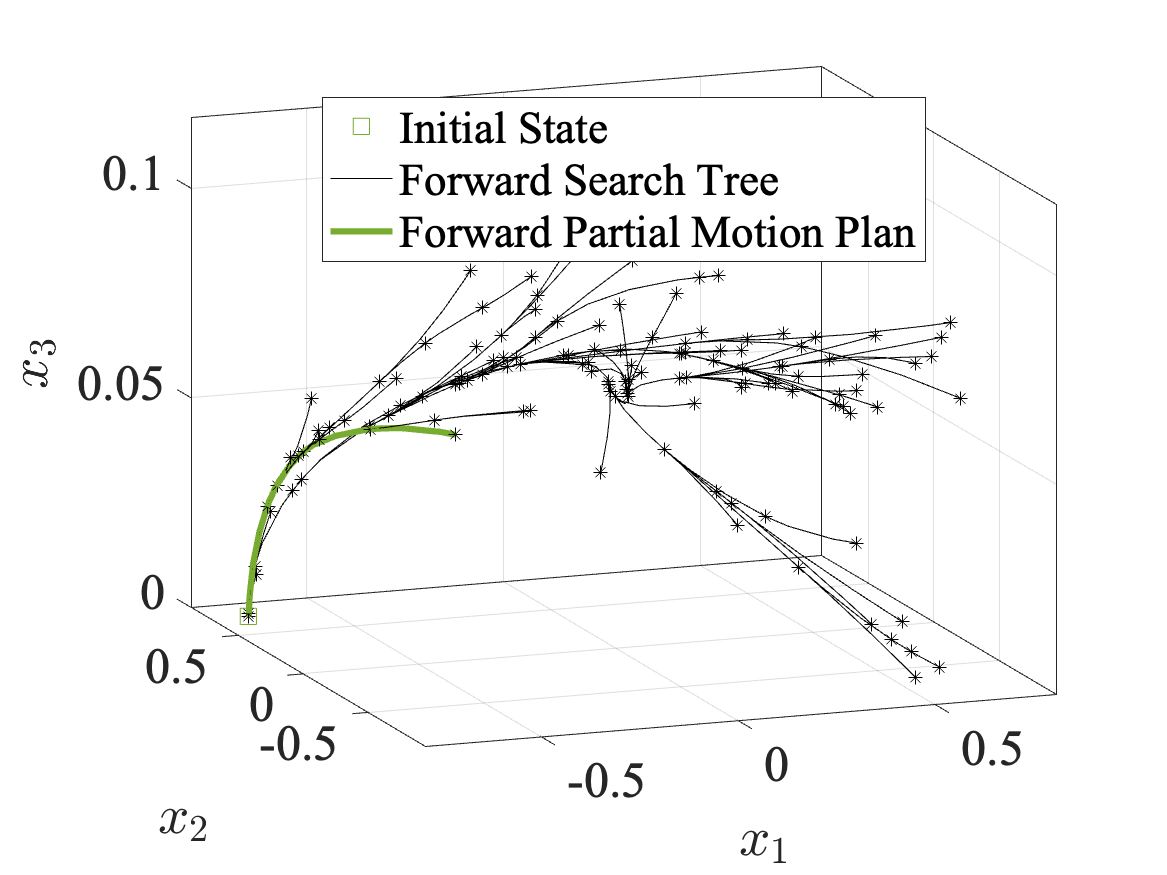}}
	\subfigure[Backward search tree and partial motion plan.\label{fig:examplebipedbw}]{\includegraphics[width=0.7\columnwidth]{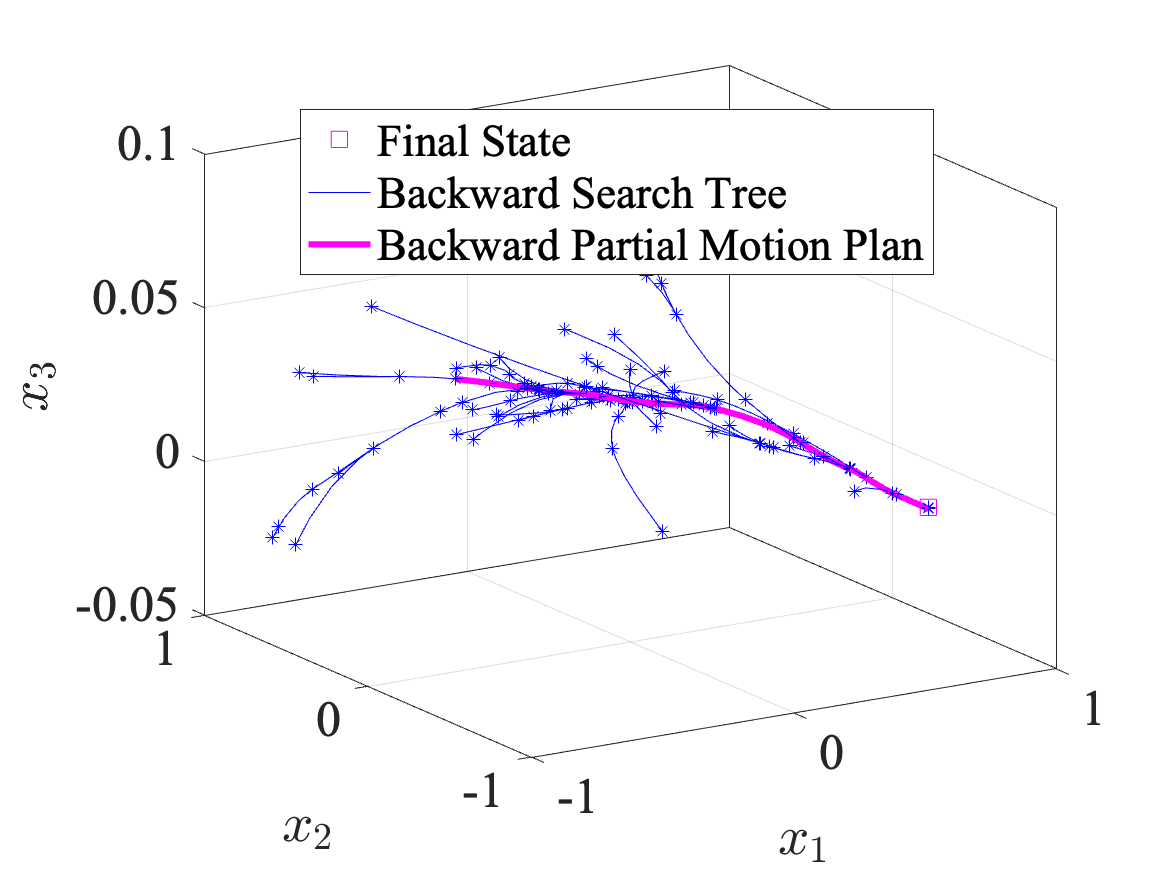}}
				\subfigure[Forward and reconstructed backward partial motion plans. \label{fig:examplebiped}]{\includegraphics[width=0.7\columnwidth]{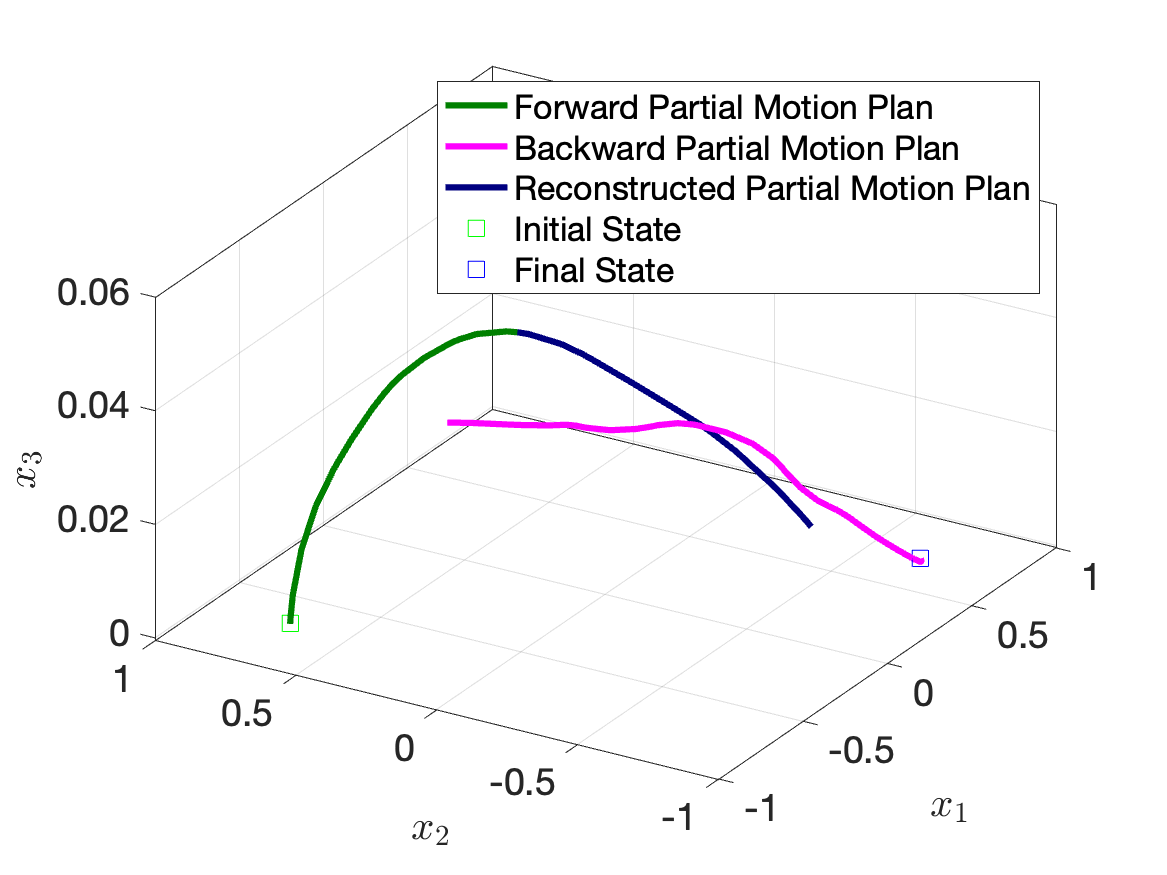}}
	\caption{Selected states of the partial motion plans for the walking robot system.}
\end{figure}

\section{Conclusion}
We present HyRRT-Connect, a bidirectional algorithm to solve  motion planning problems for hybrid systems. It includes a backward-in-time hybrid system formulation, validated by reversing and concatenating functions on the hybrid time domain. To tackle discontinuities during the flow, we introduce a reconstruction process.  In addition to smoothening the discontinuity, this process ensures convergence to the final state set as discontinuity tolerance decreases. The effectiveness and computational improvement of HyRRT-Connect are exemplified through applications to an actuated bouncing ball and a walking robot. Future work includes addressing motion planning under uncertaints and developing tracking control algorithms for hybrid systems to follow the planned motion.
\theendnotes
\bibliographystyle{SageH}
\bibliography{references}
\appendix
\section{Proof of Proposition \ref{theorem:concatenationsolution}}\label{appendix:proof_concatenation}
We show that $\psi = (\mystatet,  \myinputt) = (\mystatet_{1}|\mystatet_{2}, \myinputt_{1}|\myinputt_{2})$ satisfies the conditions in Definition \ref{definition:solution}, as follows.
\begin{lemma}\label{lemma:concatenationHTD}
	Given two functions, denoted $\psi_{1}$ and $\psi_{2}$ and defined on hybrid time domains, the domain of their concatenation $\psi = \psi_{1}|\psi_{2}$, is also a hybrid time domain.
\end{lemma}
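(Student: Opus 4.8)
The plan is to exploit the explicit structure of a hybrid time domain from Definition~\ref{definition:hybridtimedomain} together with the description of $\dom \psi$ provided by item~1 of Definition~\ref{definition:concatenation}. Recall that for the concatenation $\psi = \psi_1|\psi_2$ to be defined at all, $\psi_1$ must be compact, so $\dom\psi_1$ is a \emph{compact} hybrid time domain, while $\dom\psi_2$ is a (possibly noncompact) hybrid time domain; moreover $\dom\psi = \dom\psi_1 \cup (\dom\psi_2 + \{(T,J)\})$ with $(T,J)=\max\dom\psi_1$. The strategy is to first settle the case in which $\dom\psi_2$ is also compact by merging the two defining time sequences into one, and then lift this to the general case through the nondecreasing-union characterization of hybrid time domains.

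For the compact case I would write $\dom\psi_1 = \bigcup_{j=0}^{J_1-1}([s_j,s_{j+1}]\times\{j\})$ and $\dom\psi_2 = \bigcup_{j=0}^{J_2-1}([r_j,r_{j+1}]\times\{j\})$, so that $(T,J) = (s_{J_1}, J_1-1)$. Shifting by $(T,J)$ raises every jump index of $\dom\psi_2$ by $J_1-1$ and every time by $s_{J_1}$. The key observation is that the last interval of $\dom\psi_1$ and the first interval of the shifted $\dom\psi_2$ both sit at jump level $J_1-1$ and, since $r_0=0$, share the endpoint $s_{J_1}$; hence they fuse into the single interval $[s_{J_1-1},\, s_{J_1}+r_1]$. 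I would then define the combined time sequence $\tau_k := s_k$ for $0\le k\le J_1-1$ and $\tau_k := s_{J_1}+r_{k-J_1+1}$ for $J_1\le k\le J_1+J_2-1$, check that $\tau_0 = 0$ and that the sequence is nondecreasing, and verify that $\dom\psi = \bigcup_{j=0}^{J_1+J_2-2}([\tau_j,\tau_{j+1}]\times\{j\})$, which is exactly a compact hybrid time domain.

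For the general case, write $\dom\psi_2 = \bigcup_i E_i$ as the union of a nondecreasing sequence of compact hybrid time domains $E_0\subset E_1\subset\cdots$. Setting $F_i := \dom\psi_1 \cup (E_i + \{(T,J)\})$, the compact case shows each $F_i$ is a compact hybrid time domain; the inclusion $E_i\subset E_{i+1}$ is preserved under translation by a single point, so $F_i\subset F_{i+1}$; and since Minkowski addition of a singleton distributes over unions, $\bigcup_i F_i = \dom\psi_1\cup\big((\bigcup_i E_i)+\{(T,J)\}\big) = \dom\psi_1\cup(\dom\psi_2+\{(T,J)\}) = \dom\psi$. Thus $\dom\psi$ is the union of a nondecreasing sequence of compact hybrid time domains and is therefore a hybrid time domain.

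The only delicate point, and the one I expect to require the most care, is the index bookkeeping at the junction jump level $J_1-1$: one must confirm that the two intervals living there genuinely abut at the common time $s_{J_1}$ so that they merge into one interval of the combined sequence, leaving neither a gap (which would split the level into two disconnected intervals) nor a genuine overlap (which would violate the interval structure). Once this fusion is justified, the monotonicity of $\{\tau_k\}$ and the distributivity of the translation over unions are routine.
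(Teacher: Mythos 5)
Your proposal is correct, and its computational core is the same as the paper's: both proofs come down to splicing the two defining time sequences into one, $\tau_k = s_k$ for $k \leq J_1-1$ and $\tau_k = s_{J_1} + r_{k-J_1+1}$ afterwards, with the junction intervals at jump level $J_1-1$ fusing because $r_0 = 0$ forces them to abut exactly at $T = s_{J_1}$. Where you genuinely diverge is in how the possibly noncompact $\dom\psi_2$ is handled. The paper never isolates a compact case; instead it verifies the \emph{truncation} characterization, showing for every $(T',J') \in \dom\psi$ that $\dom\psi \cap ([0,T']\times\{0,1,\ldots,J'\})$ is a compact hybrid time domain, splitting into the cases $(T',J')\in\dom\psi_1$ and $(T',J')\in\dom\psi_2 + \{(T,J)\}$ and building the merged sequence only in the second case. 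You instead prove the compact case once and then lift it with an exhaustion argument: writing $\dom\psi_2 = \bigcup_i E_i$ as a nondecreasing union of compact hybrid time domains and checking that $F_i := \dom\psi_1 \cup (E_i + \{(T,J)\})$ is a nondecreasing sequence of compact hybrid time domains whose union is $\dom\psi$. Your route has the advantage of verifying Definition~\ref{definition:hybridtimedomain} exactly as stated, whereas the paper's argument implicitly relies on the (standard, but unstated) equivalence between that definition and the truncation characterization; conversely, the paper's route avoids introducing the exhausting sequence and the (easy) distributivity of singleton Minkowski translation over unions. Two small points to tidy in your write-up: your compact-case formula presupposes $E_i \neq \emptyset$ (so that $r_0 = 0$ and $(0,0)\in E_i$ make sense); if some $E_i$ is empty, $F_i = \dom\psi_1$ is still a compact hybrid time domain, so the conclusion survives, but say so. Also note that your two-case index bookkeeping silently covers the degenerate situations $\dom\psi_1 = \{(0,0)\}$ and $\dom\psi_2 = \{(0,0)\}$ (where the fused interval is degenerate or the shifted domain adds nothing new); both check out under your formulas, which is worth a sentence of verification.
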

\begin{proof}
	Because $\dom \psi_{1}$ and $\dom \psi_{2}$ are hybrid time domains, according to Definition \ref{definition:hybridtimedomain}, for each $(T_{1}, J_{1})\in \dom \psi_{1}$ and $(T_{2}, J_{2})\in \dom \psi_{2}$,
	$$
	\dom \psi_{1} \cap ([0, T_{1}]\times \{0,1,..., J_{1}\}) = \cup_{j = 0}^{J_{1} }([t^{1}_{j}, t^{1}_{j + 1}], j)
	$$
	for some finite sequence of times $0=t^{1}_{0}\leq t^{1}_{1}\leq t^{1}_{2}\leq ...\leq t^{1}_{J_{1} + 1}  = T_{1}$ and 
	$$
	\dom \psi_{2} \cap ([0, T_{2}]\times \{0,1,..., J_{2}\}) = \cup_{j = 0}^{J_{2} }([t^{2}_{j}, t^{2}_{j + 1}], j)
	$$
	for some finite sequence of times $0=t^{2}_{0}\leq t^{2}_{1}\leq t^{2}_{2}\leq ...\leq t^{2}_{J_{2} + 1} = T_{2}$.
	
	Since $\dom \psi = \dom \psi_{1} \cup (\dom \psi_{2} + \{(T, J)\})$ where $(T, J) = \max \dom \psi_{1}$, then for each $(T', J')\in \dom \psi$,
	\begin{equation}\label{equation:concatenation_proof1}
		\begin{aligned}
		&\dom \psi \cap ([0, T']\times \{0,1,..., J'\}) \\
		&= \dom \psi_{1} \cup (\dom \psi_{2} + \{(T, J)\})\\
		& \cap ([0, T']\times \{0,1,..., J'\}) \\
		&= ( \dom \psi_{1}  \cap ([0, T']\times \{0,1,..., J'\})) \cup \\
		&(\dom \psi_{2} + \{(T, J)\})\cap ([0, T']\times \{0,1,..., J'\}) \\
		\end{aligned}
	\end{equation}
	
	Since $(T', J')\in \dom \psi$ and $\dom \psi = \dom \psi_{1} \cup (\dom \psi_{2} + \{(T, J)\})$, either of the following hold
	\begin{enumerate}
		\item $(T', J') \in \dom \psi_{1}$ such that $T'\leq T$ and $J'\leq J$;
		\item $(T', J') \in  (\dom \psi_{2} + \{(T, J)\})$ such that $T'\geq T$ and $J'\geq J$;
	\end{enumerate}
	
	For the first item, (\ref{equation:concatenation_proof1}) can be written as
	$$
	\begin{aligned}
	&\dom \psi \cap ([0, T']\times \{0,1,..., J'\})\\
	&= ( \dom \psi_{1}  \cap ([0, T']\times \{0,1,..., J'\})) \cup \\
	&(\dom \psi_{2} + \{(T, J)\})\cap ([0, T']\times \{0,1,..., J'\}) \\
	& = \cup_{j = 0}^{J'}([t^{1}_{j}, t^{1}_{j + 1}], j) \cup \emptyset = \cup_{j = 0}^{J'}([t^{1}_{j}, t^{1}_{j + 1}], j)
	\end{aligned}
	$$
	for some finite sequence of times $0=t^{1}_{0}\leq t^{1}_{1}\leq t^{1}_{2}\leq ...\leq t^{1}_{J'+ 1} = T'$. 
	
	For the second item, (\ref{equation:concatenation_proof1}) can be written as
	\begin{equation}
	\begin{aligned}
	&\dom \psi \cap ([0, T']\times \{0,1,..., J'\})\\
	&= ( \dom \psi_{1}  \cap ([0, T']\times \{0,1,..., J'\})) \cup \\
	&(\dom \psi_{2} + \{(T, J)\})\cap ([0, T']\times \{0,1,..., J'\}) \\
	& = ( \dom \psi_{1}  \cap ([0, T]\times \{0,1,..., J\})) \cup \\
	&(\dom \psi_{2} + \{(T, J)\})\cap ([T, T']\times \{J,J + 1,..., J'\}) \\
	& = ( \dom \psi_{1}  \cap ([0, T]\times \{0,1,..., J\})) \cup \\
	&(\dom \psi_{2}\cap ([0, T' - T]\times \{0,1,..., J' - J\}) + \{(T, J)\})\\
	& = \cup_{j = 0}^{J}([t^{1}_{j}, t^{1}_{j + 1}], j) \cup (\cup_{j = 0}^{J' - J}([t^{2}_{j}, t^{2}_{j + 1}], j) + \{(T, J)\})\\
	& = \cup_{j = 0}^{J}([t^{1}_{j}, t^{1}_{j + 1}], j) \cup (\cup_{j = 0}^{J' - J}([t^{2}_{j} + T, t^{2}_{j + 1} + T], j + J))\\
	\end{aligned}
	\end{equation}
	for some finite sequence of times $0=t^{1}_{0}\leq t^{1}_{1}\leq t^{1}_{2}\leq ...\leq t^{1}_{J + 1} = T$ and $T=t^{2}_{0} + T\leq t^{2}_{1} + T\leq t^{2}_{2}+ T\leq ...\leq t^{2}_{J' - J + 1} + T= T' $. Hence, a finite sequence $\{t_{i}\}_{i = 0}^{J'}$ can be constructed as follows such that $0=t_{0}\leq t_{1}\leq t_{2}\leq ...\leq t_{J'} = T'$.
	$$
	t_{i} = \left\{\begin{aligned}
	t^{1}_{i}\qquad &i\leq J\\
	t^{2}_{i - J} + T\qquad &J + 1\leq i \leq J'\\
	\end{aligned}\right.
	$$
	
	Therefore, there always exists a finite sequence of $\{t_{i}\}_{i = 0}^{J'}$ such that for each $(T', J')\in \dom \psi$,
	$$
	\dom \psi \cap ([0, T']\times \{0,1,..., J'\}) =  \cup_{j = 0}^{J'}([t_{j}, t_{j + 1}], j). 
	$$ Hence, $\dom \psi$ is a hybrid time domain.  \ifbool{conf}{\qed}{}
\end{proof}
\begin{definition}(Lebesgue measurable set and Lebesgue measurable function)\label{definition:lebesguemeasurable}
	A set $S\subset \myreals$ is said to be \emph{Lebesgue measurable}  if it has positive Lebesgue measure $\mu(S)$, where $\mu(S) = \sum_{k} (b_{k} - a_{k})$ and $(a_{k}, b_{k})$'s are all of the disjoint open intervals in $S$. Then $s: S\to \myreals[n_{s}]$ is said to be  a \emph{Lebesgue measurable function} if for every open set $\mathcal{U}\subset \myreals[n_{s}]$ then set $\{r\in S: s(r) \in \mathcal{U}\}$ is Lebesgue measurable.
\end{definition}
\begin{lemma}\label{lemma:concatenation_input_Lebesguemeasurable}
	Given two hybrid inputs, denoted $\myinputt_{1}$ and $\myinputt_{2}$, their concatenation $\myinputt = \myinputt_{1}|\myinputt_{2}$, is such that for each $j\in \nnumbers$, $t\mapsto \myinputt(t,j)$ is Lebesgue measurable on the interval $I^{j}_{\myinputt}:=\{t:(t, j)\in \dom \myinputt\}$.
\end{lemma}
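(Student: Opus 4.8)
The plan is to fix an arbitrary $j \in \nnumbers$ and show directly that the slice $t \mapsto \myinputt(t,j)$ is Lebesgue measurable on $I^{j}_{\myinputt}$, splitting into three cases according to the position of $j$ relative to $J$, where $(T,J) = \max \dom \myinputt_{1}$ is the hybrid time at which the second input is grafted onto the first. The first fact I would record is structural: since $\dom \myinputt = \dom \myinputt_{1} \cup (\dom \myinputt_{2} + \{(T,J)\})$ by item 1 of Definition~\ref{definition:concatenation}, and every hybrid-time coordinate in the shifted domain $\dom \myinputt_{2} + \{(T,J)\}$ has $j$-value at least $J$, the slices with $j < J$ come entirely from $\myinputt_{1}$, the slices with $j > J$ come entirely from $\myinputt_{2}$, and only the slice $j = J$ mixes both functions.

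For $j < J$, I would note that $I^{j}_{\myinputt} = I^{j}_{\myinputt_{1}}$ and that every $(t,j)\in\dom\myinputt_{1}$ with $j<J$ is distinct from $(T,J)$, so by item 2 of Definition~\ref{definition:concatenation} one has $\myinputt(t,j) = \myinputt_{1}(t,j)$; measurability then follows verbatim from the hypothesis that $\myinputt_{1}$ is a hybrid input (Definition~\ref{definition:hybridinput}). For $j > J$, I would use that $I^{j}_{\myinputt} = T + I^{j-J}_{\myinputt_{2}}$ and $\myinputt(t,j) = \myinputt_{2}(t-T, j-J)$, so the slice is the composition of the affine translation $t \mapsto t-T$ with the measurable slice $\myinputt_{2}(\cdot, j-J)$. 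Since the preimage of an open set under such a translate is merely the $T$-translate of a Lebesgue measurable set and Lebesgue measure is translation invariant, the slice is measurable. Both of these cases are routine.

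The crux is the slice $j = J$, where $I^{J}_{\myinputt} = I^{J}_{\myinputt_{1}} \cup (T + I^{0}_{\myinputt_{2}})$ and the two constituent intervals overlap exactly at the single point $t = T$. Here I would first establish a gluing fact: if $I = A \cup B$ with $A, B$ Lebesgue measurable and the restrictions $f|_{A}$, $f|_{B}$ Lebesgue measurable, then $f$ is Lebesgue measurable on $I$, because for every open $\mathcal{U}$ the set $\{t\in I : f(t)\in\mathcal{U}\}$ equals $\{t\in A : f(t)\in\mathcal{U}\}\cup\{t\in B : f(t)\in\mathcal{U}\}$, a union of two measurable sets. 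Taking $A = I^{J}_{\myinputt_{1}}$, on $A\backslash\{T\}$ the slice equals $\myinputt_{1}(\cdot, J)$, while at $t=T$ it equals $\myinputt_{2}(0,0)$, this value being dictated by item 2 of Definition~\ref{definition:concatenation}, which assigns $(T,J)$ to the second branch; since modifying a measurable function on the null set $\{T\}$ preserves measurability, $f|_{A}$ is measurable. Taking $B = T + I^{0}_{\myinputt_{2}}$, the slice equals the translate $\myinputt_{2}(\cdot-T, 0)$, measurable by the translation argument already used; note the two branches agree on the overlap $A\cap B = \{T\}$ since $\myinputt_{2}(T-T,0)=\myinputt_{2}(0,0)$, so there is no conflict. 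The gluing fact then yields measurability on $I^{J}_{\myinputt}$, completing the case and the proof.

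I expect the only genuine obstacle to be bookkeeping in the $j=J$ case: one must verify that the overlap of the two intervals is exactly $t=T$, that the value $\myinputt(T,J)$ is unambiguously $\myinputt_{2}(0,0)$, and that degenerate situations, where $I^{J}_{\myinputt_{1}}$ or $I^{0}_{\myinputt_{2}}$ reduces to a single point, are absorbed without difficulty, since measurability on a singleton is automatic and the gluing argument is insensitive to such degeneracy. Everything else is a direct transcription of translation invariance and the measurability hypotheses on $\myinputt_{1}$ and $\myinputt_{2}$.
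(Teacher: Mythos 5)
Your proposal is correct, and its skeleton coincides with the paper's proof: both fix $j\in\mathbb{N}$ and split into the three cases $j<J$, $j>J$, $j=J$, with the first two cases handled identically (restriction to $\myinputt_{1}$, respectively a $T$-translate of a slice of $\myinputt_{2}$). The difference is in the critical mixed case $j=J$, and there your treatment is actually tighter than the paper's. The paper writes the $J$-slice piecewise (agreeing with your observation that $\myinputt(T,J)=\myinputt_{2}(0,0)$), but then argues via ranges: it forms $\widetilde{\mathcal{U}}_{1},\widetilde{\mathcal{U}}_{2}$ as intersections of an open set $\mathcal{U}$ with the ranges of the two pieces, asserts a range identity that overlooks the reassigned value at $(T,J)$, and concludes measurability of the preimage $\{t\in I^{J}_{\myinputt}:\myinputt(t,J)\in\mathcal{U}\}$ from the fact that it \emph{contains} a union of measurable sets --- a non sequitur as written, since a superset of a measurable set need not be measurable. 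Your route replaces this with an exact decomposition: the gluing fact $\{t\in I:f(t)\in\mathcal{U}\}=\{t\in A:f(t)\in\mathcal{U}\}\cup\{t\in B:f(t)\in\mathcal{U}\}$ for $I=A\cup B$, together with the observation that $f|_{A}$ differs from the measurable slice $\myinputt_{1}(\cdot,J)$ only on the null set $\{T\}$ and that $f|_{B}$ is a translate of $\myinputt_{2}(\cdot,0)$. This yields an equality of preimages rather than an inclusion, and it also makes explicit the translation-invariance justification that the paper leaves implicit in the $j>J$ case. In short: same decomposition, but your key lemma (gluing plus null-set modification) repairs the logical gap in the paper's own handling of $j=J$.
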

\begin{proof}
	We show that for all $j\in \mathbb{N}$, the function $t\mapsto \myinputt(t, j)$ satisfies conditions in Definition \ref{definition:lebesguemeasurable} on the interval $I^{j}_{\myinputt}:=\{t:(t, j)\in \dom \myinputt\}$. 
	
	According to Definition \ref{definition:concatenation}, $\dom \psi = \dom \psi_{1} \cup (\dom \psi_{2} + \{(T, J)\})$ where $(T, J) = \max \dom \psi_{1}$. For each $j\in \nnumbers$, there could be three cases:
	1) $j < J$;
	2) $j > J$;
	3) $j = J$.
	Then we show in each case,  $t\mapsto \myinputt(t,j)$ is Lebesgue measurable.
	\begin{enumerate}[label=\Roman*]
		\item Consider the case when $j < J$. Since $\myinputt_{1}$ is a hybrid input, the functions $t\mapsto \myinputt_{1}(t, j)$ is Lebesgue measurable on the interval $I^{j}_{\myinputt_{1}}:=\{t:(t, j)\in \dom \myinputt_{1}\}$.  Definition \ref{definition:concatenation} implies that $\myinputt(t, j) = \myinputt_{1}(t, j)$ holds for all $(t, j)\in \dom \myinputt_{1}$ such that $j < J$. Then $t\mapsto \myinputt(t, j) = \myinputt_{1}(t, j) $ is Lebesgue measurable on the interval $I^{j}_{\psi} = I^{j}_{\psi_{1}}$ if $j < J$. 
		\item Consider the case when $j > J$. Since $\myinputt_{2}$ is a hybrid input, then $t\mapsto \myinputt_{2}(t, j)$ is Lebesgue measurable on the interval $I^{j}_{\psi_{2}}$. 
		Definition \ref{definition:concatenation} implies that  $\myinputt(t, j) = \myinputt_{2}(t - T, j - J)$ for all $(t, j)\in \dom \myinputt_{2} + \{(T, J)\}$ such that $j > J$. Then $t\mapsto \myinputt(t, j) = \myinputt_{2}(t - T, j - J)$ is Lebesgue measurable on the interval $I^{j}_{\psi} = I^{j - J}_{\psi_{2}} + \{T\}$ if $j > J$.
		\item Consider the case when $j = J$. According to Definition \ref{definition:concatenation}, 
		$$
		\myinputt(t, J) = \left\{
		\begin{aligned}
		&\myinputt_{1}(t, J) &t\in I^{J}_{\psi_{1}}\backslash \{T\}\\
		&\myinputt_{2}(t - T,0) &t \in I^{0}_{\psi_{2}} + \{T\}.\\
		\end{aligned}
		\right.
		$$
		Note that $t\mapsto \myinputt_{1}(t, J)$ is Lebesgue measurable on the interval $I_{\myinputt_{1}}^{J}$ and $t\mapsto \myinputt_{2}(t, 0)$ is Lebesgue measurable on the interval $I_{\myinputt_{2}}^{0}$. According to Definition \ref{definition:lebesguemeasurable}, 
		\begin{enumerate}
			\item for every open set $\mathcal{U}_{1}\subset \{\myinputt_{1}(t, J)\in \myreals[m]: t \in I_{\myinputt_{1}}^{J}\}$, the set $\{t\in I_{\myinputt_{1}}^{J}: \myinputt_{1}(t, J) \in \mathcal{U}_{1}\}$ is Lebesgue measurable;
			\item for every open set $\mathcal{U}_{2}\subset \{\myinputt_{2}(t, 0)\in \myreals[m]: t \in I_{\myinputt_{2}}^{0}\}$, the set $\{t\in I_{\myinputt_{2}}^{0}: \myinputt_{2}(t, 0) \in \mathcal{U}_{2}\}$ is Lebesgue measurable. 
		\end{enumerate} 
		Then, for any open set $\mathcal{U}\subset \{\myinputt(t, J)\in \myreals[m]: t \in I_{\myinputt}^{J}\}$, define $\widetilde{\mathcal{U}}_{1} := \mathcal{U}\cap \{\myinputt_{1}(t, J)\in \myreals[m]: t \in I_{\myinputt_{1}}^{J}\}$ and $\widetilde{\mathcal{U}}_{2} := \mathcal{U}\cap \{\myinputt_{2}(t, 0)\in \myreals[m]: t \in I_{\myinputt_{2}}^{0}\}$. Note that 
		$$
		\begin{aligned}
		&\{\myinputt(t, J)\in \myreals[m]: t \in I_{\myinputt}^{J}\} = \\
		& \{\myinputt_{1}(t, J)\in \myreals[m]: t \in I_{\myinputt_{1}}^{J}\}\cup  \{\myinputt_{2}(t, 0)\in \myreals[m]: t \in I_{\myinputt_{2}}^{0}\}
		\end{aligned}$$
		Therefore, at least one of $\widetilde{\mathcal{U}}_{1} $ and $\widetilde{\mathcal{U}}_{2}$ is non-empty. Then the set $\{t\in I_{\myinputt}^{J}: \myinputt(t, J) \in \mathcal{U}\} \supset \{t\in I_{\myinputt_{1}}^{J}: \myinputt_{1}(t, J) \in \widetilde{\mathcal{U}}_{1}\} \cup \{t\in I_{\myinputt_{2}}^{0}: \myinputt_{2}(t, 0) \in \widetilde{\mathcal{U}}_{2}\} + \{T\}$. Note that at least one of the sets $\{t\in I_{\myinputt_{1}}^{J}: \myinputt_{1}(t, J) \in \widetilde{\mathcal{U}}_{1}\}$ and $\{t\in I_{\myinputt_{2}}^{0}: \myinputt_{2}(t, 0) \in \widetilde{\mathcal{U}}_{2}\}$ is Lebesgue measurable. Therefore, the set $\{t\in I_{\myinputt}^{J}: \myinputt(t, J) \in \mathcal{U}\} $  is Lebesgue measurable.
	\end{enumerate}
	Therefore, $t\mapsto \myinputt(t, j)$ is Lebesgue measurable on the interval $I_{\myinputt}^{j}$ for all $j\in \mathbb{N}$. \ifbool{conf}{\qed}{}
\end{proof}
\begin{definition} (Locally essentially bounded function)\label{definition:locallybounded}
	A function $s: S\to \myreals[n_{s}]$ is said to be \emph{locally essentially bounded} if for any $r\in S$ there exists an open neighborhood $\mathcal{U}$ of $r$ such that $s$ is bounded almost everywhere\endnote{A property is said to hold almost everywhere or for almost all points in a set $S$ if the set of elements of $S$ at which the property does not hold has zero Lebesgue measure.} on $\mathcal{U}$; i.e., there exists $c\geq 0$ such that $|s(r)|\leq c$ for almost all $r\in \mathcal{U}\cap S$.
\end{definition}
\begin{lemma}\label{lemma:concatenation_input_locallyessentiallybounded}
	Given two hybrid inputs, denoted $\myinputt_{1}$ and $\myinputt_{2}$, their concatenation $\myinputt = \myinputt_{1}|\myinputt_{2}$, is such that for each $j\in \nnumbers$, $t\mapsto \myinputt(t,j)$ is locally essentially bounded on the interval $I^{j}_{\myinputt}:=\{t:(t, j)\in \dom \myinputt\}$.
\end{lemma}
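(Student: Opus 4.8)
The plan is to mirror the three-case argument used in the proof of Lemma~\ref{lemma:concatenation_input_Lebesguemeasurable}, replacing the measurability property by local essential boundedness as defined in Definition~\ref{definition:locallybounded}. Recall from Definition~\ref{definition:concatenation} that $\dom \myinputt = \dom \myinputt_{1} \cup (\dom \myinputt_{2} + \{(T, J)\})$ with $(T, J) = \max \dom \myinputt_{1}$, and that $\myinputt(t, j) = \myinputt_{1}(t, j)$ on $\dom \myinputt_{1} \setminus \{(T, J)\}$ while $\myinputt(t, j) = \myinputt_{2}(t - T, j - J)$ on $\dom \myinputt_{2} + \{(T, J)\}$. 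For each fixed $j \in \nnumbers$ I would distinguish the cases $j < J$, $j > J$, and $j = J$.

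For $j < J$, the identity $\myinputt(t, j) = \myinputt_{1}(t, j)$ holds on all of $I^{j}_{\myinputt} = I^{j}_{\myinputt_{1}}$, so local essential boundedness of $t \mapsto \myinputt(t, j)$ follows immediately from that of $t \mapsto \myinputt_{1}(t, j)$, which holds because $\myinputt_{1}$ is a hybrid input. For $j > J$, the identity $\myinputt(t, j) = \myinputt_{2}(t - T, j - J)$ holds on $I^{j}_{\myinputt} = I^{j - J}_{\myinputt_{2}} + \{T\}$; since a rigid time shift by $T$ maps open neighborhoods to open neighborhoods and preserves Lebesgue-null sets, local essential boundedness transfers directly from $t \mapsto \myinputt_{2}(t, j - J)$.

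The only delicate case is $j = J$, where $I^{J}_{\myinputt} = (I^{J}_{\myinputt_{1}} \setminus \{T\}) \cup (I^{0}_{\myinputt_{2}} + \{T\})$ and the two constituent functions are joined at the seam $t = T$. For any point $r \in I^{J}_{\myinputt}$ with $r \neq T$, the point lies in the relative interior of one of the two pieces, so an open neighborhood contained in that single piece inherits the almost-everywhere bound from the corresponding hybrid input. At $r = T$, I would take a bound $c_{1}$ valid almost everywhere on a left neighborhood, obtained from the local essential boundedness of $\myinputt_{1}(\cdot, J)$ near $T$, and a bound $c_{2}$ valid almost everywhere on a right neighborhood, obtained from the local essential boundedness of $\myinputt_{2}(\cdot, 0)$ near $0$; the single point $T$ has zero Lebesgue measure and may be discarded. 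Setting $c = \max\{c_{1}, c_{2}\}$ then yields $|\myinputt(t, J)| \leq c$ for almost all $t$ in the intersected neighborhood, establishing the property at the seam.

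The main obstacle is precisely this seam point: one must verify that the essential bound survives when gluing the two one-sided bounds, which is why it is important that $\{T\}$ is Lebesgue-null and that Definition~\ref{definition:locallybounded} only requires boundedness almost everywhere rather than everywhere. Assembling the three cases shows that $t \mapsto \myinputt(t, j)$ is locally essentially bounded on $I^{j}_{\myinputt}$ for every $j \in \nnumbers$, which completes the proof and, together with Lemma~\ref{lemma:concatenation_input_Lebesguemeasurable} and Lemma~\ref{lemma:concatenationHTD}, verifies that $\myinputt = \myinputt_{1}|\myinputt_{2}$ is a hybrid input.
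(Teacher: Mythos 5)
Your proof is correct and follows essentially the same route as the paper's: the same three-case split on $j < J$, $j > J$, and $j = J$ induced by the domain decomposition $\dom \myinputt = \dom \myinputt_{1} \cup (\dom \myinputt_{2} + \{(T, J)\})$, with the first two cases inherited directly from the hybrid-input properties of $\myinputt_{1}$ and $\myinputt_{2}$. If anything, your handling of the seam $r = T$ — gluing a one-sided essential bound $c_{1}$ from $\myinputt_{1}(\cdot, J)$ and a one-sided bound $c_{2}$ from $\myinputt_{2}(\cdot, 0)$ via $c = \max\{c_{1}, c_{2}\}$ after discarding the Lebesgue-null point $\{T\}$ — is more careful than the paper's case III, which only considers $r$ as belonging to one of the two pieces and asserts that the corresponding one-sided bound holds almost everywhere on a full neighborhood intersected with $I^{J}_{\myinputt}$, a step that is not justified at $r = T$ where both pieces meet.
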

\begin{proof}
	We show that for all $j\in \mathbb{N}$, the function $t\mapsto \myinputt(t, j)$ satisfies conditions in Definition \ref{definition:locallybounded} on the interval $I^{j}_{\myinputt}:=\{t:(t, j)\in \dom \myinputt\}$. 
	
	According to Definition \ref{definition:concatenation}, $\dom \psi = \dom \myinputt_{1} \cup (\dom \myinputt_{2} + \{(T, J)\})$ where $(T, J) = \max \dom \myinputt_{1}$. For each $j\in \nnumbers$, there could be three cases:
		1) $j < J$;
		2) $j > J$;
		3) $j = J$.
	Then we show in each case,  $t\mapsto \myinputt(t,j)$ is locally essentially bounded.
	\begin{enumerate}[label=\Roman*]
		\item Consider the case when $j < J$. Since $\myinputt_{1}$ is a hybrid input, the functions $t\mapsto \myinputt_{1}(t, j)$ is locally essentially bounded on the interval $I^{j}_{\myinputt_{1}}:=\{t:(t, j)\in \dom \myinputt_{1}\}$.  Definition \ref{definition:concatenation} implies that $\myinputt(t, j) = \myinputt_{1}(t, j)$ holds for all $(t, j)\in \dom \myinputt_{1}$ such that $j < J$. Then $t\mapsto \myinputt(t, j) = \myinputt_{1}(t, j) $ is locally essentially bounded on the interval $I^{j}_{\psi} = I^{j}_{\psi_{1}}$ if $j < J$. 
		\item Consider the case when $j > J$. Since $\myinputt_{2}$ is a hybrid input, then $t\mapsto \myinputt_{2}(t, j)$ is locally essentially bounded on the interval $I^{j}_{\myinputt_{2}}$. 
		Definition \ref{definition:concatenation} implies that  $\myinputt(t, j) = \myinputt_{2}(t - T, j - J)$ for all $(t, j)\in \dom \myinputt_{2} + \{(T, J)\}$ such that $j > J$. Then $t\mapsto \myinputt(t, j) = \myinputt_{2}(t - T, j - J)$ is locally essentially bounded on the interval $I^{j}_{\myinputt} = I^{j - J}_{\myinputt_{2}} + \{T\}$ if $j > J$.
		\item Consider the case when $j = J$. According to Definition \ref{definition:concatenation}, 
		$$
		\myinputt(t, J) = \left\{
		\begin{aligned}
		&\myinputt_{1}(t, J) &t\in I^{J}_{\myinputt_{1}}\backslash \{T\}\\
		&\myinputt_{2}(t - T,0) &t \in I^{0}_{\myinputt_{2}} + \{T\}.\\
		\end{aligned}
		\right.
		$$
		
		Note that $t\mapsto \myinputt_{1}(t, j)$ is locally bounded on the interval $I^{J}_{\myinputt_{1}}$ and $t\mapsto \myinputt_{2}(t, 0)$ is Lebesgue measurable on the interval $I_{\myinputt_{2}}^{0}$. According to Definition \ref{definition:locallybounded},
		\begin{enumerate}
			\item for any $r\in I_{\myinputt_{1}}^{J}$, there exists a neighborhood $\mathcal{U}_{1}$ of $r$ such that there exists $c_{1}\geq 0$ such that $|\myinputt_{1}(r)| \leq c_{1}$ for almost all $r\in \mathcal{U}_{1}\cap I_{\myinputt_{1}}^{J}$;
			\item for any $r\in I_{\myinputt_{2}}^{0}$, there exists a neighborhood $\mathcal{U}_{2}$ of $r$ such that there exists $c_{2}\geq 0$ such that $|\myinputt_{2}(r)| \leq c_{2}$ for almost all $r\in \mathcal{U}_{1}\cap I_{\myinputt_{2}}^{0}$.
		\end{enumerate} 
		Then, for any $r\in I_{\myinputt}^{J} = I_{\myinputt_{1}}^{J} \cup I_{\myinputt_{2}}^{0} + \{T\}$, consider the following two cases: 1) $r \in I_{\myinputt_{1}}^{J}$ and 2) $r\in I_{\myinputt_{2}}^{0} + \{T\}$. 
		\begin{enumerate}
			\item If $r \in I_{\myinputt_{1}}^{J}$, then there exists a neighborhood $\mathcal{U}_{1}$ of $r$ such that there exists $c_{1}\geq 0$ such that $|\myinputt(r)| \leq c_{1}$ for almost all $r\in \mathcal{U}_{1}\cap I_{\myinputt}^{J}$.
			\item If $r\in I_{\myinputt_{2}}^{0} + \{T\}$, then there exists a neighborhood $\mathcal{U}_{2}$ of $r$ such that there exists $c_{2}\geq 0$ such that $|\myinputt_{2}(r)| \leq c_{2}$ for almost all $r\in \mathcal{U}_{1}\cap I_{\myinputt_{2}}^{0} + \{T\}$.
		\end{enumerate}
		Therefore, $t\mapsto \myinputt(t, J)$ is locally essentially bounded on the interval $I_{\myinputt}^{J}$.
	\end{enumerate}
	Therefore, $t\mapsto \myinputt(t, j)$ is locally bounded on the interval $I_{\myinputt}^{j}$ for all $j\in \mathbb{N}$. \ifbool{conf}{\qed}{}
\end{proof}
\begin{lemma}\label{lemma:concatenation_hybridinput}
	Given two hybrid inputs, denoted $\myinputt_{1}$ and $\myinputt_{2}$, then their concatenation $\myinputt = \myinputt_{1}|\myinputt_{2}$ is also a hybrid input.
\end{lemma}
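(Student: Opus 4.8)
The plan is to verify directly that $\myinputt = \myinputt_{1}|\myinputt_{2}$ meets each of the three defining requirements of a hybrid input in Definition~\ref{definition:hybridinput}: that its domain is a hybrid time domain, and that for each $j\in\nnumbers$ the map $t\mapsto\myinputt(t,j)$ is both Lebesgue measurable and locally essentially bounded on $I^{j}_{\myinputt}$. The key observation is that each of these three requirements has already been isolated and proved in a dedicated preceding lemma, so the argument collapses to assembling them. Throughout, I would use that the concatenation $\myinputt_{1}|\myinputt_{2}$ is defined only when $\myinputt_{1}$ is compact (per Definition~\ref{definition:concatenation}), which legitimizes writing $(T,J)=\max\dom\myinputt_{1}$.

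First I would invoke Lemma~\ref{lemma:concatenationHTD}, applied with the roles of $\psi_{1}$ and $\psi_{2}$ played by $\myinputt_{1}$ and $\myinputt_{2}$, to conclude that $\dom\myinputt = \dom\myinputt_{1}\cup(\dom\myinputt_{2}+\{(T,J)\})$ is a hybrid time domain. Next, since $\myinputt_{1}$ and $\myinputt_{2}$ are hybrid inputs, Lemma~\ref{lemma:concatenation_input_Lebesguemeasurable} yields that $t\mapsto\myinputt(t,j)$ is Lebesgue measurable on $I^{j}_{\myinputt}$ for every $j\in\nnumbers$, and Lemma~\ref{lemma:concatenation_input_locallyessentiallybounded} yields that the same map is locally essentially bounded on $I^{j}_{\myinputt}$ for every $j$. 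These three facts are precisely the defining conditions in Definition~\ref{definition:hybridinput}, so $\myinputt$ is a hybrid input, completing the proof.

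Because the substance of the argument is carried entirely by the three cited lemmas, I expect no remaining obstacle in this statement itself; the only step is confirming that the hypotheses of those lemmas hold, which is immediate once $\myinputt_{1},\myinputt_{2}$ are hybrid inputs with $\myinputt_{1}$ compact. The genuine difficulty lives in the earlier lemmas, and in particular in the junction case $j=J$, where the concatenated input is stitched together from $\myinputt_{1}(\cdot,J)$ on $I^{J}_{\myinputt_{1}}\backslash\{T\}$ and a shifted copy of $\myinputt_{2}(\cdot,0)$ on $I^{0}_{\myinputt_{2}}+\{T\}$; establishing measurability and essential boundedness across that seam requires the careful set-theoretic bookkeeping already performed there. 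Hence this lemma is best viewed as the corollary that packages those three technical results into the single conclusion needed downstream.
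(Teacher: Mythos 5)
Your proposal is correct and matches the paper's proof essentially verbatim: the paper likewise verifies the three conditions of Definition~\ref{definition:hybridinput} by citing Lemma~\ref{lemma:concatenationHTD} for the hybrid time domain, Lemma~\ref{lemma:concatenation_input_Lebesguemeasurable} for measurability, and Lemma~\ref{lemma:concatenation_input_locallyessentiallybounded} for local essential boundedness. Your added remark about compactness of $\myinputt_{1}$ legitimizing $(T,J)=\max\dom\myinputt_{1}$ is a sound (if implicit in the paper) observation, and nothing else differs.
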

\begin{proof}
	We show that $\myinputt = \myinputt_{1}|\myinputt_{2}$ satisfies the conditions in Definition \ref{definition:hybridinput} as follows.
	\begin{enumerate}
		\item We show that $\dom \myinputt$ is hybrid time domain;
		\item We show that for each $j\in \nnumbers$, $t\to \myinputt(t, j)$ is Lebesgue measurable on the interval $I^{j}_{\myinputt}:=\{t:(t, j)\in \dom \myinputt\}$.
		\item We show that for each $j\in \nnumbers$, $t\to \myinputt(t, j)$ is locally essentially bounded on the interval $I^{j}_{\myinputt}:=\{t:(t, j)\in \dom \myinputt\}$. 
	\end{enumerate}

	Note that $\myinputt_{1}$ and $\myinputt_{2}$ are hybrid inputs. Therefore, Lemma \ref{lemma:concatenationHTD} guarantees that $\dom \myinputt$ is hybrid time domain. Lemma \ref{lemma:concatenation_input_Lebesguemeasurable} and Lemma \ref{lemma:concatenation_input_locallyessentiallybounded} show that $\myinputt$ is Lebesgue measurable and locally essentially bounded on the interval $I^{j}_{\myinputt}:=\{t:(t, j)\in \dom \myinputt\}$, respectively. Therefore, $\myinputt$ is a hybrid input.\ifbool{conf}{\qed}{}
\end{proof}
\begin{definition} (Absolutely continuous function and locally absolutely continuous function \cite[Definition A.20]{sanfelice2021hybrid})\label{definition:locallycontinuous}
	A function $s: [a, b]\to \myreals[n]$ is said to be \emph{absolutely continuous} if for each $\varepsilon > 0$ there exists $\delta > 0$ such that for each countable collection of disjoint subintervals $[a_{k}, b_{k}]$ of $[a, b]$ such that $\sum_{k}(b_{k} - a_{k}) \leq \delta$, it follows that $\sum_{k}|s(b_{k}) - s(a_{k})| \leq \varepsilon$. A function is said to be locally absolutely continuous if $r\mapsto s(r)$ is absolutely continuous on each compact subinterval of $\mypreals$.
\end{definition}
\begin{lemma}\label{lemma:concatenation_state_locallyabsolutelycontinuous}
	Given two hybrid arcs, denoted $\mystatet_{1}$ and $\mystatet_{2}$, such that $\mystatet_{1}(T, J) = \mystatet_{2}(0,0)$, where $(T, J) = \max \dom \mystatet_{1}$, their concatenation $\mystatet = \mystatet_{1}|\mystatet_{2}$, is such that for each $j\in \nnumbers$, $t\mapsto \mystatet(t,j)$ is locally absolutely continuous on the interval $I^{j}_{\mystatet}:=\{t:(t, j)\in \dom \mystatet\}$.
\end{lemma}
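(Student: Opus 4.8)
The plan is to follow the same three-case decomposition over the jump index $j\in\nnumbers$ used in the proofs of Lemma~\ref{lemma:concatenation_input_Lebesguemeasurable} and Lemma~\ref{lemma:concatenation_input_locallyessentiallybounded}. By Definition~\ref{definition:concatenation}, $\dom \mystatet = \dom \mystatet_{1}\cup(\dom \mystatet_{2}+\{(T,J)\})$ with $(T,J)=\max\dom \mystatet_{1}$, so for each fixed $j$ the interval $I^{j}_{\mystatet}$ is assembled from the corresponding interval of $\mystatet_{1}$ (when $j<J$), of $\mystatet_{2}$ shifted by $T$ (when $j>J$), or a glued combination of both (when $j=J$). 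I would verify local absolute continuity on $I^{j}_{\mystatet}$ in each of these cases separately.

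The cases $j<J$ and $j>J$ are immediate. When $j<J$, Definition~\ref{definition:concatenation} gives $\mystatet(t,j)=\mystatet_{1}(t,j)$ for all $t\in I^{j}_{\mystatet}=I^{j}_{\mystatet_{1}}$; since $\mystatet_{1}$ is a hybrid arc, $t\mapsto\mystatet_{1}(t,j)$ is locally absolutely continuous, hence so is $t\mapsto\mystatet(t,j)$. When $j>J$, Definition~\ref{definition:concatenation} gives $\mystatet(t,j)=\mystatet_{2}(t-T,j-J)$ on $I^{j}_{\mystatet}=I^{j-J}_{\mystatet_{2}}+\{T\}$; a time shift preserves local absolute continuity, so the claim follows from $\mystatet_{2}$ being a hybrid arc.

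The crux is the case $j=J$, where $I^{J}_{\mystatet}=I^{J}_{\mystatet_{1}}\cup(I^{0}_{\mystatet_{2}}+\{T\})$ and
$$
\mystatet(t,J)=\begin{cases}\mystatet_{1}(t,J)&t\in I^{J}_{\mystatet_{1}}\setminus\{T\},\\ \mystatet_{2}(t-T,0)&t\in I^{0}_{\mystatet_{2}}+\{T\}.\end{cases}
$$
The two pieces are individually locally absolutely continuous and, by the hypothesis $\mystatet_{1}(T,J)=\mystatet_{2}(0,0)$, they agree at the join $t=T$; I would prove that their gluing is locally absolutely continuous on any compact subinterval $[a,b]\subset I^{J}_{\mystatet}$. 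Fixing $\varepsilon>0$, I would choose $\delta_{1},\delta_{2}>0$ from the absolute continuity of $t\mapsto\mystatet_{1}(t,J)$ on $[a,T]$ and of $t\mapsto\mystatet_{2}(t-T,0)$ on $[T,b]$ (each delivering an $\varepsilon/2$ bound), and set $\delta=\min\{\delta_{1},\delta_{2}\}$. Given any finite collection of disjoint subintervals of $[a,b]$ with total length at most $\delta$, I would sort them into those lying in $[a,T]$ and those lying in $[T,b]$, splitting the at-most-one subinterval that straddles $T$ into a left and a right part at $T$. Here the matching condition $\mystatet(T,J)=\mystatet_{1}(T,J)=\mystatet_{2}(0,0)$ is exactly what makes the increment across $T$ consistent, so the left parts are controlled by $\delta_{1}$ via $\mystatet_{1}$ and the right parts by $\delta_{2}$ via $\mystatet_{2}$, yielding a total variation bound of $\varepsilon/2+\varepsilon/2=\varepsilon$.

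The main obstacle is precisely this straddling subinterval in the $j=J$ case: absolute continuity is not preserved under arbitrary gluing, and the argument only closes because the hypothesis forces continuity of $\mystatet(\cdot,J)$ at $t=T$. Everything else reduces to invoking that $\mystatet_{1}$ and $\mystatet_{2}$ are hybrid arcs together with the shift-invariance of the local absolute continuity property.
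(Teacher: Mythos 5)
Your proposal is correct and follows essentially the same route as the paper's proof: the same three-case split on $j<J$, $j>J$, $j=J$, with the $j=J$ case handled by partitioning the disjoint subintervals at $t=T$ and invoking the matching condition $\mystatet_{1}(T,J)=\mystatet_{2}(0,0)$ to kill the increment across the junction. If anything, your forward choice of $\delta=\min\{\delta_{1},\delta_{2}\}$ with $\varepsilon/2$ per piece is cleaner than the paper's somewhat backwards formulation ($\delta=\delta_{1}+\delta_{2}$ introduced after the estimate), but the underlying argument is identical.
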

\begin{proof}
	We show that for all $j\in \mathbb{N}$, the function $t\mapsto \mystatet(t, j)$ satisfies the conditions in Definition \ref{definition:locallycontinuous}.
	
	According to Definition \ref{definition:concatenation}, $\dom \mystatet = \dom \mystatet_{1} \cup (\dom \mystatet_{2} +\{(T, J)\})$ where $(T, J) = \max \dom \mystatet_{1}$.
	Therefore, 
	$$
		I^{j}_{\mystatet} = \left\{ \begin{aligned}
		&I^{j}_{\mystatet_{1}} &\text{ if }j< J\\
		&I^{J}_{\mystatet_{1}} \cup (I^{0}_{\mystatet_{2}} +  \{T\}) &\text{ if }j = J\\
		&I^{j - J}_{\mystatet_{2}} + T&\text{ if }j > J
		\end{aligned}\right.
	$$
	
	The following cases are considered to prove that $\mystatet$ is absolutely continuous on each interval $I^{j}_{\mystatet}$.
	\begin{enumerate}[label=\Roman*]
		\item Consider the case when $j < J$. Since $\mystatet_{1}$ is a hybrid arc, according to Definition \ref{definition:hybridarc}, $t\mapsto \mystatet_{1}(t, j)$ is locally absolutely continuous on the interval $I_{\mystatet_{1}}^{j} = I_{\mystatet}^{j}$ such that $j < J$. Note that according to Definition \ref{definition:concatenation}, $\mystatet(t, j) = \mystatet_{1}(t, j)$ for all $t\in I_{\mystatet_{1}}^{j} = I^{j}_{\mystatet}$ where $j < J$. Therefore, $t\mapsto \mystatet(t, j) =\mystatet_{1}(t, j)$ is locally absolutely continuous on the interval $I_{\mystatet}^{j}$ such that $j < J$.
		\item Consider the case when $j> J$. Since $\mystatet_{2}$ is a hybrid arc, according to Definition \ref{definition:hybridarc}, then $t\mapsto \mystatet_{2}(t - T, j - J)$ is locally absolutely continuous on the interval $I_{\mystatet_{2}}^{j - J} + \{T\}= I^{j}_{\mystatet}$ where $j > J$. Note that according to Definition \ref{definition:concatenation}, $\mystatet(t, j) = \mystatet_{2}(t - T, j - J)$ for all $t\in I_{\mystatet_{2}}^{j - J} + \{T\} = I^{j}_{\mystatet}$ where $j > J$. Therefore, $t\mapsto  \mystatet_{2}(t - T, j - J) = \mystatet(t, j)$ is locally absolutely continuous on the interval $I_{\mystatet}^{j} = I_{\mystatet_{2}}^{j - J} + \{T\}$ if $j > J$.
		\item Consider the case when $j = J$ and $I_{\mystatet}^{j} = I_{\mystatet_{1}}^{J}\cup (I_{\mystatet_{2}}^{0} + \{T\})$. Since $t\mapsto \mystatet(t, J)$ is locally absolutely continuous on the intervals $I_{\mystatet_{1}}^{J}$ and $(I_{\mystatet_{2}}^{0} + \{T\})$,  respectively, then
		\begin{enumerate}
			\item for each $\varepsilon_{1} > 0$ there exists $\delta_{1} > 0$ such that for each countable collection of disjoint subintervals $[a^{1}_{k_{1}}, b^{1}_{k_{1}}]$ of $I_{\mystatet_{1}}^{J}$ such that $\sum_{k_{1}}(b^{1}_{k_{1}} - a^{1}_{k_{1}}) \leq \delta_{1}$, it follows that $\sum_{k_{1}}|\mystatet(b^{1}_{k_{1}}) - \mystatet(a^{1}_{k_{1}})| \leq \varepsilon_{1}$;
			\item for each $\varepsilon_{2} > 0$ there exists $\delta_{2} > 0$ such that for each countable collection of disjoint subintervals $[a^{2}_{k_{2}}, b^{2}_{k_{2}}]$ of $I_{\mystatet_{2}}^{0} + \{T\}$ such that $\sum_{k_{2}}(b^{2}_{k_{2}} - a^{2}_{k_{2}}) \leq \delta_{2}$, it follows that $\sum_{k_{2}}|\mystatet(b^{2}_{k_{2}}) - \mystatet(a^{2}_{k_{2}})| \leq \varepsilon_{2}$.
		\end{enumerate}
		Then for each $\varepsilon > 0$, assume that $\delta > 0$ exists such that for each countable collection of disjoint subintervals $[a_{k}, b_{k}]$ of $I_{\mystatet_{1}}^{J}\cup (I_{\mystatet_{2}}^{0} + \{T\})$ such that $\sum_{k}(b_{k} - a_{k}) \leq \delta$, it follows that $\sum_{k}|\mystatet(b^{2}_{k}) - \mystatet(a^{2}_{k})| \leq \varepsilon$. 
		
		Partition the collection of $[a_{k}, b_{k}]$ as follows
		\begin{enumerate}
			\item $\{[a^{1}_{k_{1}}, b^{1}_{k_{1}}]\}_{k_{1}} \leftarrow \{[a_{k}, b_{k}]\}_{k}\cap I_{\mystatet_{1}}^{J}$;
			\item $\{[a^{2}_{k_{2}}, b^{2}_{k_{2}}]\}_{k_{2}} \leftarrow \{[a_{k}, b_{k}]\}_{k}\cap (I_{\mystatet_{2}}^{0} + \{T\})$.
		\end{enumerate}
	Then compute $\varepsilon_{1} = \sum_{k_{1}}|\mystatet(b^{1}_{k_{1}}) - \mystatet(a^{1}_{k_{1}})|$ and $\varepsilon_{2} = \sum_{k_{2}}|\mystatet(b^{2}_{k_{2}}) - \mystatet(a^{2}_{k_{2}})|$. Note that 
	$$\begin{aligned}
	&\sum_{k}|\mystatet(b^{2}_{k}) - \mystatet(a^{2}_{k})| \\
	&=  \sum_{k_{1}}|\mystatet(b^{1}_{k_{1}}) - \mystatet(a^{1}_{k_{1}})| 
	+ \sum_{k_{2}}|\mystatet(b^{2}_{k_{2}}) - \mystatet(a^{2}_{k_{2}})|\\
	& + |\phi_{2}(0, 0) - \phi_{1}(T, J)|, \\
	\end{aligned}$$
	and it is assumed that $\phi_{2}(0, 0) = \phi_{1}(T, J)$. Then
	$$
	\sum_{k}|\mystatet(b^{2}_{k}) - \mystatet(a^{2}_{k})| = \varepsilon_{1} + \varepsilon_{2} + 0.
	$$
	For $\varepsilon_{1}$ and $\varepsilon_{2}$, there exists $\delta_{1}$ and $\delta_{2}$ such that $\sum_{k_{1}}(b^{1}_{k_{1}} - a^{1}_{k_{1}}) \leq \delta_{1}$ and $\sum_{k_{2}}(b^{2}_{k_{2}} - a^{2}_{k_{2}}) \leq \delta_{2}$ hold, respectively. Then we can find $\delta = \delta_{1} + \delta_{2}$ because 
	
	$\sum_{k}(b_{k} - a_{k}) =  \sum_{k_{1}}(b^{1}_{k_{1}} - a^{1}_{k_{1}}) + \sum_{k_{2}}(b^{2}_{k_{2}} - a^{2}_{k_{2}}) \leq \delta_{1} + \delta_{2} = \delta.$
	\end{enumerate}
	Therefore, the function $t\mapsto \mystatet(t, j)$ is locally absolutely continuous on the interval $I_{\mystatet}^{j}$ for all $j\in \mathbb{N}$.\ifbool{conf}{\qed}{}
\end{proof}
\begin{lemma}\label{lemma:concatenation_hybridarc}
	Given two hybrid arcs, denoted $\mystatet_{1}$ and $\mystatet_{2}$, such that $\mystatet_{1}(T, J) = \mystatet_{2}(0,0)$, where $(T, J) = \max \dom \mystatet_{1}$, then their concatenation $\mystatet = \mystatet_{1}|\mystatet_{2}$ is also a hybrid arc.
\end{lemma}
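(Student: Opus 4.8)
The plan is to verify directly that $\mystatet = \mystatet_{1}|\mystatet_{2}$ satisfies the two defining requirements of a hybrid arc in Definition~\ref{definition:hybridarc}: that its domain is a hybrid time domain, and that for each $j\in\nnumbers$ the map $t\mapsto\mystatet(t,j)$ is locally absolutely continuous on the interval $I^{j}_{\mystatet}:=\{t:(t,j)\in\dom\mystatet\}$. This mirrors exactly the structure used to prove Lemma~\ref{lemma:concatenation_hybridinput} for hybrid inputs, where the defining clauses of Definition~\ref{definition:hybridinput} were discharged one at a time by invoking the corresponding component lemmas established beforehand.

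First I would treat the domain. Since $\mystatet_{1}$ and $\mystatet_{2}$ are hybrid arcs, their domains $\dom\mystatet_{1}$ and $\dom\mystatet_{2}$ are hybrid time domains by Definition~\ref{definition:hybridarc}. Because Lemma~\ref{lemma:concatenationHTD} is stated for arbitrary functions defined on hybrid time domains, it applies without modification and yields that $\dom\mystatet = \dom\mystatet_{1}\cup(\dom\mystatet_{2}+\{(T,J)\})$ is again a hybrid time domain. Next I would invoke local absolute continuity: the hypothesis $\mystatet_{1}(T,J)=\mystatet_{2}(0,0)$ is precisely the matching condition required by Lemma~\ref{lemma:concatenation_state_locallyabsolutelycontinuous}, and that lemma directly establishes that $t\mapsto\mystatet(t,j)$ is locally absolutely continuous on $I^{j}_{\mystatet}$ for every $j\in\nnumbers$. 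With both clauses of Definition~\ref{definition:hybridarc} verified, the conclusion that $\mystatet$ is a hybrid arc follows immediately.

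I expect essentially no obstacle of its own in this step, since all substantive work has already been carried out in Lemma~\ref{lemma:concatenationHTD} and Lemma~\ref{lemma:concatenation_state_locallyabsolutelycontinuous}; the proof is a bookkeeping assembly of these two results. The only point warranting emphasis is the role of the hypothesis $\mystatet_{1}(T,J)=\mystatet_{2}(0,0)$: it is exactly what prevents a jump discontinuity at the splice hybrid time $(T,J)$, and it is indispensable for absolute continuity across the $J$-th interval $I^{J}_{\mystatet}=I^{J}_{\mystatet_{1}}\cup(I^{0}_{\mystatet_{2}}+\{T\})$. This explains why the matching condition appears in the statement of Lemma~\ref{lemma:concatenation_hybridarc} whereas no analogous condition was needed for the hybrid-input case in Lemma~\ref{lemma:concatenation_hybridinput}, since inputs are only required to be measurable and locally essentially bounded rather than continuous at the splice.
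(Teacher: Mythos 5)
Your proof is correct and follows the paper's own argument exactly: both verify the two clauses of Definition~\ref{definition:hybridarc} by citing Lemma~\ref{lemma:concatenationHTD} for the hybrid time domain and Lemma~\ref{lemma:concatenation_state_locallyabsolutelycontinuous} (which uses the matching condition $\mystatet_{1}(T,J)=\mystatet_{2}(0,0)$) for local absolute continuity. Your added remark on why the matching hypothesis is needed here but not in the hybrid-input case is accurate, though not part of the paper's proof.
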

\begin{proof}
	We show that $\mystatet = \mystatet_{1}|\mystatet_{2}$ satisfies the conditions in Definition \ref{definition:hybridarc} as follows.
	\begin{enumerate}
		\item We show that $\dom \mystatet$ is hybrid time domain;
		\item We show that, for each $j\in \nnumbers$, $t\mapsto \mystatet(t,j)$ is locally absolutely continuous on the interval $I^{j}_{\mystatet}:=\{t:(t, j)\in \dom \mystatet\}$. 
	\end{enumerate}
	
	Note that $\mystatet_{1}$ and $\mystatet_{2}$ are functions defined on hybrid time domains. Therefore, Lemma \ref{lemma:concatenationHTD} guarantees that $\dom \mystatet$ is hybrid time domain. Lemma \ref{lemma:concatenation_state_locallyabsolutelycontinuous} guarantees that for each $j\in \nnumbers$, $t\mapsto \mystatet(t,j)$ is locally absolutely continuous on the interval $I^{j}_{\mystatet}:=\{t:(t, j)\in \dom \mystatet\}$. Therefore, $\mystatet$ is a hybrid arc. \ifbool{conf}{\qed}{}
\end{proof}

\begin{lemma}\label{lemma:concatenation_solution_c}
	Given two solution pairs, denoted $\psi_{1}$ and $\psi_{2}$ such that $\psi_{2}(0, 0)\in C$ if both $I_{\psi_{1}}^{J}$ and $I_{\psi_{2}}^{0}$ have nonempty interior, where, for each $j\in \{0, J\}$, $I_{\psi}^{j} = \{t: (t, j)\in \dom \psi\}$ and $(T, J) = \max \dom \psi_{1}$, then their concatenation $\psi = \psi_{1}|\psi_{2}$ satisfies that for each $j\in \mathbb{N}$ such that $I^{j}_{\psi}$ has nonempty interior $\interior(I^{j}_{\psi})$, $\psi = (\mystatet, \myinputt)$ satisfies
	$$
	(\mystatet(t, j),\myinputt(t, j))\in C
	$$ for all $t\in \interior I^{j}_{\mystatet}$.
\end{lemma}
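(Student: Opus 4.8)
The plan is to verify the flow-membership condition one value of $j$ at a time, using the explicit description of the domain of the concatenation. Writing $(T, J) = \max \dom \psi_{1}$, Definition~\ref{definition:concatenation} together with the domain computation underlying Lemma~\ref{lemma:concatenationHTD} gives
$$
I^{j}_{\psi} = \begin{cases} I^{j}_{\psi_{1}} & j < J,\\ I^{J}_{\psi_{1}} \cup (I^{0}_{\psi_{2}} + \{T\}) & j = J,\\ I^{j - J}_{\psi_{2}} + \{T\} & j > J. \end{cases}
$$
First I would handle the two cases in which $\psi$ coincides with a single piece. For $j < J$, every point of $\dom \psi$ at level $j$ lies in $\dom \psi_{1} \backslash \{(T, J)\}$, so $\psi(t, j) = \psi_{1}(t, j)$; since $\psi_{1}$ is a solution pair, item~2 of Definition~\ref{definition:solution} gives $(\mystatet(t, j), \myinputt(t, j)) \in C$ for all $t \in \interior I^{j}_{\psi} = \interior I^{j}_{\psi_{1}}$. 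For $j > J$, the shift identity $\psi(t, j) = \psi_{2}(t - T, j - J)$ and the solution-pair property of $\psi_{2}$ give the claim after the change of variable $s = t - T$.

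The substantive case is $j = J$, where $I^{J}_{\psi}$ is the union of the final flow interval of $\psi_{1}$ and the shifted first flow interval of $\psi_{2}$, glued at $t = T$. I would split $\interior I^{J}_{\psi}$ into three parts. On $\interior I^{J}_{\psi_{1}}$ one has $\psi = \psi_{1}$, so the solution-pair property of $\psi_{1}$ applies; on $\interior(I^{0}_{\psi_{2}} + \{T\})$ one has $\psi(t, J) = \psi_{2}(t - T, 0)$, so the solution-pair property of $\psi_{2}$ applies. The only point not covered by these two is the seam $t = T$, which is the right endpoint of $I^{J}_{\psi_{1}}$ and the left endpoint of $I^{0}_{\psi_{2}} + \{T\}$, and therefore excluded a priori from the interior of each individual interval --- this is precisely why the solution-pair conditions on $\psi_{1}$ and $\psi_{2}$ never forced membership in $C$ at that point.

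The crux is thus to show $(\mystatet(T, J), \myinputt(T, J)) \in C$ in the event that $T \in \interior I^{J}_{\psi}$. Here I would argue that $T$ is interior to the union precisely when both $I^{J}_{\psi_{1}}$ and $I^{0}_{\psi_{2}}$ have nonempty interior: if either is a singleton, then $T$ remains an endpoint of the union and is not in $\interior I^{J}_{\psi}$. But ``both intervals have nonempty interior'' is exactly the antecedent under which the hypothesis of the lemma guarantees $\psi_{2}(0, 0) \in C$. Since $(T, J) \in \dom \psi_{2} + \{(T, J)\}$, Definition~\ref{definition:concatenation} applied to each component yields $\mystatet(T, J) = \mystatet_{2}(0, 0)$ and $\myinputt(T, J) = \myinputt_{2}(0, 0)$, that is, $\psi(T, J) = \psi_{2}(0, 0) \in C$. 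Assembling the three parts establishes the flow condition on all of $\interior I^{J}_{\psi}$, and together with the cases $j \neq J$ this proves the lemma. The one delicate point is the bookkeeping that decides when $T$ is interior to, rather than an endpoint of, the glued interval; once this is aligned with the hypothesis, the conclusion is immediate.
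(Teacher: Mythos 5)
Your proposal is correct and follows essentially the same route as the paper's proof: the same three-way split of $I^{j}_{\psi}$ into $j<J$, $j>J$, and $j=J$, with the first two cases inherited directly from the solution-pair properties of $\psi_{1}$ and $\psi_{2}$, and the seam point $t=T$ in the $j=J$ case covered by the hypothesis $\psi_{2}(0,0)\in C$ (your explicit observation that $T$ is interior to the glued interval exactly when both $I^{J}_{\psi_{1}}$ and $I^{0}_{\psi_{2}}$ have nonempty interior, and that $\psi(T,J)=\psi_{2}(0,0)$, is the same bookkeeping the paper performs, stated slightly more carefully).
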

\begin{proof}
	We show that for all $j\in \mathbb{N}$ such that $I^{j}_{\psi}$ has  nonempty interior, $\psi(t, j)\in C$ for all $t\in \interior I^{j}_{\psi}$.
	
	According to Definition \ref{definition:concatenation}, $\dom \psi = \dom \psi_{1} \cup (\dom \psi_{2} +\{(T, J)\})$ where $(T, J) = \max \dom \psi_{1}$.
	Therefore, 
	$$
	I^{j}_{\psi} = \left\{ \begin{aligned}
	&I^{j}_{\psi_{1}} &\text{ if }j< J\\
	&I^{J}_{\psi_{1}} \cup (I^{0}_{\psi_{2}} +  \{T\}) &\text{ if }j = J\\
	&I^{j - J}_{\psi_{2}} + T&\text{ if }j > J
	\end{aligned}\right.
	$$
	
	\begin{enumerate}[label=\Roman*]
		\item Consider the case of all $j< J$ such that $I^{j}_{\psi_{1}}$ has nonempty interior. Since $\psi_{1}$ is a solution pair to $\mathcal{H}$, according to Definition \ref{definition:solution}, 
		$
		\psi_{1}(t, j)\in C
		$ for all 
		$t\in \interior I^{j}_{\psi_{1}}.
		$ Because $\psi(t, j) = \psi_{1}(t, j)$ for all $t \in \interior I^{j}_{\psi_{1}}$, then $\psi(t, j) = \psi_{1}(t, j)\in C$ for all $t\in \interior I^{j}_{\psi} = I^{j}_{\psi_{1}}$.
		\item Consider the case of all $j>J$ such that $I^{j - J}_{\psi_{2}} + \{T\}$ has nonempty interior. Since $\psi_{2}$ is a solution pair  to $\mathcal{H}$, according to Definition \ref{definition:solution}, then 
		$
		\psi_{2}(t, j)\in C
		$ for all 
		$
		t\in \interior I^{j}_{\psi_{2}}.
		$ 
		Because $\psi(t, j) = \psi_{2}(t - T, j - J)$ for all $t\in I_{\psi_{2}}^{j - J} + \{T\}$,  then 
		$
		\psi(t, j) = \psi_{2}(t - T, j - J)\in C
		$
		for all $t\in \interior I_{\psi}^{j} = I^{j - J}_{\psi_{2}} + \{T\}$.
		\item Consider the case when $j = J$ and $I_{\psi}^{J} = I_{\psi_{1}}^{J}\cup (I_{\psi_{2}}^{0} + \{T\})$ has nonempty interior. If $I_{\psi_{1}}^{J}$ or  $I_{\psi_{2}}^{0} + \{T\}$ has nonempty interior while the other has empty interior, then it is the same as above two items to show that $\psi(t, j) \in C$ holds for all $t\in \interior I_{\psi}^{J}$.
		
		If both $I_{\psi_{1}}^{J}$ and $(I_{\psi_{2}}^{0} + \{T\})$ has nonempty interior, since we have had 
		$$
		\psi(t, j) = \psi_{1}(t, j)\in C \quad \forall t\in \interior I_{\psi_{1}}^{J}
		$$ and 
		$$
		\begin{aligned}
		\psi(t, j) = \psi_{2}(t - T, j - J)\in C\\ \forall t\in \interior I_{\psi_{2}}^{0} + \{T\},
		\end{aligned}
		$$ then $\psi(t, j)\in C$ for all $t\in \interior I^{j}_{\psi}\cup (\interior I_{\psi_{2}}^{0} + \{T\})$.
		
		In addition,  since the point $\psi_{2}(0, 0)$ is assumed to belong to $C$ in this case, then $\psi(t, J) \in C$ holds  for all $t\in \interior I_{\psi}^{J}  = \interior (I_{\psi_{1}}^{J} \cup (I_{\psi_{2}}^{0} + \{T\}))$.
		
	\end{enumerate}
	Therefore, $\psi(t, j)\in C$ holds for all $t\in \interior I^{j}_{\psi}$ for all $j\in \mathbb{N}$ such that $I_{\psi}^{j}$ has nonempty interior.\ifbool{conf}{\qed}{}
\end{proof}
\begin{lemma}\label{lemma:concatenation_solution_f}
	Given two solution pairs, denoted $\psi_{1}$ and $\psi_{2}$, then their concatenation $\psi = \psi_{1}|\psi_{2}$ satisfies that for each $j\in \mathbb{N}$ such that $I^{j}_{\psi}$ has nonempty interior $\interior(I^{j}_{\psi})$, $\psi = (\mystatet, \myinputt)$ satisfies
	$$
	\frac{\text{d}}{\text{d} t} {\mystatet}(t,j) = f(\mystatet(t,j), \myinputt(t,j))
	$$ for almost all $t\in I^{j}_{\mystatet}$.
\end{lemma}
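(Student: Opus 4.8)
The plan is to mirror the case analysis used in the proof of Lemma~\ref{lemma:concatenation_solution_c}, reusing the same decomposition of the time slices of $\dom \psi$. By Definition~\ref{definition:concatenation}, writing $(T, J) = \max \dom \psi_{1}$, we have
$$
I^{j}_{\psi} = \left\{ \begin{aligned}
&I^{j}_{\psi_{1}} &\text{ if }j < J\\
&I^{J}_{\psi_{1}} \cup (I^{0}_{\psi_{2}} + \{T\}) &\text{ if }j = J\\
&I^{j - J}_{\psi_{2}} + T &\text{ if }j > J.
\end{aligned}\right.
$$
First I would fix an arbitrary $j\in\mathbb{N}$ with $\interior I^{j}_{\psi} \neq \emptyset$ and split into the three cases $j < J$, $j > J$, and $j = J$, verifying in each that $\frac{\text{d}}{\text{d}t}\mystatet(t, j) = f(\mystatet(t, j), \myinputt(t, j))$ for almost all $t \in I^{j}_{\psi}$.

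For $j < J$, Definition~\ref{definition:concatenation} gives $\mystatet(t, j) = \mystatet_{1}(t, j)$ and $\myinputt(t, j) = \myinputt_{1}(t, j)$ on all of $I^{j}_{\psi} = I^{j}_{\psi_{1}}$, so the conclusion is immediate from item~2 in Definition~\ref{definition:solution} applied to the solution pair $\psi_{1}$. For $j > J$, Definition~\ref{definition:concatenation} gives $\mystatet(t, j) = \mystatet_{2}(t - T, j - J)$ and $\myinputt(t, j) = \myinputt_{2}(t - T, j - J)$ on $I^{j}_{\psi} = I^{j - J}_{\psi_{2}} + \{T\}$. Since the translation $t \mapsto t - T$ has unit derivative and preserves null sets, the desired identity at index $j$ is equivalent to the corresponding identity for $\psi_{2}$ at index $j - J$, which holds for almost all $t - T \in I^{j - J}_{\psi_{2}}$ by item~2 in Definition~\ref{definition:solution} applied to $\psi_{2}$; this closes the case.

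The substantive case is $j = J$, where $I^{J}_{\psi}$ is the union of $I^{J}_{\psi_{1}}$ and $I^{0}_{\psi_{2}} + \{T\}$, two intervals meeting only at the single point $t = T$. On $I^{J}_{\psi_{1}} \setminus \{T\}$ the concatenation coincides with $\psi_{1}$, and on $(I^{0}_{\psi_{2}} + \{T\}) \setminus \{T\}$ it coincides with the shifted $\psi_{2}$, so the two preceding arguments show that the differential equation holds for almost every $t$ in each of these subintervals separately (the statement being vacuous for any subinterval with empty interior). Because the overlap $\{T\}$ has Lebesgue measure zero, the union of the two full-measure subsets is of full measure in $I^{J}_{\psi}$, yielding the claim for almost all $t \in I^{J}_{\psi}$ and completing the proof.

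The main obstacle is exactly this $j = J$ case: one must argue carefully that no information about the derivative at the joining time $t = T$ is required, since $T$ is a single point of measure zero and the value assigned to $\mystatet$ there (read off from either $\mystatet_{1}$ or $\mystatet_{2}$) is immaterial to an ``almost all'' conclusion. One must also track which of $I^{J}_{\psi_{1}}$ and $I^{0}_{\psi_{2}}$ actually has nonempty interior, so that a degenerate subinterval contributes only a vacuously true statement and the full-measure conclusion on $I^{J}_{\psi}$ follows regardless. Notably, unlike Lemma~\ref{lemma:concatenation_solution_c}, no extra hypothesis relating $\psi_{2}(0,0)$ to $C$ is needed here, precisely because the differential-equation condition is insensitive to the single transition point.
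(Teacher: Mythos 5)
Your proposal is correct and follows essentially the same route as the paper's proof: the same three-way case split on $j < J$, $j > J$, and $j = J$ derived from the concatenated domain, with the conclusion inherited from item 2 of Definition~\ref{definition:solution} applied to $\psi_{1}$ and (after time shift) to $\psi_{2}$, and the $j = J$ slice handled by gluing the two almost-everywhere statements across the single junction point $t = T$. Your treatment of the measure-zero point $\{T\}$ and of the degenerate-subinterval situation is in fact slightly more explicit than the paper's, but it is the same argument.
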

\begin{proof}
	We show that for all $j\in \mathbb{N}$ such that $I^{j}_{\psi}$ has  nonempty interior, for almost all $t\in I_{\psi}^{j}$,
	$
	\dot{\mystatet} = f(\mystatet(t, j), \myinputt(t, j)).
	$
	
	According to Definition \ref{definition:concatenation}, $\dom \psi = \dom \psi_{1} \cup (\dom \psi_{2} +\{(T, J)\})$ where $(T, J) = \max \dom \psi_{1}$.
	Therefore, 
	$$
	I^{j}_{\psi} = \left\{ \begin{aligned}
	&I^{j}_{\psi_{1}} &\text{ if }j< J\\
	&I^{J}_{\psi_{1}} \cup (I^{0}_{\psi_{2}} +  \{T\}) &\text{ if }j = J\\
	&I^{j - J}_{\psi_{2}} + T&\text{ if }j > J
	\end{aligned}\right.
	$$
	\begin{enumerate}[label=\Roman*]
		\item Consider the case of all $j < J$ such that $I^{j}_{\psi_{1}}$ has  nonempty interior. Since $\psi_{1}$ is a solution pair, then $\dot{\mystatet}_{1} (t, j)= f(\mystatet_{1}(t, j), \myinputt_{1}(t, j))$ for almost all $t\in \interior I^{j}_{\psi_{1}}$. Because 
		$$
		(\mystatet(t, j), \myinputt(t, j)) = (\mystatet_{1}(t, j), \myinputt_{1}(t, j))
		$$ for all $(t, j)\in \dom \psi_{1}\backslash (T, J)$, therefore, 
		$$
		\begin{aligned}
		\dot{\mystatet}(t, j) &= \dot{\mystatet}_{1}(t, j) \\
		&= f(\mystatet_{1}(t, j), \myinputt_{1}(t, j)) = f(\mystatet(t, j), \myinputt(t, j))
		\end{aligned}
		$$ for almost all $t\in \interior I^{j}_{\psi}$.
		\item Consider the case of all $j > J$ such that $I^{j- J}_{\psi_{2}}$ has  nonempty interior. Since $\psi_{2}$ is a solution pair and $I^{j- J}_{\psi_{2}}$ has nonempty interior, $\dot{\mystatet}_{2}(t, j) = f(\mystatet_{2}(t- T, j - J), \myinputt_{2}(t- T, j- J))$ for almost all $t\in \interior I^{j - J}_{\psi_{2}} +\{T\}$. Because 
		$$
		\begin{aligned}
		&(\mystatet(t, j), \myinputt(t, j)) = \\&(\mystatet_{2}(t - T, j - J), \myinputt_{2}(t - T, j - J))
		\end{aligned}
		$$
		for all $t\in I_{\psi_{2}}^{j - J} + \{T\}$,  then 
		$$
		\begin{aligned}
		\dot{\mystatet}(t, j) &=\dot{\mystatet}_{2}(t - T, j - J) \\
		&= f(\mystatet_{2}(t - T, j - J), \myinputt_{2}(t - T, j - J)) \\
		&= f(\mystatet(t, j), \myinputt(t, j))
		\end{aligned}
		$$
		for almost all $t\in \interior I^{j}_{\mystatet}$.
		\item Consider the case when $j = J$ and $I_{\psi}^{J} = I_{\psi_{1}}^{J}\cup (I_{\psi_{2}}^{0} + \{T\})$ has nonempty interior. If $I_{\psi_{1}}^{J}$ or  $I_{\psi_{2}}^{0} + \{T\}$ has nonempty interior while the other has empty interior, then it is the same as above two items to show that for almost all $t\in I_{\psi}^{j}$,
		$
		\dot{\mystatet} = f(\mystatet(t, j), \myinputt(t, j)).
		$
		If both $I_{\psi_{1}}^{J}$ and $(I_{\psi_{2}}^{0} + \{T\})$ has nonempty interior, then the interval $I_{\psi}^{J} =  I_{\psi_{1}}^{J} \cup (I_{\psi_{2}}^{0} + \{T\})$. Since we have had $\dot{\mystatet}(t, J) = f(\mystatet(t, J), \myinputt(t, J))$ for almost all $t\in I_{\psi_{1}}^{J}$ and $t\in I_{\psi_{2}}^{0} + \{T\}$, then $\dot{\mystatet}(t, j)= f(\mystatet(t, j), \myinputt(t, j))$ holds for $t\in I_{\psi}^{J} =  I_{\psi_{1}}^{J} \cup (I_{\psi_{2}}^{0} + \{T\})$.
	\end{enumerate}
	Therefore, $ \dot{\mystatet}(t, j) = f(\mystatet(t, j), \myinputt(t, j))$ holds for almost all $t\in I_{\psi}^{j}$.\ifbool{conf}{\qed}{}
\end{proof}
\begin{lemma}\label{lemma:concatenation_solution_dg}
	Given two solution pairs, denoted $\psi_{1}$ and $\psi_{2}$, such that $\mystatet_{1}(T, J) = \mystatet_{2}(0,0)$, where $(T, J) = \max \dom \mystatet_{1}$, then their concatenation $\psi = \psi_{1}|\psi_{2} = (\mystatet, \myinputt)$ satisfies that for all $(t,j)\in \dom (\mystatet, \myinputt)$ such that $(t,j + 1)\in \dom (\mystatet, \myinputt)$, 
$$
	\begin{aligned}
	(\mystatet(t, j), \myinputt(t, j))&\in D\\
	\mystatet(t,j+ 1) &= g(\mystatet(t,j), \myinputt(t, j)).
	\end{aligned}
$$
\end{lemma}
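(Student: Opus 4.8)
The plan is to verify item 3 of Definition \ref{definition:solution} for $\psi = (\mystatet, \myinputt)$ by enumerating every jump time of $\psi$ — that is, every $(t,j)\in\dom\psi$ with $(t,j+1)\in\dom\psi$ — and classifying it according to whether it originates from $\psi_1$, from $\psi_2$, or from the junction $(T,J) = \max\dom\psi_1$. I reuse the domain decomposition already established in the proofs of Lemma \ref{lemma:concatenation_solution_c} and Lemma \ref{lemma:concatenation_solution_f}, namely $\dom\psi = \dom\psi_1 \cup (\dom\psi_2 + \{(T,J)\})$, and I split the argument into the three cases $j<J$, $j>J$, and $j=J$.

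For $j<J$, the point $(t,j)$ lies in $\dom\psi_1$ and is distinct from $(T,J)$, so $\mystatet(t,j) = \mystatet_1(t,j)$ and $\myinputt(t,j) = \myinputt_1(t,j)$. I first argue that $(t,j+1)\in\dom\psi_1$ as well, the only boundary subcase being $j = J-1$ with $t = T$, where the landing point is exactly $(T,J)\in\dom\psi_1$; hence the jump at $(t,j)$ is a genuine jump of $\psi_1$. Since $\psi_1$ is a solution pair, item 3 of Definition \ref{definition:solution} gives $(\mystatet_1(t,j),\myinputt_1(t,j))\in D$ and $\mystatet_1(t,j+1) = g(\mystatet_1(t,j),\myinputt_1(t,j))$, and substituting the concatenation values transfers this to $\psi$. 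In the boundary subcase where $(t,j+1) = (T,J)$, Definition \ref{definition:concatenation} assigns $\mystatet(T,J) = \mystatet_2(0,0)$, so there I invoke the standing hypothesis $\mystatet_1(T,J) = \mystatet_2(0,0)$ to conclude $\mystatet(T,J) = \mystatet_1(T,J) = g(\mystatet(t,j),\myinputt(t,j))$.

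For $j>J$, the point $(t,j)$ lies in $\dom\psi_2 + \{(T,J)\}$, and the shift identities $\mystatet(t,j) = \mystatet_2(t-T,j-J)$ and $\myinputt(t,j) = \myinputt_2(t-T,j-J)$ apply; since $(t-T,j-J)$ is then a jump time of the solution pair $\psi_2$, item 3 of Definition \ref{definition:solution} for $\psi_2$ yields the required identities after the shift. The remaining case $j=J$ is the delicate one and is where I expect the main obstacle to lie: I must show that any jump time with $j=J$ satisfies $t\ge T$ — because $(t,J+1)\notin\dom\psi_1$ forces $(t-T,1)\in\dom\psi_2$ and hence $t\ge T$ — so that it again originates from $\psi_2$. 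When $t>T$ this is handled exactly as in the $j>J$ case. When $t=T$ (the junction), the crucial point is that Definition \ref{definition:concatenation} assigns the overlap point the values of $\psi_2$, i.e. $\mystatet(T,J) = \mystatet_2(0,0)$ and $\myinputt(T,J) = \myinputt_2(0,0)$, while $\mystatet(T,J+1) = \mystatet_2(0,1)$; since $(0,0)$ is a jump time of $\psi_2$, the jump condition for $\psi_2$ transfers directly.

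The principal care needed throughout is the bookkeeping at $(T,J)$: making sure the concatenation's convention of taking $\psi_2$'s value (rather than $\psi_1$'s) at the overlap point is used consistently, and that the hypothesis $\mystatet_1(T,J) = \mystatet_2(0,0)$ reconciles the two descriptions wherever a jump of $\psi_1$ terminates at the junction. Once this is pinned down, every jump time falls cleanly into one of the three cases, and item 3 of Definition \ref{definition:solution} holds for $\psi$.
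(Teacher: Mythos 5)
Your proof is correct and follows essentially the same route as the paper's: classify every jump time of $\psi$ by whether it comes from $\psi_{1}$, from $\psi_{2}$ (after the shift by $(T,J)$), or sits at the junction, transfer item 3 of Definition \ref{definition:solution} from the corresponding solution pair, and use the hypothesis $\mystatet_{1}(T,J)=\mystatet_{2}(0,0)$ to reconcile the concatenation's convention of taking $\psi_{2}$'s value at $(T,J)$ when a jump of $\psi_{1}$ lands there. The only difference is bookkeeping: you index the cases by $j<J$, $j=J$, $j>J$ (which makes exhaustiveness slightly more explicit), whereas the paper indexes by domain membership and isolates the junction jump as a ``special case,'' but the facts invoked at each step are identical.
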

\begin{proof}
	We show that for all $(t, j)\in \dom \psi$ such that $(t, j + 1)\in \dom \psi  = \dom \psi_{1} \cup (\dom \psi_{2} +\{(T, J)\})$ where $(T, J) = \max \dom \psi_{1}$,
	$$
	\begin{aligned}
	\psi(t, j) &\in D\\
	\mystatet(t, j + 1) &= g(\mystatet(t, j), \myinputt(t, j)) 
	\end{aligned}
	$$
	\begin{enumerate}
		\item For all $(t, j)$ such that $(t, j) \in \dom \psi_{1}$ and $(t, j + 1)\in \dom \psi_{1}$, since $\psi_{1}$ is a solution pair, then 
		$$
		\begin{aligned}
		\psi_{1}(t, j) &\in D\\
		\mystatet_{1}(t, j + 1) &= g(\mystatet_{1}(t, j), \myinputt_{1}(t, j)).\\
		\end{aligned}
		$$ Due to $(t, j + 1)\in \dom \psi_{1}$, then $(t, j)\neq (T, J)$. Since $\psi(t, j) = \psi_{1}(t, j)$ for all $(t, j)\in \dom \psi_{1}\backslash (T, J)$, then			
		$$
		\begin{aligned}
		\psi(t, j)  &= 	\psi_{1}(t, j)\in D\\
		\mystatet(t, j + 1) &= \mystatet_{1}(t, j + 1)\\
		&= g(\mystatet_{1}(t, j), \myinputt_{1}(t, j))\\
		& = g(\mystatet(t, j), \myinputt(t, j)) .
		\end{aligned}
		$$
		\item For all $(t, j)$ such that $(t, j) \in \dom \psi_{2} + \{(T, J)\}$ and $(t, j + 1)\in \dom \psi_{2} + \{(T, J)\}$, we can have $(t - T, j - J)\in \dom \psi_{2}$ and $(t - T, j - J + 1)\in \dom \psi_{2}$. Since $\psi_{2}$ is a solution pair, then 
		$$
		\begin{aligned}
		&\psi_{2}(t - T, j - J) \in D\\
		&\mystatet_{2}(t - T, j - J + 1) \\
		&= g(\mystatet_{2}(t - T, j - J), \myinputt_{2}(t - T, j - J)).
		\end{aligned}
		$$ Since $\psi(t, j) = \psi_{2}(t - T, j - J)$ for all $(t - T, j - J)\in \dom \psi_{2}$, then
		$$
		\begin{aligned}
		&\psi(t, j)  = 	\psi_{2}(t - T, j - J) \in D\\
		&\mystatet(t, j + 1) = \mystatet_{2}(t - T, j - J + 1)\\
		&= g(\mystatet_{2}(t - T, j - J), \myinputt_{2}(t - T, j - J))\\
		& = g(\mystatet(t, j), \myinputt(t, j)). 
		\end{aligned}
		$$
					\item A special case is that $(t, j) \in \dom \psi_{1}$ and $(t, j + 1) \in \dom \psi_{2} + \{(T, J)\}$. The only case is that $(t, j) = (T, J - 1)\in \dom \psi_{1}$ and $(t, j + 1) = (T, J)\in \dom \psi_{2} + \{(T, J)\}$. Note that it is assume that $\mystatet_{1}(T, J) = \mystatet_{2}(0,0)$. Therefore, we have
					$$
					\begin{aligned}
						\mystatet_{1}(T, J ) = \mystatet_{2}(0,0) = \mystatet(T, J)\\
					\mystatet_{1}(T, J - 1) = \mystatet(T, J - 1)\\
					\myinputt_{1}(T, J - 1) = \myinputt(T, J - 1)
					\end{aligned}
					$$
					Since $\psi_{1}$  is a solution pair of $\mathcal{H}$ and 
					$$
						(T,J - 1)\in \dom \psi_{1} \quad (T, J)\in \dom \psi_{1},
					$$
					according to Definition \ref{definition:solution}, we have
					$$
					\begin{aligned}
					\mystatet_{1}(T, J) = g(\mystatet_{1}(T, J - 1), \myinputt_{1}(T, J - 1))\\
					\psi_{1}(T, J - 1) = (\mystatet_{1}(T, J - 1), \myinputt_{1}(T, J - 1)) \in D.
					\end{aligned}
				$$
					Therefore, we have
					$$
					\begin{aligned}
					 &\mystatet(T, J) = \mystatet_{2}(0,0) = \mystatet_{1}(T, J) \\
					 &= g(\mystatet_{1}(T, J - 1), \myinputt_{1}(T, J - 1)) \\
					 & = g(\mystatet(T, J - 1), \myinputt(T, J - 1)) \\
					 \end{aligned}
					 $$
					 and
					 $$
					 \begin{aligned}
					 &(\mystatet(T, J - 1), \myinputt(T, J - 1)) \\
					 &= (\mystatet_{1}(T, J - 1), \myinputt_{1}(T, J - 1)) \\
					 &\in D.
					\end{aligned}
					$$
	\end{enumerate}\ifbool{conf}{\qed}{}
\end{proof}

\section{Proof of Proposition \ref{theorem:reversedarcbwsystem}}\label{appendix:proof_reverse}
Given the solution pair $\psi = (\phi, u)$ to a hybrid system $\fw{\mathcal{H}}$, we need to show that the reversal $\psi' = (\phi', u')$ of $\psi$  is a compact solution pair to its backward-in-time hybrid system $\mathcal{H}^{\text{bw}}$. The following items are showing that $\psi'$ satisfies each condition in Definition \ref{definition:solution}.
	\begin{itemize}
		\item The first item is to prove that the domain $\dom \phi'$ equals $\dom u'$ and $\dom \psi' := \dom \phi' =  \dom u'$ is a compact hybrid time domain. 
		
		Since $\psi = (\phi, u)$ is a solution pair to $\fw{\mathcal{H}}$, then 
		$$\dom \phi = \dom u.$$ 
		Due to Definition \ref{definition:reversedhybrdarc}, $\dom \phi' = \{(T, J)\} - \dom \phi$ and $\dom u' = \{(T, J)\} - \dom u$ where $(T, J) = \max \dom \psi$. Therefore,
		 $$
		 \begin{aligned}
		 	\dom \phi' &= \{(T, J)\} - \dom \phi\\
		 	&= \{(T, J)\} - \dom u = \dom u'.
		 \end{aligned}$$
		Thus, $\dom \phi' = \dom u'$ is proved.
		
		Then we want to show that $\dom \psi' = \dom \phi' = \dom u'$ is a hybrid time domain. Since $\psi$ is a compact solution pair, then $\dom \psi$ is a compact hybrid time domain. Therefore, 
		\begin{equation}
		\dom \psi  = \cup_{j = 0}^{J}([t_{j}, t_{j+1}],j)
		\end{equation}
		holds for some finite sequence of times, 
		\begin{equation}
		\label{sequence:reverse}
			0=t_{0}\leq t_{1}\leq t_{2}\leq ... \leq t_{J + 1} = T
		\end{equation}
		where $(T, J) = \max \dom \psi$.
		
		Since $\dom \psi' = \{(T, J)\} - \dom \psi$, then
		\begin{equation}
		\label{equation:intersectionreverse}
		\begin{aligned}
		\dom \psi'  &= \{(T, J)\} - \dom \psi\\
		&= \{(T, J)\} - \cup_{j = 0}^{J}([t_{j}, t_{j+1}],j) \\
		&= \cup_{j = 0}^{J}([T - t_{j+1}, T - t_{j}],J -j).\\
%
%
		\end{aligned}
		\end{equation}
		Let $t'_{j} = T - t_{J + 1 - j}$, where $t_{j}$ is the sequence of time in (\ref{sequence:reverse}). Since (\ref{sequence:reverse}) holds, then 
		$$
			0=t'_{0}\leq t'_{1}\leq t'_{2}\leq ... \leq t'_{J + 1} = T.
		$$
		Therefore, (\ref{equation:intersectionreverse}) can be written as
		\begin{equation}
			\begin{aligned}
			\dom \psi'   &= \cup_{j = 0}^{J}([t'_{j}, t'_{j + 1}], j).
			\end{aligned}
		\end{equation}
		Hence, $\dom \psi'$ is a compact hybrid time domain. 
		
		\item The second item is to prove that $\psi'(0, 0) \in \overline{C^{\text{bw}}} \cup D^{\text{bw}}$. Since $\psi$ is compact, the hybrid time $(T, J)\in \dom \psi$, where $(T, J) = \max \dom \psi$. The state $\phi(T, J)$ is either reached during a flow or at a jump. 
		\begin{enumerate}
			\item Consider the case when $\phi(T, J)$ is reached during a flow. In this case, $I_{\psi}^{J} = I_{\psi'}^{0}$ has nonempty interior. 
			
			Definition  \ref{definition:reversedhybrdarc} suggests that $$
			\phi'(0, 0) = \phi(T, J)
			$$ and 
			$u'(0, 0)$ is such that 
			$$(\phi'(0, 0), u'(0, 0))\in \overline{\fw{C}} = \overline{C^{\text{bw}}}.$$ Therefore, 
			$$\psi'(0, 0) = (\phi'(0, 0), u'(0, 0)) \in \overline{C^{\text{bw}}} \subset \overline{C^{\text{bw}}} \cup D^{\text{bw}}.$$
			\item Consider the case when $\psi(T, J)$ is reached at a jump. In this case, 
			$$
			(T, J - 1)\in \dom \psi \quad (T, J)\in \dom \psi.
			$$
			According to Definition \ref{definition:solution}, the state input pair 
			$$
			\begin{aligned}
			\psi(T, J - 1) &= (\phi(T, J -1), u(T, J - 1))\in \fw{D}\quad \\ 
			\phi(T, J) &= g(\phi(T, J - 1), u(T, J - 1)).
			\end{aligned}
			$$ According to Definition \ref{definition:reversedhybrdarc},
			$$
				\phi'(0, 0) = \phi(T, J) \quad u'(0, 0) = u'(T, J - 1). 
			$$
			Therefore, 
			$$\begin{aligned}
			(\phi'(0, 1), u'(0, 0)) &\in \fw{D}\\
			 \phi'(0, 0) &= g(\phi'(0, 1), u'(0,0)). 
			\end{aligned}
			$$

			Since $D^{\text{bw}}$ is defined as 
			$$
			D^{\text{bw}} = \{(x, u): \exists z\in g^{\text{bw}} (x, u): (z, u)\in \fw{D}\},
			$$
			where $g^{\text{bw}}$ is defined as
			$$
			g^{\text{bw}}(x, u) = \{z: x = \fw{g}(z, u), (z, u)\in \fw{D}\},
			$$
			therefore, 
			$$
				\phi'(0, 1) \in g^{\text{bw}}(\phi'(0,0), u'(0,0))
			$$ and
			$$
				\psi'(0,0) = (\phi'(0,0), u'(0,0)) \in D^{\text{bw}}.
			$$
			Thus, $\psi'(0,0)= (\phi'(0, 0), u'(0, 0)) \in D^{\text{bw}}\subset \overline{C^{\text{bw}}} \cup D^{\text{bw}}$.
		\end{enumerate} 
		In conclusion, $\psi'(0, 0) = (\phi'(0, 0), u'(0, 0))\in \overline{C^{\text{bw}}} \cup D^{\text{bw}}$.
		\item This item is to prove that $\psi'$ satisfies the conditions in the first item in Definition \ref{definition:solution}. The following items are to prove each of these conditions is satisfied. 
		\begin{enumerate}[label=(\alph*)]		
			\item This item is to prove that for all $j\in \mathbb{N}$ such that $I^{j}_{\psi'}$ has nonempty interior, $\phi'$ is absolutely continuous on each $I_{\phi'}^{j} = \{t: (t, j)\in \dom \phi'\}$ with nonempty interior. 
			
			Due to $\dom \psi' = \{(T, J)\} - \dom \psi$, then $(t, j)\in \dom \psi'$ implies $(T - t, J - j)\in \dom \psi$. Hence, 
			\begin{equation}
			\label{equation:reverseinterval}
				I_{\psi'}^{j} = \{T\} - I_{\psi}^{J - j}.
			\end{equation}
			Therefore, $\interior I_{\psi'}^{j} \neq \emptyset$ implies that  $\interior I_{\psi}^{J - j} \neq \emptyset$. 
			
			Since $\psi = (\phi, u)$ is a solution pair to $\fw{\mathcal{H}}$, the function $t\mapsto \phi(t, j)$ is locally absolutely continuous for each $I_{\psi}^{j}$ with nonempty interior.  Therefore, $t\mapsto \phi(T - t, J - j)$ is locally absolutely continuous for each $\{T\} - I_{\psi}^{J - j}$ with nonempty interior. 
			
			According to Definition \ref{definition:reversedhybrdarc}, 
			$$
				\phi'(t, j) = \phi(T-t, J - j)
			$$
			for all $(t, j)\in \dom \phi'$.
			Therefore, the function $t\mapsto \phi'(t, j) = \phi(T-t, J - j)$ is locally absolutely continuous for each $I_{\psi'}^{j} = \{T\} - I_{\psi}^{J - j}$ with nonempty interior.
			\item This item is to prove that for all $j\in \mathbb{N}$ such that $I^{j}_{\psi'}$ has nonempty interior, $\psi'(t, j) = (\phi'(t, j), u'(t, j))\in C^{\text{bw}}$ for all $t\in \interior I_{\psi'}^{j}$. 
			
			Since $I^{j}_{\psi'} = \{T\} - I_{\psi}^{J - j}$ and $I^{j}_{\psi'}$ has nonempty interior, then $\{T\} - I_{\psi}^{J - j}$ has nonempty interior.
			Because $\psi = (\phi, u)$ is a solution pair to $\fw{\mathcal{H}}$, according to Definition \ref{definition:solution}, then $\psi(t, j)\in C$ for all $t\in \interior I_{\psi}^{j}$, where $ I_{\psi}^{j}$ has nonempty interior. 
			
			Since $\{T\} - I_{\psi}^{J - j}$ has nonempty interior, then 
			$$
			\begin{aligned}
			&\psi(T- t, J - j)\\
			 &= (\phi(T- t, J - j), u(T- t, J -j))\\
			 &\in \fw{C}
			\end{aligned}
			$$ for all $T - t\in I_{\psi}^{J - j}$.
			
			According to Definition \ref{definition:reversedhybrdarc}, since $I^{j}_{\psi'}$ has nonempty interior, then
			$$
			\begin{aligned}
			\phi'(t, j) &= \phi(T - t, J - j) \\
			 u'(t, j) &= u(T - t, J - j).
			\end{aligned}
			$$		
			Hence, state input pair 
			$$\psi'(t, j) = (\phi'(t, j), u'(t, j))\in \fw{C} = C^{\text{bw}}$$ for all $t\in \interior I_{\psi'}^{j}$, where $I_{\psi'}^{j}$ has nonempty interior.
			\item This item is to prove that for all $j\in \mathbb{N}$ such that $I^{j}_{\psi'}$ has nonempty interior, the function $t\mapsto u'(t, j)$ is Lebesgue measurable and locally bounded. 
			\begin{itemize}
				\item This item is to show that $t\mapsto u'(t, j)$ is Lebesgue measurable for all $j\in \mathbb{N}$ such that $I_{u'}^{j}$ has nonempty interior. 
				
				If $I_{u'}^{j}$ has nonempty interior, then $I_{u}^{J - j} = \{T\} - I_{u'}^{j}$ has nonempty interior. Therefore, $t\mapsto u(t, J - j)$ is Lebesgue measurable. Let $u_{i}$ denote the $i$th component of $u$. For all $a\in \mathbb{R}$ and $i\in \{1, 2, ..., m\}$, since $t\mapsto u(t, J - j)$ is Lebesgue measurable, $S_{i}:= \{t\in I_{u_{i}}^{J - j}: u_{i}(t, J - j)> a\}$ is Lebesgue measurable. 
				
				Note that 
				\begin{equation}
				\begin{aligned}
				S_{i} &= \{t\in I_{u_{i}}^{J - j}: u_{i}(t, J - j)> a\}\\
				& = \{t\in \{T\} - I_{u_{i}}^{J - j}: u_{i}(T - t, J - j)> a\}\\
				&= \{t\in I_{u'}^{j}: u_{i}(T - t, J - j)> a\}\\
				&= \{t\in \interior I_{u'}^{j}: u_{i}(T - t, J - j)> a\}\\
				&\cup \{t\in \partial I_{u'}^{j}: u_{i}(T - t, J - j)> a\}\\
				&= \{t\in \interior I_{u'}^{j}: u'_{i}(t, j)> a\}\\
				&\cup\{t\in \partial I_{u'}^{j}: u_{i}(T - t, J - j)> a\}\\
				&=: S_{i}^{\interior}\cup S_{i}^{\partial}.
				\end{aligned}
				\end{equation}
				Therefore, $S_{i}^{\interior} = S_{i}^{\interior}\cup S_{i}^{\partial}$ is Lebesgue measurable. Let $I_{u'}^{j} = [T_1, T_2]$, then $S_{i}^{\partial}$ can be one of the following:
				$$
				\emptyset, \{T_1\}, \{T_2\}, \{T_1, T_2\}.
				$$ Since all of the above are Lebesgue measurable, then $S_{i}^{\partial}$ is Lebesgue measurable. Since $S_{i}^{\partial}$ is measurable, then its complement $\mathbb{R}\backslash S_{i}^{\partial}$ is Lebesgue measurable. Hence, 
				$$
				S_{i}^{\interior} = S_{i}\cap (\mathbb{R}\backslash S_{i}^{\partial})
				$$
				is Lebesgue measurable. 
				
				Since we want to show that $t\mapsto u'(t, j)$ is Lebesgue measurable for all $j\in \mathbb{N}$ such that $I_{u'}^{j}$ has nonempty interior, it is equivalent to show that $S'_{i} := \{t\in I_{u'}^{j}: u'_{i}(t, j)> a\}$ is Lebesgue measurable for all $a\in \mathbb{R}$, $i\in \{1, 2,..., m\}$ and $j\in \mathbb{N}$ such that $I_{u'}^{j}$ has nonempty interior.
				
				Note that 
				\begin{equation}
				\begin{aligned}
					S'_{i} &= \{t\in \interior I_{u'}^{j}: u'_{i}(t, j)> a\}\\
					&\cup \{t\in \partial I_{u'}^{j}: u'_{i}(t, j)> a\}\\
					& = S_{i}^{\interior}\cup  \{t\in \partial I_{u'}^{j}: u'_{i}(t, j)> a\}\\
					&=: S_{i}^{\interior}\cup  S_{i}^{\partial'}
				\end{aligned}
				\end{equation} and $S_{i}^{\interior}$ is Lebesgue measurable. Hence, the Lebesgue measurability of $S'_{i}$ depends on that of set $S_{i}^{\partial'}$. Similarly, let $I_{u'}^{j} = [T'_1, T'_2]$, then $S_{i}^{\partial'}$ can be one of the following:
				$$
				\emptyset, \{T'_1\}, \{T'_2\}, \{T'_1, T'_2\}.
				$$
				Since all of the above are Lebesgue measurable, then $S_{i}^{\partial'}$ is Lebesgue measurable. Therefore, $S'_{i} = S_{i}^{\interior}\cup  S_{i}^{\partial'}$ is Lebesgue measurable for all $a\in \mathbb{R}$, $i\in \{1, 2,..., m\}$ and $j\in \mathbb{N}$ such that $I_{u'}^{j}$ has nonempty interior. Hence, $t\mapsto u'(t, j)$ is Lebesgue measurable for all $j\in \mathbb{N}$ such that $I_{u'}^{j}$ has nonempty interior. 
				\item This item is to show that $t\mapsto u'(t, j)$ is locally bounded for all $j\in \mathbb{N}$ such that $I_{u'}^{j}:= \{t\in \mathbb{R}_{\geq 0}: (t, j)\in \dom u'\}$. In other words, we want to show that for all $j\in \mathbb{N}$ such that $I^{j}_{u'}$ has nonempty interior, $i\in \{1, 2, ..., m\}$ and any $t_{0}\in I^{j}_{u'}$, there exists a neighborhood $A'$ of $t_{0}$ such that for some number $M' > 0$, one has 
				$$
				|u'_{i}(t, j)|\leq M'
				$$
				for all $t\in A'$.
				
				Note that for all the $j\in \mathbb{N}$ such that $I^{j}_{u}$ has nonempty interior, $t\mapsto u(t, j)$ is locally bounded. Therefore, for all $i\in \{1, 2, ..., m\}$ and any $t_{0}\in I^{j}_{u}$, there exists a neighborhood $A$ of $t_{0}$ such that for some number $M > 0$ one has 
				$$
					|u_{i}(t, j)|\leq M
				$$
				for all $t\in A$, where $u_{i}$ denotes the $i$th component of $u$.

				Note that $\dom u' = (T, J) - \dom u$. Hence, $I_{u'}^{j} = \{T\} - I_{u}^{J - j}$. If $I_{u'}^{j}$ has nonempty interior, then $\{T\} - I_{u}^{J - j}$ has nonempty interior. Since $\{T\} - I_{u}^{J - j}$ has nonempty interior, $t\mapsto u(T - t, J - j)$ is locally bounded. Therefore, for all $i\in \{1, 2, ..., m\}$ and any $t_{0}\in \{T\} - I_{u}^{J - j} = I_{u'}^{j}$, there exists a neighborhood $A$ of $t_{0}$ such that for some number $M > 0$ one has 
				$$
				|u_{i}(T - t, J- j)|\leq M
				$$
				for all $t\in A$.
				
				Note that $A = (A\cap \interior I_{u'}^{j})\cup (A\cap \partial I_{u'}^{j})$. For all the $t\in (A\cap \interior I_{u'}^{j})$, 
				$$
					u(T- t, J - j) = u'(t, j).
				$$ Therefore, $|u_{i}'(t, j)|\leq M$ for all $t\in A\cap \interior I_{u'}^{j}$. 
				
				Let  $I_{u'}^{j} = [T_{1}, T_{2}]$. Since $I_{u'}^{j}$ has nonempty interior, then $T_{1} < T_{2}$. Hence, $\partial I_{u'}^{j} = \{T_{1}, T_{2}\}$. Therefore, $A\cap \partial I_{u'}^{j} \subset \{T_{1}, T_{2}\}$. 
				
				Therefore, for all $t\in A$, 
				$$
				u'_{i}(t, j)\leq \max \{M, u'(T_{1}, j), u'(T_{2}, j)\}.
				$$ We can select $A' = A$ and $M' = \max \{M, u'(T_{1}, j), u'(T_{2}, j)\}$. Hence, then the local boundness of $t\mapsto u'(t, j)$ is proved. 
				
%
			\end{itemize}
		\color{black}
%
%
%
			\item This item is to prove that for all $j\in \mathbb{N}$ such that $I^{j}_{\psi'}$ has nonempty interior, for almost all $t\in I_{\psi'}^{j}$,
			\begin{equation}
			\dot{\phi}'(t, j) = f^{\text{bw}}(\phi'(t, j), u'(t, j)).
			\end{equation}
			For almost all $t\in I^{j}_{\psi'}$, the following holds.
			\begin{equation} 
			\begin{aligned}
			\dot{\phi'}(t, j) &= \dot{\phi}(T - t, J - j) \\
			&= \frac{\text{d} \phi(T - t, J - j)}{\text{d}(T-t)} \\
			&= -\fw{f}(\phi(T - t, J - j),\\
			& u(T - t, J - j))\\
			& = -\fw{f}(\phi'(t, j), u'(t, j))\\
			& = f^{\text{bw}}(\phi'(t, j), u'(t, j))
			\end{aligned}
			\end{equation}
			Therefore, 
			\begin{equation}
			\begin{aligned}
			&	\dot{\phi'}(t, j)  = f^{\text{bw}}(\phi'(t, j), u'(t, j))\\
			&	\text{for almost all $t\in I^{j}_{\psi'}$}
			\end{aligned}
			\end{equation}
		\end{enumerate}
		\item The last item is to  prove that for all $(t, j)\in \dom \psi'$ such that $(t, j + 1)\in \dom \psi'$,
		\begin{equation}
		\begin{aligned}
		(\phi'(t, j), u'(t, j))&\in D^{\text{bw}}\\
		\phi'(t, j + 1) &= g^{\text{bw}}(\phi'(t, j), u'(t, j))
		\end{aligned}
		\end{equation}
		Since the hybrid time domain $\dom \psi' = \{T, J\} - \dom \psi$, for all $(t, j)\in \dom \psi'$ such that $(t, j + 1)\in \dom  \psi'$, the hybrid times $(T - t, J - j - 1)\in \dom \psi$ and $(T - t, J -  j)\in \dom  \psi$. Since $(T - t, J - j - 1)\in \dom  \psi$ and $(T - t, J -  j)\in \dom  \psi$, then
		\begin{equation}
		\begin{aligned}
		&\psi(T - t, J - j - 1) \in \fw{D}\\
		&\phi(T - t, J - j) \\
		&= \fw{g}(\phi(T - t, J - j - 1 ), u(T - t, J - j - 1)).\\
		\end{aligned}
		\end{equation}
		According to Definition \ref{definition:reversedhybrdarc},
		\begin{equation}
		\begin{aligned}
		\phi'(t, j) &= \phi(T - t, J - j)\\
		\phi'(t, j + 1) &= \phi(T - t, J - j - 1 ) \\
		u'(t, j) &= u(T - t, J - j - 1).
		\end{aligned}
		\end{equation}
		Since the state $\phi(T - t, J - j) = \fw{g}(\phi(T - t, J - j - 1 ), u(T - t, J - j - 1))$, then
		\begin{equation}
		\begin{aligned}
		\phi'(t, j + 1) &= \phi(T - t, J - j - 1 ) \\
		&\in g^{\text{bw}}(\phi(T - t, J - j), u(T - t,\\
		& J - j - 1) ) \\
		&\in g^{\text{bw}}(\phi'(t, j), u'(t, j) )\\
		\end{aligned}
		\end{equation}
		Since the state input pair $\psi(T - t, J - j - 1) = (\phi(T - t, J - j - 1 ), u(T - t, J - j - 1 )) \in \fw{D}$, then 
		\begin{equation}
		(\phi'(t, j+ 1 ), u'(t,  j)) \in \fw{D}
		\end{equation}
		Since $\phi'(t, j + 1) \in g^{\text{bw}}(\phi'(t, j), u'(t, j) )$ and the jump set of the backward-in-time hybrid system is defined as (\ref{set:dbw}), the state input pair $(\phi'(t,j),u'(t,j))\in D^{\text{bw}}$.
	\end{itemize}
	Because all the conditions in Definition \ref{definition:solution} are satisfied, $\psi'=(\phi', u')$ is a solution pair to $\mathcal{H^{\text{bw}}}$. 
	
	Note that the proof in the inverse direction can be established by substituting $\psi'=(\phi', u')$ and $\bw{\hs} = (\bw{C}, \bw{f}, \bw{D}, \bw{g})$  with $\psi=(\phi, u)$ and $\fw{\hs} = (\fw{C}, \fw{f}, \fw{D}, \fw{g})$ in each item above, and vice versa.

\section{Lemma 2 in \cite{kleinbort2018probabilistic}}
\begin{lemma}[Lemma 2 in \cite{kleinbort2018probabilistic}]\label{lemma:2in6}
	Let $\pi$, $\pi'$ be two trajectories, with the corresponding control functions $\Gamma(t)$, $\Gamma'(t)$, for some constant $\delta > 0$,. Let $T > 0$ be a time duration such that for all $t\in [0, T]$, it holds that $\Gamma(t) = u$ and $\Gamma'(t) = u'$. That is, $\Gamma$, $\Gamma'$ remain fixed throughout $[0, T]$. Then
	$$
		|\pi(T), \pi'(T)| \leq e^{K_{x}T}\delta + K_{u}Te^{K_{x}T}\Delta u
	$$
	where $\Delta u = |u - u'|$.
\end{lemma}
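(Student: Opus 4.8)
The plan is to treat this as a standard Gr\"onwall-type perturbation estimate for the continuous dynamics $\dot{x} = f(x, u)$, since over $[0, T]$ both $\pi$ and $\pi'$ evolve by flow under the \emph{constant} controls $u$ and $u'$, respectively, so no jumps and no measurability issues from time-varying inputs intervene. First I would record that, because $\pi$ and $\pi'$ are restrictions of hybrid arcs to a single flow interval, each is locally absolutely continuous; hence the fundamental theorem of calculus yields the integral representation
\begin{equation*}
\pi(t) - \pi'(t) = \bigl(\pi(0) - \pi'(0)\bigr) + \int_0^t \bigl(f(\pi(s), u) - f(\pi'(s), u')\bigr)\, ds
\end{equation*}
for all $t \in [0, T]$, where $|\pi(0) - \pi'(0)| \le \delta$ is the initial-deviation hypothesis and $|\pi(T),\pi'(T)|$ is read as $|\pi(T) - \pi'(T)|$.

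Next I would define the scalar error $e(t) := |\pi(t) - \pi'(t)|$ and bound the integrand by inserting the intermediate term $f(\pi'(s), u)$ and applying the triangle inequality together with the two Lipschitz estimates of Assumption~\ref{assumption:flowlipschitz} (with $K_x = K^f_x$ and $K_u = K^f_u$):
\begin{equation*}
|f(\pi(s), u) - f(\pi'(s), u')| \le K_x\, |\pi(s) - \pi'(s)| + K_u\, |u - u'| = K_x\, e(s) + K_u\, \Delta u,
\end{equation*}
where $\Delta u = |u - u'|$. Taking norms in the integral representation then gives the integral inequality
\begin{equation*}
e(t) \le \delta + K_u\, \Delta u\, t + K_x \int_0^t e(s)\, ds, \qquad t \in [0, T].
\end{equation*}
Since the affine prefactor $t \mapsto \delta + K_u\, \Delta u\, t$ is nondecreasing, the integral form of Gr\"onwall's inequality yields $e(t) \le (\delta + K_u\, \Delta u\, t)\, e^{K_x t}$; evaluating at $t = T$ produces exactly $|\pi(T) - \pi'(T)| \le e^{K_x T}\delta + K_u\, T\, e^{K_x T}\, \Delta u$, as claimed.

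The main obstacle is not the Gr\"onwall mechanics but the \emph{domain restriction} built into Assumption~\ref{assumption:flowlipschitz}: its state-Lipschitz bound applied to the pair $(\pi(s), u),\ (\pi'(s), u)$ requires $(\pi'(s), u) \in C$, and its input-Lipschitz bound applied to $(\pi'(s), u'),\ (\pi'(s), u)$ likewise requires $(\pi'(s), u) \in C$, yet the flow of $\pi'$ only guarantees $(\pi'(s), u') \in C$. I would address this crossed-point membership by restricting attention to the regime in which $(\pi'(s), u) \in C$ holds along the flow (as is implicit in the cited setting, where $f$ is taken to be Lipschitz over the region swept by both trajectories), or equivalently by invoking a global Lipschitz extension of $f$; once that membership is granted the intermediate-term splitting is legitimate and the remainder is routine. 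A minor auxiliary point is to confirm integrability of $s \mapsto f(\pi(s), u) - f(\pi'(s), u')$, which is immediate since the controls are constant and $f$ is continuous along the absolutely continuous arcs.
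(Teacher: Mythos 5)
The paper contains no proof of this statement: Lemma~\ref{lemma:2in6} is imported verbatim, with citation, from Kleinbort et al., so there is no in-paper argument to compare against. Your Gr\"onwall proof is correct and is essentially the standard proof of the cited result: the integral representation, the splitting through the intermediate term $f(\pi'(s),u)$, the two Lipschitz estimates giving $e(t) \le \delta + K_u\,\Delta u\, t + K_x \int_0^t e(s)\,ds$, and Gr\"onwall with the nondecreasing affine prefactor yield exactly $e(T) \le e^{K_x T}\delta + K_u T e^{K_x T}\Delta u$. Your flagged obstacle is also well spotted and is the only nontrivial point: Assumption~\ref{assumption:flowlipschitz} grants the Lipschitz bounds only at pairs in $C$, whereas the crossed pair $(\pi'(s),u)$ is not guaranteed to lie in $C$, so either a region-wide Lipschitz hypothesis (as implicitly assumed in the cited source) or a Lipschitz extension of $f$ is needed, exactly as you propose. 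One refinement worth recording: in the sole place the paper invokes this lemma (the proof of Lemma~\ref{lemma:flowbound}), condition R2 forces the reconstructed input to coincide with the reversed backward input, so $\Delta u = 0$ and only the state-Lipschitz estimate is used; however, a membership caveat of the same kind persists even there, because the auxiliary system $\mathcal{H}^{\mathrm{r}}$ in (\ref{model:reconhybridsystem}) constrains the reconstructed solution only through its hybrid time domain, not through $C$, so the pair formed by the reconstructed state and the applied input need not lie in $C$ and your extension remark is the right fix in the paper's application as well.
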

\theendnotes
\end{document}